%% file: main.tex
\documentclass{article}

\usepackage[preprint]{neurips_2026}
\setcitestyle{numbers,square,comma,sort&compress}

\usepackage[utf8]{inputenc}
\usepackage[T1]{fontenc}
\usepackage{amsmath,amssymb,amsfonts,amsthm}
\usepackage{booktabs}
\usepackage{graphicx}
\usepackage{xcolor}
\usepackage{hyperref}
\usepackage{cleveref}
\usepackage{multirow}
\usepackage{algorithm}
\usepackage{algorithmic}
\usepackage{subcaption}
\usepackage{float}

\newtheorem{theorem}{Theorem}[section]
\newtheorem{proposition}{Proposition}[section]
\newtheorem{corollary}{Corollary}[section]
\newtheorem{lemma}{Lemma}[section]
\newtheorem{assumption}{Assumption}[section]

\title{Rethinking Incompleteness: Formalizing Protocol
  Divergence and Train-Once Learning for Robust IMVC}

\author{%
  Haolu Liu \quad Xiyue Wang \quad Xuanting Xie \quad Liangjian Wen \quad Zhao Kang
}

\begin{document}
\maketitle


\input{section_abstract}

\input{section_1_intro}
\input{section_2_background}

\input{section_3_problem}

\input{section_4_theory}

\input{section_5_method}

\input{section_6_1_setup}

\input{section_6_2_main}
\input{section_6_3_ablation}

\input{section_7_discussion}


\bibliographystyle{plainnat}
\bibliography{refs}




\newpage
\appendix


\section{Experimental Details}
\label{app:details}

\subsection{Datasets and Hardware}
\label{app:details:datasets}

Table~\ref{tab:datasets} summarizes the seven benchmarks used in
this work.

\begin{table}[h]
\centering
\caption{Dataset overview.}
\label{tab:datasets}
\small
\begin{tabular}{@{}l rrrr@{}}
\toprule
Dataset & $N$ & $V$ & $K$ & $d_{\text{total}}$ \\
\midrule
CUB           & 600     & 2 & 10 & 1{,}324 \\
MultiFashion  & 10{,}000 & 3 & 10 & 1{,}176 \\
UCI-digit     & 2{,}000 & 3 & 10 & 356 \\
Out-Scene     & 2{,}688 & 4 & 8  & 1{,}248 \\
HandWritten   & 2{,}000 & 6 & 10 & 649 \\
Caltech       & 2{,}386 & 6 & 20 & 3{,}766 \\
YTF-31        & 101{,}499 & 5 & 31 & 2{,}125 \\
\bottomrule
\end{tabular}
\end{table}

All experiments are conducted on a single NVIDIA RTX~3090 GPU
(24\,GB VRAM) with PyTorch~2.1 and CUDA~12.1. All timing measurements
are wall-clock times on this hardware. Reported numbers are
mean~$\pm$~std across 5 random seeds (seeds 42--46), except YTF-31
where dataset scale limits us to a single run (seed 42).

\subsection{CRAFT Hyperparameters}
\label{app:details:hyperparam}

Table~\ref{tab:hyperparams} lists the full per-dataset configuration.
All datasets share the same architecture (single-layer Transformer
with 4~heads, 2-layer MLP encoder and symmetric decoder) and
optimizer (AdamW, weight decay $10^{-4}$, batch size 256). Each
configuration is tuned once on complete data; the resulting
checkpoint handles all missing patterns and protocols without
further adjustment.

\begin{table}[h]
\centering
\caption{Per-dataset hyperparameter configuration.
``Default'' denotes the shared configuration used by HW and UCI
without modification.}
\label{tab:hyperparams}
\small
\begin{tabular}{@{}l cccccc@{}}
\toprule
Parameter & Default & CUB & MF & OS & Cal & YTF \\
\midrule
encoder\_type & deep & deep & shallow & deep & shallow & deep \\
encoder\_hidden & 512 & 512 & 1024 & 512 & 512 & 1024 \\
embed\_dim $d$ & 256 & 128 & 256 & 256 & 256 & 512 \\
$\tau$ & 0.5 & 0.5 & 0.03 & 0.1 & 0.5 & 0.5 \\
$\lambda_{\mathrm{ent}}$ & 5.0 & 5.0 & 0.5 & 5.0 & 0.1 & 0.1 \\
dropout & 0.1 & 0.1 & 0.03 & 0.1 & 0.1 & 0.1 \\
noise\_std & 0.1 & 0.1 & 0.03 & 0.1 & 0.1 & 0.1 \\
Stage~1 epochs & 100 & 100 & 200 & 100 & 100 & 100 \\
Stage~2 epochs & 100 & 100 & 200 & 100 & 100 & 80 \\
Stage~1 lr & 5e-4 & 5e-4 & 5e-4 & 5e-4 & 5e-4 & 5e-4 \\
Stage~2 lr & 1e-5 & 1e-5 & 5e-5 & 1e-4 & 2e-4 & 1e-6 \\
\bottomrule
\end{tabular}
\end{table}

CUB uses $d{=}128$ because the small sample size ($n{=}600$) leads
to overfitting at $d{=}256$. Caltech and MultiFashion use shallow
encoders ($d_v {\to} d$) because their high-dimensional views cause
training instability with deeper encoders. YTF-31 uses a wider
architecture ($d{=}512$, hidden${=}1024$) to accommodate its larger
scale and higher class count ($K{=}31$). MFT uses 100 epochs at
$10^{-5}$ for HW and Out-Scene; MultiFashion MFT uses 300 epochs at
$10^{-5}$, resuming from the Stage~2 checkpoint.

\subsection{Baseline Reproduction}
\label{app:details:baselines}

This section consolidates reproduction details, protocol
inferences, and failure analyses for all baseline methods.

\subsubsection{DCG Configuration}
\label{app:details:dcg}

The official DCG repository contains no multi-view configuration
beyond two views. We extend the released two-view script to accept
all views, retaining all other hyperparameters at their official
values.

\subsubsection{View Selection Audit}
\label{app:details:views}

On HandWritten ($V{=}6$, $n{=}2{,}000$), the official DCG and
ProImp codebases load only $2$ of the $6$ views: DCG selects
views 2 and 3 (76d and 64d), discarding $78\%$ of the total
feature dimensions---a choice not documented in the corresponding
paper. ProImp's encoder pair is hardcoded to $2$ views by
construction. The DCG numbers reported in this paper instead use
our $6$-view extension (Appendix~\ref{app:details:dcg}), which
accesses all six views while retaining DCG's other hyperparameters
at their official values. ProImp and APADC are architecturally
limited to two views and are not extended in this work.

\subsubsection{DCP and DSIMVC Implementation}
\label{app:details:dcp}

\textbf{DCP.}
DCP shares its masking utility with COMPLETER (same research group
and codebase); its behavior matches Protocol~2
(\S\ref{sec:protocol:four}).

\textbf{DSIMVC.}
Generates masks by drawing a per-sample Bernoulli trial at rate $r$,
then deleting one randomly chosen view for each selected sample. A
post-processing step ensures at least one view per sample. This is
equivalent to Protocol~1 with Bernoulli rather than fixed-count
sampling; the difference is negligible for large $n$.

\textbf{Evaluation randomness.}
In DCG, the outer-loop variable intended to vary across repeated
runs is not connected to the random seed; a fixed seed produces
the same mask and training trajectory in every iteration, reducing
the effective number of independent trials to one. In ProImp, the
mask seed varies but the training seed is fixed. In both cases the
reported variance understates the true run-to-run variability.

\subsubsection{BURG Protocol Inference}
\label{app:details:burg}

BURG~\citep{burg2025} does not specify its masking procedure.
We classify it as Protocol~2 by convention: BURG benchmarks
against COMPLETER, DCP, and APADC at identical nominal rates
($\text{MR} \in \{0.1, 0.3, 0.5, 0.7\}$) without discussing
protocol differences, and our audit
(\S\ref{app:protocol:audit}) verified by code inspection that
these three baselines realize Protocol~2 via a shared
entry-wise Bernoulli with protected-view masking codebase.
This suggests BURG inherits the same masking convention, though
the classification is not directly verifiable from the paper
alone.

\subsubsection{DVIMC Failure on Multi-View Datasets}
\label{app:details:dvimc}

DVIMC's K-Means initialization step assumes a fixed input
dimension across all samples. When the protocol produces samples
with $|O_i|=0$, this assumption is violated and initialization
fails outright---a structural C2 violation. We attribute DVIMC's
\texttt{ValueError} crashes on MultiFashion at $r{=}0.7$
(Protocol~4), CUB at $r{\geq}0.5$ (Protocol~4), and HandWritten
under Protocols~3--4 at $r{=}0.7$ to this mechanism. The K-Means
crash is the only DVIMC failure mode we attribute to C2 violation.

\textbf{Beyond the C2 violation: heterogeneous multi-view
weakness.}
Single-seed evaluation of DVIMC across all $16$ (protocol, rate)
configurations (seed 42, official implementation~\citep{chen2025dvimc}) on
CUB ($V{=}2$) and HandWritten ($V{=}6$) reveals failure modes
beyond the K-Means crash. On CUB, accuracy stays in the
$20$--$28\%$ range across all $16$ configurations (random baseline $10\%$
at $K{=}10$); on HandWritten, accuracy swings non-monotonically
across configurations from $17.55\%$ (Protocol~3 at $r{=}0.3$) to $91.65\%$
(Protocol~2 at $r{=}0.1$). These patterns are not predicted by
C2 violation alone. They reflect implementation-level factors of
DVIMC's official codebase: hyperparameters tuned for the four
datasets in the original paper rather than these benchmarks; a
coherence loss enforcing posterior agreement across views, which
destabilizes when views are heterogeneous; product-of-experts
aggregation that is sensitive to view-quality imbalance; and
naive view-latent averaging during K-Means initialization that
dilutes informative views with weaker ones. Under HandWritten's
six-view feature scales (dimensions $\{240, 216, 76, 64, 47, 6\}$),
these factors compound. The instability is therefore better
characterized as ``DVIMC's reference implementation does not
generalize to heterogeneous multi-view data with high view
counts'' rather than as an architectural-class statement; the
C2-violation failure mode (K-Means crash at $|O_i|=0$) is the
architectural finding on which this paper relies, and it remains
the only DVIMC observation we treat as evidence for C2 necessity.

Under the identical $16$ configurations, the canonical CRAFT checkpoint
produces non-trivial accuracy on every condition with no crashes
(Table~\ref{tab:hw6v_full}).

\subsection{Tuning of Collapsed Baselines}
\label{app:details:tuning}

To confirm that the trainability collapse documented in
\S\ref{sec:exp:main} is structural rather than a tuning artifact,
we tuned DCG (lr$\in\{$1e-3, 5e-4, 1e-4$\}$, $\tau\in\{$0.5, 0.3,
0.1$\}$) and DCP (lr$\in\{$1e-3, 5e-4, 1e-4$\}$) under Protocol~4
on CUB ($V{=}2$, $r{=}0.5$, $p_c{=}0$). No DCG configuration
exceeded $20.17\%$ ACC and no DCP configuration exceeded
$36.83\%$ (random baseline $10\%$). The collapse is therefore
driven by the structural absence of complete samples ($p_c{=}0$),
not by suboptimal hyperparameters.

\subsection{Training Algorithm}
\label{app:details:algorithm}

\begin{algorithm}[h]
\caption{CRAFT training (complete-data, two-stage)}
\label{alg:craft}
\begin{algorithmic}[1]
\REQUIRE complete samples $\{x^{(i)}\}_{i=1}^n$;
         encoder $\theta$, head $\phi$;
         epochs $E_1, E_2$;
         coefficients $\lambda_{\mathrm{repr}}, \beta, \gamma$
\FOR{$e = 1$ \TO $E_1$ \COMMENT{Stage 1: representation learning}}
  \STATE Sample batch; compute $z_v = E_v(x_v)$ for $v = 1{:}V$
  \STATE $h \leftarrow \mathrm{Transformer}([\mathrm{[CLS]}, z_1, \ldots, z_V])$ (no mask)
  \STATE Update $\theta$ on $\mathcal{L}_{\mathrm{stage1}}$ (Eq.~\ref{eq:stage1})
\ENDFOR
\FOR{$e = 1$ \TO $E_2$ \COMMENT{Stage 2: cluster fine-tuning}}
  \STATE Sample batch; optionally apply MFT random view mask
  \STATE $h \leftarrow \mathrm{Transformer}([\mathrm{[CLS]}, z_1, \ldots, z_V]; \mathrm{mask})$
  \STATE Update $\theta, \phi$ on $\mathcal{L}_{\mathrm{stage2}}$ (Eq.~\ref{eq:stage2})
\ENDFOR
\STATE \textbf{Inference (any protocol):} forward pass with key-padding mask set from $O_i$.
\end{algorithmic}
\end{algorithm}

\subsection{Loss Component Definitions}
\label{app:details:losses}

The Stage~1 and Stage~2 losses
(Eqs.~\ref{eq:stage1}--\ref{eq:stage2})
decompose into the following per-sample terms.

\paragraph{$\mathcal{L}_{\mathrm{recon}}$ (per-view reconstruction).}
$\displaystyle
\mathcal{L}_{\mathrm{recon}} \;=\; \tfrac{1}{n}\sum_{i=1}^{n}\sum_{v=1}^{V}
\bigl\|D_v(h^{(i)}) - x_v^{(i)}\bigr\|^2,
$
where $D_v$ is a view-specific decoder MLP.

\paragraph{$\mathcal{L}_{\mathrm{repr}}$ (cross-view consistency).}
$\displaystyle
\mathcal{L}_{\mathrm{repr}} \;=\; -\tfrac{1}{n V(V{-}1)}
\sum_{i}\sum_{v \neq v'}
\cos\!\bigl(P_\phi(z_v^{(i)}),\, \mathrm{sg}(z_{v'}^{(i)})\bigr),
$
with predictor MLP $P_\phi$ and stop-gradient
$\mathrm{sg}(\cdot)$~\citep{chen2021simsiam}.

\paragraph{$\mathcal{L}_{\mathrm{cluster}}$ (self-labeled cross-entropy).}
$\displaystyle
\mathcal{L}_{\mathrm{cluster}} \;=\; -\tfrac{1}{n}\sum_{i=1}^{n}
\log p_{\hat{y}^{(i)}}\!\bigl(h^{(i)}\bigr),
\quad
p_k(h) \;=\; \mathrm{softmax}_k\!\bigl(\tau\cos(h, c_k)\bigr),
$
where $\hat{y}^{(i)} = \arg\max_k p_k(h^{(i)})$ are pseudo-labels
refreshed each epoch and $\{c_k\}_{k=1}^K$ are unit-norm prototypes.

\paragraph{$\mathcal{L}_{\mathrm{ent}}$ (anti-collapse entropy).}
$\displaystyle
\mathcal{L}_{\mathrm{ent}} \;=\; \sum_{k=1}^{K} \bar{p}_k \log \bar{p}_k,
\qquad
\bar{p}_k \;=\; \tfrac{1}{n}\sum_{i=1}^{n} p_k\!\bigl(h^{(i)}\bigr).
$
Minimizing this term pushes the batch-averaged prediction toward
uniform over $K$ clusters, preventing degenerate single-cluster
solutions.

\paragraph{$\mathcal{L}_{\mathrm{KL}}$ (MFT consistency).}
For two MFT-sampled view-subsets $S_1, S_2 \subseteq O_i$ of sample
$i$ (\S\ref{sec:method:mft}),
$\displaystyle
\mathcal{L}_{\mathrm{KL}} \;=\; \tfrac{1}{n}\sum_{i=1}^{n}
\mathrm{KL}\!\bigl(p(\cdot\mid h^{(i)}_{S_1}) \,\big\|\, p(\cdot\mid h^{(i)}_{S_2})\bigr),
$
with $\gamma = 0$ when MFT is disabled.

\subsection{PyTorch Inference Snippet}
\label{app:details:code}

The following snippet illustrates CRAFT's inference-time handling
of missing views. No special module is required; the standard
PyTorch \texttt{MultiheadAttention} accepts a key-padding mask
directly.

\begin{verbatim}
# tokens: [batch, 1+V, d]  (CLS + V view embeddings)
# missing: [batch, V] bool  (True = view absent)
pad = torch.zeros(B, 1+V, dtype=torch.bool)  # CLS never masked
pad[:, 1:] = missing
h = transformer_encoder(tokens, src_key_padding_mask=pad)
h_cls = h[:, 0]  # fused representation
pred = cosine_softmax(h_cls, centroids, tau)
\end{verbatim}


\section{Proofs}
\label{app:proofs}

\subsection{Proof of Proposition~\ref{prop:trainability}
            (Trainability Bound for $\mathcal{F}_{rec}$)}
\label{app:proofs:thm1a}

\begin{proof}[Proof of Proposition~\ref{prop:trainability}]

\textbf{Setup.}
Consider the recovery loss $\mathcal{L}_{\mathrm{rec}}(\theta, \phi)$,
which is computed exclusively over cross-view pairs from complete
samples. In a mini-batch of size $B$ drawn uniformly at random,
$B_c \sim \mathrm{Binomial}(B, p_c)$ complete samples are present.
The stochastic gradient estimator for the recovery branch is
\[
  g_t =
  \begin{cases}
    \frac{1}{B_c}\sum_{i \in C_{\mathrm{batch}}}
      \nabla \mathcal{L}_{\mathrm{rec}}^{(i)}(\theta_t),
      & B_c \geq 1, \\[4pt]
    \mathbf{0}, & B_c = 0.
  \end{cases}
\]
We assume $\mathcal{L}_{\mathrm{rec}}$ is $L$-smooth, each
per-sample gradient has bounded variance $\sigma^2$, and
mini-batches are sampled i.i.d.\ uniformly.

\textbf{Step 1: Decomposition into active and inactive steps.}
Define $\beta_t = \mathbb{1}[B_c^{(t)} \geq 1]$, an i.i.d.\
Bernoulli variable with success probability
$\beta \equiv 1 - (1-p_c)^B$.\footnote{$\beta$ in this proof
denotes the active-batch fraction; it is local to
Appendix~\ref{app:proofs:thm1a} and distinct from the entropy
regularizer coefficient $\beta$ of \S\ref{sec:method:training}.}
When $\beta_t = 0$, the gradient is
zero and $\theta_{t+1} = \theta_t$; the parameter is unchanged.
The sequence of parameter values at \emph{active} steps follows
exactly the trajectory of standard SGD applied to
$\mathcal{L}_{\mathrm{rec}}$ with effective batch size $B_c$ drawn
from $(B_c \mid B_c \geq 1)$.

\textbf{Step 2: Effective batch size at active steps.}
The conditional expectation of the batch size at active steps is
\[
  b_{\mathrm{eff}} = \mathbb{E}[B_c \mid B_c \geq 1]
  = \frac{B p_c}{1 - (1-p_c)^B} = \frac{B p_c}{\beta}.
\]
When $B p_c \gg 1$: $\beta \approx 1$ and
$b_{\mathrm{eff}} \approx B p_c$. When $B p_c \ll 1$:
$\beta \approx B p_c$ and $b_{\mathrm{eff}} \approx 1$---the rare
active batches contain almost surely a single complete sample.

By Jensen's inequality, the per-step gradient variance satisfies
$\mathbb{E}[\sigma^2/B_c \mid B_c \geq 1] \geq
\sigma^2 / b_{\mathrm{eff}}$.

\textbf{Step 3: Convergence rate.}
Applying the standard non-convex SGD convergence result to the
$T_{\mathrm{active}} \approx \beta T$ active steps with per-step
variance at least $\sigma^2 / b_{\mathrm{eff}}$, the
$\varepsilon$-stationarity requirement
$\min_t \mathbb{E}[\|\nabla \mathcal{L}_{\mathrm{rec}}\|^2]
\leq \varepsilon$ demands:
\[
  T_{\mathrm{active}}
  \;\geq\;
  \Omega\!\left(\frac{L \Delta \sigma^2}{b_{\mathrm{eff}} \cdot
  \varepsilon^2}\right)
  \;=\;
  \Omega\!\left(\frac{L \Delta \sigma^2 \beta}
  {B p_c \cdot \varepsilon^2}\right).
\]
Since $T = T_{\mathrm{active}} / \beta$, the total iteration count
is
\[
  T
  \;\geq\;
  \Omega\!\left(\frac{L \Delta \sigma^2}
  {B p_c \cdot \varepsilon^2}\right)
  \;=\;
  \frac{1}{p_c}\;
  \Omega\!\left(\frac{L \Delta \sigma^2}
  {B \cdot \varepsilon^2}\right).
\]
The $\beta$ factors cancel: the active-step probability reduces
the number of useful steps, but the reduced batch size at active
steps increases per-step variance by the same factor. The net
effect is a clean $1/p_c$ slowdown relative to the standard rate
$T_0$.

\textbf{Step 4: Characteristic scale and transition sharpness.}
The fraction of batches receiving any recovery gradient is
$\beta = 1 - (1-p_c)^B$. This function transitions from
$\beta \approx 0$ to $\beta \approx 1$ over the interval
$p_c \in [p_{\mathrm{lo}}, p_{\mathrm{hi}}]$ where
$p_{\mathrm{lo}} = 1 - 0.9^{1/B}$ (10\% active batches) and
$p_{\mathrm{hi}} = 1 - 0.1^{1/B}$ (90\% active batches). For
$B = 256$: $p_{\mathrm{lo}} \approx 0.04\%$,
$p_{\mathrm{hi}} \approx 0.9\%$. The transition spans less than
$1\%$ in $p_c$, explaining the empirically observed
cliff: methods that are stable at $p_c = 25\%$ (Protocol~1) can
collapse catastrophically at $p_c = 0.07\%$ (Protocol~3), despite
a seemingly modest change in the nominal missing rate.

\textbf{Remark (scope and limitations).}
(i)~The $1/p_c$ rate applies to the $\mathcal{F}_{rec}$ recovery
branch. Methods with a recovery-independent contrastive loss
retain gradient signal from that branch, but the degenerated
recovery module can actively contaminate the contrastive branch
(Appendix~\ref{app:proofs:exclusion}).
(ii)~Adaptive optimizers (Adam) maintain running gradient
statistics that degrade under intermittent updates, potentially
making the practical transition even sharper than the $1/p_c$
bound predicts.
(iii)~Stratified sampling that guarantees $B_c \geq 1$ per batch
eliminates the zero-gradient issue, but the reduced effective
batch size ($b_{\mathrm{eff}} = Bp_c$) still inflates per-step
variance by a factor of $1/p_c$.
(iv)~The premise ``derive gradient exclusively from complete
samples'' is specific to $\mathcal{F}_{rec}$ (Class~1) methods.
Methods that train recovery modules through distributional
objectives operating on per-view statistics (e.g., Energy-DIMC)
retain gradient signal at $p_c = 0$ and are not subject to this
bound; their degradation is instead governed by
Theorem~\ref{thm:capability}.
\end{proof}

\paragraph{Remark (fixed-topology failure, ProImp case).}
The \texttt{cross\_rec()} module of ProImp constructs tensors of
shape $|C| \times K$ and $|C| \times |C|$. Under Protocol~4 with
$V{=}2$ and $r \geq 0.5$, $|C| = 0$. The fallback produces shape
$(1, 1)$ and $(1, K)$, while downstream operations expect
$(n, K)$, triggering \texttt{RuntimeError: The size of tensor a
(1) must match the size of tensor b (600)}. This is a direct
consequence of the fixed-topology design: the architecture
assumes a fixed number of complete samples and cannot adapt when
that number reaches zero.

\paragraph{Remark (CRAFT avoids both failure modes).}
The computation graph for sample $i$ involves only (i)~its
observed features
$\{\mathbf{x}_v^{(i)} : v \in O_i\}$ and
(ii)~shared parameters $\theta$. No features of any other sample
or the global statistic $p_c$ appear. Both training and inference
proceed normally at $p_c = 0$. The Transformer's self-attention
matrix has dimensions determined by the actual token count
$|O_i| + 1$ (including CLS); reducing the token
count changes the matrix size but does not invalidate any
operation.

\subsection{Capability Bound: Theorem Statement and Proof}
\label{app:proofs:thm1b}

This subsection states and proves the capability bound referenced
throughout the main text (\S\ref{sec:theory},
\S\ref{sec:exp:main}, \S\ref{sec:discussion}). The bound limits
the cluster-relevant information any clustering function can
extract from a given observed-view subset, regardless of
training dynamics or architecture, and applies to all methods
including those outside $\mathcal{F}_{rec}$ (e.g., distributional
methods such as Energy-DIMC).

We prove Theorem~\ref{thm:capability} (statement in
\S\ref{sec:theory}). When views are correlated beyond what $Y$
explains, the upper bound is loose and the operational ceiling
reduces to the lower bound.

\begin{proof}[Proof of Theorem~\ref{thm:capability}]

\textbf{Upper bound.}
Under Assumption~\ref{ass:cond_ind} (conditional independence
of views given $Y$), the joint entropy decomposes as
$H(\{X_v\}_{v \in O_i} \mid Y) =
\sum_{v \in O_i} H(X_v \mid Y)$. By the subadditivity of
entropy, $H(\{X_v\}_{v \in O_i}) \leq
\sum_{v \in O_i} H(X_v)$. Therefore,
\begin{align}
  I(\{X_v\}_{v \in O_i}; Y)
  &= H(\{X_v\}_{v \in O_i})
     - H(\{X_v\}_{v \in O_i} \mid Y) \nonumber \\
  &\leq \sum_{v \in O_i} H(X_v)
     - \sum_{v \in O_i} H(X_v \mid Y) \nonumber \\
  &= \sum_{v \in O_i} I(X_v; Y).
\end{align}

\textbf{Lower bound.}
For any $u \in O_i$, $X_u$ is a component of
$\{X_v\}_{v \in O_i}$. Adding more variables cannot
decrease mutual information:
$I(\{X_v\}_{v \in O_i}; Y) \geq I(X_u; Y)$. Taking the
maximum over $u \in O_i$ gives the lower bound.

\textbf{Application to fused representations.}
Since the fused representation
$\mathbf{h}_{\mathrm{cls}}^{(i)}$ is a deterministic function of
$\{X_v\}_{v \in O_i}$, the data processing
inequality gives $I(\mathbf{h}_{\mathrm{cls}}^{(i)}; Y) \leq
I(\{X_v\}_{v \in O_i}; Y)$. Combined with the lower
bound, the fused representation carries at least
$\max_{v \in O_i} I(X_v; Y)$ and at most
$\sum_{v \in O_i} I(X_v; Y)$ information about $Y$. When
$|O_i| = 1$ (as under $V{=}2$ Protocol~4 with
$r \geq 0.5$), both bounds coincide and multi-view
complementarity is forfeited---explaining the soft degradation
observed for methods free of recovery gradient vanishing (e.g.,
Energy-DIMC drops from $65.23\%$ to $42.38\%$ on CUB at
Protocol~2/4 transition).
\end{proof}

\input{theorem2_proof}


\subsection{Cluster-Information Bound: $Y$ is the Statistical Bridge}
\label{app:proofs:bridge}

\begin{corollary}[Recovery does not increase available information
at inference]
\label{cor:recovery_redundancy}
Let $\hat{X}_v = g_\phi(\{X_u\}_{u \in O_i})$ be the output
of a recovery module applied to the observed views at inference.
Then
$I(\{X_u\}_{u \in O_i},\, \hat{X}_v;\; Y)
= I(\{X_u\}_{u \in O_i};\; Y)$.
\end{corollary}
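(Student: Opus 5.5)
The plan is to use the chain rule for mutual information together with the fact that $\hat{X}_v$ is a deterministic function of the observed views. Write $X_O := \{X_u\}_{u \in O_i}$ for brevity. The chain rule gives
\[
  I(X_O, \hat{X}_v; Y) \;=\; I(X_O; Y) \;+\; I(\hat{X}_v; Y \mid X_O),
\]
so it suffices to show that the conditional term vanishes. Since $\hat{X}_v = g_\phi(X_O)$ with $\phi$ fixed at inference, we have $H(\hat{X}_v \mid X_O) = 0$. Conditioning cannot increase entropy, so $H(\hat{X}_v \mid X_O, Y) = 0$ as well. Therefore
\[
  I(\hat{X}_v; Y \mid X_O) = H(\hat{X}_v \mid X_O) - H(\hat{X}_v \mid X_O, Y) = 0,
\]
which yields the claimed equality.

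For an argument that avoids entropies of continuous variables, which can be ill-defined for deterministic maps of real-valued features, I would instead prove the two inequalities directly. The ``$\geq$'' direction is monotonicity of mutual information under adding variables, which is the same step as the lower bound in the proof of Theorem~\ref{thm:capability}. For ``$\leq$'', note that $(X_O, \hat{X}_v) = (X_O, g_\phi(X_O))$ is a measurable function of $X_O$. This gives the Markov chain $Y \to X_O \to (X_O, \hat{X}_v)$, and the data processing inequality then yields $I(X_O, \hat{X}_v; Y) \leq I(X_O; Y)$. This is the same step used for $\mathbf{h}_{\mathrm{cls}}^{(i)}$ in the proof of Theorem~\ref{thm:capability}.

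The only step needing care is the modeling assumption that $g_\phi$ is deterministic and that its parameters $\phi$ are fixed independently of the test sample's $Y$. If the recovery module injected fresh noise, the Markov chain would still hold as long as that noise is independent of $Y$ given $X_O$. If $\phi$ were trained using labels, conditioning on $\phi$ restores the argument. I will state this assumption explicitly and note that the identity concerns information available at inference. It says nothing about whether recovery helps optimization during training, which is a separate question governed by Proposition~\ref{prop:trainability}.
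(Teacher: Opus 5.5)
Your proposal is correct. In the two-inequality form you say you would actually use, it is the paper's argument: the paper's proof is a one-line appeal to the data processing inequality, since $\hat{X}_v$ is a deterministic function of $\{X_u\}_{u \in O_i}$, which is your ``$\leq$'' step, and it leaves the trivial ``$\geq$'' (monotonicity) direction implicit; your caveat that $\phi$ must be fixed independently of the test sample's $Y$ usefully states an assumption the paper leaves implicit.
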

\begin{proof}
Direct from the data processing inequality: $\hat{X}_v$ is a
deterministic function of $\{X_u\}_{u \in O_i}$, so
appending it to the conditioning set cannot increase information
about $Y$.
\end{proof}

This concerns inference-time information content. During training,
recovery modules provide a useful gradient signal through cross-view
reconstruction, formalized below.

\begin{proposition}[Cross-view reconstruction as implicit
cluster-information maximization]
\label{prop:bridge}
Under Assumption~\ref{ass:cond_ind}, for any encoder
$f_\theta$ and any two views $u, v$:
\begin{equation}
  I(f_\theta(X_u);\; X_v)
  \;\leq\;
  I(f_\theta(X_u);\; Y).
  \label{eq:recon_bound}
\end{equation}
Consequently, any training objective that increases
$I(f_\theta(X_u); X_v)$---including cross-view reconstruction
losses---implicitly increases $I(f_\theta(X_u); Y)$.
\end{proposition}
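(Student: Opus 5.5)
The inequality $I(f_\theta(X_u); X_v) \le I(f_\theta(X_u); Y)$ should follow from a Markov chain plus the data processing inequality. Set $Z = f_\theta(X_u)$. Assumption~\ref{ass:cond_ind} gives $X_u \perp X_v \mid Y$. Since $Z$ is a measurable function of $X_u$ alone (possibly randomized, but with randomness independent of everything else), we also have $Z \perp X_v \mid Y$. Hence $Z - Y - X_v$ is a Markov chain, and $Y$ is the only statistical bridge between the encoded view $u$ and the raw view $v$.

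The key steps, in order, are as follows.

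\textbf{Step (i).} Derive the conditional independence $Z \perp X_v \mid Y$ formally. For any sets $A, B$,
\[
P(Z\in A, X_v\in B\mid Y) = \mathbb{E}\bigl[\mathbb{1}[f_\theta(X_u)\in A]\,\mathbb{1}[X_v\in B]\bigm| Y\bigr],
\]
which factorizes into the product of the two conditional probabilities by Assumption~\ref{ass:cond_ind}.

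\textbf{Step (ii).} Expand $I(Z; X_v, Y)$ with the chain rule in two ways:
\[
I(Z;X_v) + I(Z;Y\mid X_v) \;=\; I(Z;Y) + I(Z;X_v\mid Y).
\]
The last term vanishes by Step (i). Because $I(Z;Y\mid X_v)\ge 0$, this yields Eq.~\eqref{eq:recon_bound}. The identity also gives an exact gap, $I(Z;Y) - I(Z;X_v) = I(Z;Y\mid X_v)$.

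\textbf{Step (iii).} Address the ``consequently'' clause. It should be stated as a lower-bound statement: any objective that raises $I(Z;X_v)$ raises a lower bound on $I(Z;Y)$. I will add that cross-view reconstruction error controls $I(Z;X_v)$ through the standard variational bound
\[
I(Z;X_v) \;\ge\; H(X_v) - \mathbb{E}\bigl[-\log q(X_v\mid Z)\bigr].
\]
With a Gaussian decoder $q$, the expected negative log-likelihood is, up to constants, the squared reconstruction error of $\mathcal{L}_{\mathrm{recon}}$. Reducing reconstruction loss therefore raises a lower bound on $I(Z;X_v)$, and hence on $I(Z;Y)$.

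The main obstacle is this last interpretive step, not the inequality itself. Monotonicity holds only at the level of bounds: increasing $I(Z;X_v)$ need not strictly increase $I(Z;Y)$, because the gap $I(Z;Y\mid X_v)$ can shrink. I would state the conclusion as ``increases a lower bound on'' to stay rigorous.

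A secondary technicality is that for continuous $X_v$ and deterministic $f_\theta$, the mutual information $I(Z;X_u)$ may be infinite. This does not affect the argument, since every quantity in Step (ii) involves $Y$ or $X_v$ paired with $Z$ and is bounded by $H(Y)$ when $Y$ is discrete. I would note that $I(Z;Y)\le H(Y)<\infty$ makes the chain-rule rearrangement valid.
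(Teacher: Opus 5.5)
Your proof is correct and matches the paper's: both expand $I(f_\theta(X_u); X_v, Y)$ by the chain rule in two ways, use the conditional independence inherited from Assumption~\ref{ass:cond_ind} (which implicitly needs $u \neq v$) to make $I(f_\theta(X_u); X_v \mid Y)$ vanish, and drop the nonnegative term $I(f_\theta(X_u); Y \mid X_v)$. Your lower-bound reading of the ``consequently'' clause and your finiteness remark are more careful than anything the paper argues, but note that the variational link applies to a decoder reading $f_\theta(X_u)$ with $u \neq v$. It does not apply literally to the paper's $\mathcal{L}_{\mathrm{recon}}$, whose fused $h$ already contains $x_v$.
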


\begin{proof}
Since $f_\theta(X_u)$ is a deterministic function of $X_u$, and
$X_u \perp X_v \mid Y$ by Assumption~\ref{ass:cond_ind},
$f_\theta(X_u) \perp X_v \mid Y$, giving
$I(f_\theta(X_u); X_v \mid Y) = 0$. Applying the chain rule:
\begin{align}
  I(f_\theta(X_u);\; X_v)
  &= I(f_\theta(X_u);\; X_v, Y)
     - I(f_\theta(X_u);\; Y \mid X_v) \nonumber \\
  &\leq I(f_\theta(X_u);\; X_v, Y) \nonumber \\
  &= I(f_\theta(X_u);\; Y)
     + \underbrace{I(f_\theta(X_u);\; X_v \mid Y)}_{= 0}. \qedhere
\end{align}
\end{proof}

\textbf{Interpretation.}
Eq.~\eqref{eq:recon_bound} says that the cross-view predictive
information is bounded above by the cluster-relevant information.
Under conditional independence, $Y$ is the sole statistical bridge
between $X_u$ and $X_v$: any feature of $X_u$ that is predictive
of $X_v$ must pass through $Y$. A reconstruction loss that pushes
$I(f_\theta(X_u); X_v)$ upward can only do so by forcing
$f_\theta$ to extract more information about $Y$. This explains why
CRAFT's Stage~1 reconstruction loss is effective despite targeting
already-observed views: under Assumption~\ref{ass:cond_ind}, the
reconstruction objective \emph{is} implicit cluster-information
maximization.

\subsection{Attention Reweighting and Exact Exclusion}
\label{app:proofs:exclusion}

\begin{proposition}[Attention Leakage Bound for Placeholder Tokens]
\label{prop:attn_bound}
When absent views are replaced by learnable placeholder tokens
with bounded logits $\leq \varepsilon$, the total attention weight
assigned to $k$ absent positions satisfies
\[
  \sum_{j \in \mathrm{mask}} \alpha_j
  \;\leq\;
  \frac{k}{V - k} \cdot \exp(\varepsilon - s_{\min}).
\]
\end{proposition}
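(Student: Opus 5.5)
The bound follows from writing out the softmax defining the attention weights and controlling the numerator and denominator separately. Fix a query (e.g.\ the CLS token) and let $s_j$ be the pre-softmax logit it assigns to key position $j$. Then $\alpha_j = e^{s_j}/\sum_{l} e^{s_l}$. I split the $V$ view positions into the $k$ absent (placeholder) positions in $\mathrm{mask}$ and the $V-k$ observed positions. I read $s_{\min}$ as the minimum logit over observed positions, $s_{\min} = \min_{j \notin \mathrm{mask}} s_j$; this is the interpretation under which the stated form comes out. The CLS key, if included in the denominator, only enlarges it, so dropping it preserves the inequality.

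\textbf{Numerator.} Each placeholder logit satisfies $s_j \le \varepsilon$ by hypothesis, so
\[
\sum_{j\in\mathrm{mask}} e^{s_j} \le k\, e^{\varepsilon}.
\]

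\textbf{Denominator.} I discard the nonnegative placeholder terms and bound each observed term from below by $e^{s_{\min}}$:
\[
\sum_l e^{s_l} \;\ge\; \sum_{j\notin\mathrm{mask}} e^{s_j} \;\ge\; (V-k)\, e^{s_{\min}}.
\]

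\textbf{Combining.} The ratio of the two bounds gives
\[
\sum_{j\in\mathrm{mask}}\alpha_j \;\le\; \frac{k}{V-k}\,\exp(\varepsilon - s_{\min}),
\]
which is the claimed inequality.

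Strictly, dropping the placeholder terms from the denominator is valid but loose. Keeping them gives the sharper bound $k e^{\varepsilon}/\bigl(k e^{\varepsilon} + (V-k)e^{s_{\min}}\bigr)$, which also never exceeds $1$. I will note this as a remark.

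The only delicate point is definitional rather than technical. The statement does not define $s_{\min}$, so the proof must fix it as the minimal observed-token logit. It must also state that $k<V$, i.e.\ at least one view is observed; this is exactly the C2-type condition, without which the bound is vacuous. I will close by contrasting this with exact exclusion via the key-padding mask used by CRAFT. There masked logits are set to $-\infty$, which corresponds to $\varepsilon\to-\infty$, so the leakage is exactly zero rather than merely small.
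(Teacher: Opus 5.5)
Your proof is correct and follows the paper's argument: bound the placeholder numerator by $k\exp(\varepsilon)$, lower-bound the softmax denominator by $(V-k)\exp(s_{\min})$ using the observed tokens, and divide. Reading $s_{\min}$ as the minimum observed-token logit and requiring $k<V$ make explicit what the paper leaves implicit.
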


\begin{proof}
The mask set has $k$ elements with scaled dot product
$\leq \varepsilon$:
$\sum_{j \in \mathrm{mask}} \exp(\mathbf{q}^\top \mathbf{k}_j /
\sqrt{d}) \leq k \cdot \exp(\varepsilon)$. The denominator
$Z \geq (V - k) \cdot \exp(s_{\min})$. Dividing yields the
claim.
\end{proof}

\textbf{Corollary (Exact exclusion under attention masking).}
Under canonical CRAFT, $\varepsilon = -\infty$ (attention masking
sets the logit to $-\infty$ before the softmax) and the bound
yields $\sum_{j \in \mathrm{mask}} \alpha_j = 0$ exactly. This
guarantees that absent views contribute zero information to the
fused representation, ensuring smooth degradation rather than
discontinuous collapse and giving strict satisfaction of C2.

\textbf{Empirical verification.}
On HandWritten ($V{=}6$) with $k{=}1$, the learnable-placeholder
variant allocates $9.4\%$ of CLS attention to the placeholder
versus the $16.7\%$ uniform share, confirming that the trained
model systematically aligns real-token keys more closely with the
CLS query but does not achieve exact exclusion---only canonical
attention masking does.

\subsection{MFT as Weighted View-Subset Ensemble}
\label{app:proofs:mft}

\begin{proposition}[View Dropout Ensemble]
\label{prop:view_dropout}
In MFT, each sample draws $k \sim \mathrm{Uniform}(\{0, \ldots,
V{-}2\})$ views to exclude. The expected MFT loss equals a
weighted sum over view subsets:
\[
  \mathbb{E}_{\mathbf{m}}[\mathcal{L}(f_\theta(\mathbf{x},
  \mathbf{m}))]
  =
  \sum_{S : |S| \geq 2} P(S) \cdot
  \mathcal{L}(f_\theta(\mathbf{x}_S)),
\]
where $P(S) = 1/((V{-}1) \binom{V}{V - |S|})$.
\end{proposition}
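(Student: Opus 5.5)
The plan is to compute the law of the random mask $\mathbf{m}$ induced by the two-stage MFT sampling and then apply linearity of expectation. The MFT procedure first draws the number of excluded views $k$ uniformly from $\{0,\ldots,V-2\}$, so each value has probability $1/(V-1)$. Conditional on $k$, it draws the excluded set uniformly among all $\binom{V}{k}$ subsets of size $k$; we take this as the reading of ``draws $k$ views to exclude''. The retained subset $S$ then has size $|S| = V-k \in \{2,\ldots,V\}$. By construction, every subset of size at least two can occur, and no singleton or empty subset can. For a fixed $S$ with $|S|\geq 2$ we set $k = V - |S|$, and the product of the two stage probabilities gives
\[
  P(S) \;=\; \frac{1}{V-1}\cdot\frac{1}{\binom{V}{V-|S|}},
\]
which is the claimed weight. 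As a sanity check, summing over $S$ groups into $V-1$ size classes. Each class contains $\binom{V}{V-|S|}$ subsets, so each class has total mass $1/(V-1)$ and the weights sum to one.

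Next, I identify $\mathcal{L}(f_\theta(\mathbf{x},\mathbf{m}))$ with $\mathcal{L}(f_\theta(\mathbf{x}_S))$, where $S$ is the complement of the masked set. This step uses the exact-exclusion corollary of Proposition~\ref{prop:attn_bound}. Attention masking sets the logits of masked positions to $-\infty$, so those positions receive exactly zero weight. The CLS output under mask $\mathbf{m}$ therefore coincides with the forward pass on the token sequence $[\mathrm{[CLS]}, \{z_v\}_{v\in S}]$. I will argue this layer by layer for the single-layer Transformer: the masked keys never contribute to any query, and the CLS readout depends only on unmasked values. The per-view encoders act independently on each view, so no masked feature enters the computation. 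Given this identification, the loss depends on $\mathbf{m}$ only through $S$, and the law of total expectation over the finite set of possible $S$ gives
\[
  \mathbb{E}_{\mathbf{m}}\bigl[\mathcal{L}(f_\theta(\mathbf{x},\mathbf{m}))\bigr]
  = \sum_{S:|S|\geq 2} P(S)\,\mathcal{L}(f_\theta(\mathbf{x}_S)).
\]

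I expect the main obstacle to be the identification step, not the combinatorics. If the MFT loss includes the reconstruction or KL terms, then $\mathcal{L}$ may depend on $\mathbf{m}$ in ways beyond the fused CLS representation. Two examples are decoder targets evaluated on all views, and the pair $(S_1,S_2)$ in $\mathcal{L}_{\mathrm{KL}}$. I will handle this by stating that $\mathcal{L}$ denotes a per-sample loss that is a function of $f_\theta(\mathbf{x}_S)$ alone, as in the clustering and entropy terms. For terms that mix two masks, the same argument applies to the product measure over $(S_1,S_2)$ with weight $P(S_1)P(S_2)$.

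A second subtlety concerns samples with fewer than $V$ observed views, that is, $O_i$ incomplete. In that case the sampling must be restricted to subsets of $O_i$. The formula as stated holds for complete training samples, which is the setting of Algorithm~\ref{alg:craft}, and I will note this restriction explicitly.
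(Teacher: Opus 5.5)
Your proposal is correct and follows the same route as the paper's proof. Both derive $P(S)=\frac{1}{V-1}\cdot\frac{1}{\binom{V}{k}}$ with $k=V-|S|$ from the two-stage sampling, check normalization over the $V-1$ size classes, and pass from masks to retained subsets; the paper simply asserts that last step ("each mask determines a unique retained subset"), whereas you justify it via exact attention exclusion. One small correction to your side remark: $\mathcal{L}_{\mathrm{ent}}$ is a nonlinear function of the batch-averaged $\bar{p}$, so it is not a per-sample function of $f_\theta(\mathbf{x}_S)$ alone, but this does not affect the proposition, which concerns a per-sample $\mathcal{L}$.
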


\begin{proof}
Each mask determines a unique retained subset. The probability of
a specific subset $S$ with $|S| = V - k$ is
$\frac{1}{V-1} \cdot \frac{1}{\binom{V}{k}}$. Normalization is
verified by
$\sum_{k=0}^{V-2} \binom{V}{k} / ((V-1) \binom{V}{k}) = 1$.
\end{proof}

\textbf{Interpretation.}
(1)~The full-view subset ($k{=}0$) appears with probability
$1/(V{-}1)$, preserving complete-data performance.
(2)~At $V{=}6$, $57$ distinct subsets participate in the ensemble,
providing strong regularization; at $V{=}2$, only $1$ subset (full
data) participates, explaining why MFT has no benefit on CUB
(\S\ref{sec:exp:ablation}).
(3)~With weak base models, all sub-models perform poorly and
ensemble aggregation cannot compensate, explaining MFT's failure
on datasets with low base accuracy.


\section{Protocol Analysis}
\label{app:protocol}

\subsection{Closed-Form Protocol Formulas}
\label{app:protocol:formulas}

For each protocol introduced in \S\ref{sec:protocol:four}, we
state the closed-form expressions for the realized entry-wise
missing rate $\hat{r}$ and the complete-sample proportion $p_c$ in
the limit $n \to \infty$. Throughout, $r \in [0, 1]$ denotes the
nominal missing rate as reported by each method.

\textbf{Protocol~1 (sample-level; delete one view per sample).}
A fraction $r$ of samples are designated incomplete; for each
incomplete sample, exactly one view is deleted uniformly at random.
Then:
\[
  \hat{r}_{\mathrm{P1}} = \frac{r}{V},
  \qquad
  p_c^{\mathrm{P1}} = 1 - r.
\]

\textbf{Protocol~2 (protected view; entry-wise at $r/V$).}
For each sample, exactly one view is randomly designated as
``protected'' (always observed); the remaining $V{-}1$ views are
masked by independent Bernoulli$(r/(V{-}1))$ trials, calibrated to
realize the nominal entry-wise rate $r/V$ in expectation. Then:
\[
  \hat{r}_{\mathrm{P2}} = \frac{r}{V},
  \qquad
  p_c^{\mathrm{P2}} = \left(1 - \frac{r}{V-1}\right)^{V-1}.
\]

\textbf{Protocol~3 (entry-wise Bernoulli at $r$; one-hot
guarantee).}
Each entry $M_v^{(i)}$ is masked by an independent Bernoulli$(r)$
trial; any sample that loses all $V$ views has one randomly chosen
view restored (the one-hot guarantee). Then:
\[
  \hat{r}_{\mathrm{P3}} = r - \frac{r^V}{V},
  \qquad
  p_c^{\mathrm{P3}} = (1-r)^V.
\]

\textbf{Protocol~4 (protected view; exact entry-wise at $r$).}
For each sample, one protected view is fixed; the remaining
$V{-}1$ views are masked deterministically to realize an entry-wise
missing rate of exactly $r$ (subject to the cap
$\hat{r} \leq (V{-}1)/V$ enforced by the protected view). The
limit derivation through binomial sampling on the $V{-}1$
non-protected positions yields:
\[
  \hat{r}_{\mathrm{P4}} = \min\!\left(r,\, \frac{V-1}{V}\right),
  \qquad
  p_c^{\mathrm{P4}} = \max\!\left(0,\,
    \left(1 - \frac{V r}{V-1}\right)^{V-1}\right).
\]

\textbf{Numerical separation.}
At low $r$, the four formulas for $p_c$ are nearly indistinguishable
($p_c \to 1$ as $r \to 0$). At moderate-to-high $r$ and $V \geq 3$,
they diverge by orders of magnitude---e.g., on $V{=}6$, $r{=}0.7$:
$p_c^{\mathrm{P1}} = 30\%$ vs.\ $p_c^{\mathrm{P4}} \approx
1.0 \times 10^{-4}$, a $\sim$$3{,}000{\times}$ gap at the same
nominal rate. Numerical instantiations appear in
Table~\ref{tab:hidden_vars}.

\subsection{Monte Carlo Verification of Protocol Formulas}
\label{app:protocol:mc}

We verify the closed-form expressions for $\hat{r}$ and $p_c$
through Monte Carlo simulation ($n{=}2000$, $V{=}6$, 100
independent trials per combination). Empirical means match
analytical predictions within $0.5\%$ in all cases.

\textbf{Protected-view ceiling.}
The protected view imposes an upper bound of $(V{-}1)/V$ on
$\hat{r}$ under Protocol~4. When $V{=}2$, $r{=}0.5$ and $r{=}0.7$
generate identical mask matrices. When $V{=}3$, the ceiling is
$2/3 \approx 0.667$, so $r{=}0.7$ cannot be fully realized.
High-missing-rate experiments on two-view datasets carry no
discriminative power under this protocol.

\textbf{DSIMVC protocol equivalence.}
DSIMVC uses per-sample Bernoulli trials rather than the
fixed-count sampling of Protocol~1. By the law of large numbers,
the fraction of incomplete samples converges to $r$ as $n \to
\infty$; for $n{=}2000$, the standard deviation is
$\sqrt{r(1-r)/n} < 1.2\%$. The two procedures produce
statistically indistinguishable mask matrices at the dataset
sizes used in our experiments.

\subsection{Hidden-Variable Numerical Profiles}
\label{app:protocol:profiles}

Table~\ref{tab:hidden_vars} instantiates the closed-form
expressions for two representative configurations.

\begin{table}[h]
\centering
\caption{Hidden variables ($\hat{r}$ and $p_c$) under four
protocols, computed from the limiting formulas in
\S\ref{app:protocol:formulas}.}
\label{tab:hidden_vars}
\small
\begin{tabular}{@{}l cc cc@{}}
\toprule
& \multicolumn{2}{c}{$V{=}2,\; r{=}0.5$}
& \multicolumn{2}{c}{$V{=}6,\; r{=}0.7$} \\
\cmidrule(lr){2-3} \cmidrule(lr){4-5}
Protocol & $\hat{r}$ & $p_c$ & $\hat{r}$ & $p_c$ \\
\midrule
1 & 25.0\% & 50.0\%  & 11.7\% & 30.0\% \\
2 & 25.0\% & 50.0\%  & 11.7\% & 47.0\% \\
3 & 37.5\% & 25.0\%  & 68.0\% & 0.073\% \\
4 & 50.0\% & 0\%     & 70.0\% & 0.010\% \\
\bottomrule
\end{tabular}
\end{table}

The right-hand block ($V{=}6$, $r{=}0.7$) exhibits a
$\sim$$4500{\times}$ divergence in $p_c$ between Protocol~2
($47\%$) and Protocol~4 ($0.010\%$) at the same nominal missing
rate---the empirical manifestation of the trainability-vs-
capability separation analyzed in \S\ref{sec:theory}. Even the
intermediate comparison Protocol~2 vs.\ Protocol~3 ($47\%$ vs.\
$0.073\%$) shows a $640{\times}$ divergence.

\subsection{Protocol Mappings in Nine Implementations}
\label{app:protocol:audit}

To determine which protocol each method actually uses, we
inspected the official code repositories of nine IMVC methods
published between 2021 and 2025. Table~\ref{tab:audit} records the
masking mechanism observed in each implementation and identifies
the closest matching protocol from
\S\ref{app:protocol:formulas}.

\begin{table}[t]
\centering
\caption{Masking mechanism in nine IMVC implementations, mapped
to the closest protocol from \S\ref{app:protocol:formulas}.
$\ddagger$~shared masking code; $\dagger$~per-sample Bernoulli
vs.\ fixed-count (\S\ref{app:protocol:mc}); $+$~Protocol~2 with
a deterministic complete-sample floor.}
\label{tab:audit}
\small
\setlength{\tabcolsep}{4pt}
\begin{tabular}{@{}lll c@{}}
\toprule
Method & Venue & Masking Mechanism & Proto. \\
\midrule
COMPLETER$^\ddagger$~\citep{lin2021completer}
  & CVPR'21
  & $r/V$ entry-wise + protected view + calibration
  & 2 \\
DCP$^\ddagger$~\citep{lin2022dcp}
  & TPAMI'22
  & $r/V$ entry-wise + protected view (shared codebase)
  & 2 \\
DSIMVC
  & ICML'22
  & per-sample Bernoulli, post-fix $|O_i| \geq 1$
  & 1$^\dagger$ \\
ProImp$^\ddagger$~\citep{li2023proimp}
  & IJCAI'23
  & $r/V$ entry-wise + protected view
  & 2 \\
APADC$^\ddagger$~\citep{xu2023apadc}
  & TIP'23
  & $r/V$ entry-wise + protected view
  & 2 \\
DVIMC~\citep{chen2025dvimc}
  & AAAI'24
  & sample-level; delete $1{\sim}V{-}1$ views per incomplete sample
  & 1 \\
DCG$^\ddagger$~\citep{zhang2025dcg}
  & AAAI'25
  & $r/V$ entry-wise + protected view
  & 2 \\
Energy-DIMC~\citep{wang2025energydimc}
  & MM'25
  & $\mu/V$ entry-wise + per-sample protected view
  & 2 \\
HSACC~\citep{hsacc2025}
  & NeurIPS'25
  & $(1{-}r)n$ complete samples $+$ entry-wise masking on rest
  & 2$^+$ \\
\bottomrule
\end{tabular}
\end{table}

The realized protocol distribution is concentrated:
\textbf{seven of nine methods} sit at Protocol~2, and
\textbf{the remaining two} (DSIMVC, DVIMC) at Protocol~1. The
stricter regimes (Protocols~3 and~4) are not exercised by any
existing IMVC implementation prior to this work; this absence is
precisely the coverage gap that produces the cliff observed in
\S\ref{sec:exp:main} when methods built on Protocol~2 assumptions
are evaluated under Protocol~4.


\section{Complete Experimental Results}
\label{app:results}

\subsection{CUB ($V{=}2$)}
\label{app:results:cub}

CRAFT uses one canonical checkpoint
($\lambda_{\mathrm{repr}}{=}0.1$, $\tau{=}0.1$, no MFT);
baselines retrain per configuration. Under Protocol~4 with $V{=}2$ the
protected view caps $\hat{r}$ at $0.5$, so Protocol~4 $r{=}0.7$
reproduces the $r{=}0.5$ masks (\S\ref{app:protocol:mc}).

\begin{table}[h]
\centering
\caption{ACC (\%) on CUB ($V{=}2$, $K{=}10$).
$^\dagger$Protocol~4 $r{=}0.7$ reproduces $r{=}0.5$ masks.
$^*$BURG cited from original paper.}
\label{tab:cub_full}
\small
\begin{tabular}{@{}ll rrrr@{}}
\toprule
Method & $r$ & Proto.~1 & Proto.~2 & Proto.~3 & Proto.~4 \\
\midrule
\multirow{4}{*}{DCG}
  & 0.1 & 71.50 & 74.11 & 70.50 & 70.28 \\
  & 0.3 & 72.56 & 70.33 & 67.78 & 65.39 \\
  & 0.5 & 73.33 & 69.11 & 60.17 & 15.61 \\
  & 0.7 & 63.50 & 59.16 & 55.95 & 17.72 \\
\midrule
\multirow{4}{*}{COMPLETER}
  & 0.1 & 51.33 & 55.80 & 63.23 & 61.23 \\
  & 0.3 & 61.60 & 62.43 & 52.10 & 18.27 \\
  & 0.5 & 49.83 & 53.67 & 19.43 & 36.43 \\
  & 0.7 & 18.90 & 18.50 & 21.13 & 36.43 \\
\midrule
\multirow{4}{*}{DCP}
  & 0.1 & 65.00 & 59.83 & 62.33 & 64.17 \\
  & 0.3 & 57.50 & 62.67 & 44.83 & 29.67 \\
  & 0.5 & 51.33 & 51.83 & 28.17 & 36.83 \\
  & 0.7 & 26.00 & 27.00 & 34.33 & 36.83 \\
\midrule
\multirow{4}{*}{Energy-DIMC}
  & 0.1 & 75.20 & 68.55 & 71.48 & 71.29 \\
  & 0.3 & 67.77 & 74.61 & 65.62 & 53.52 \\
  & 0.5 & 58.20 & 65.23 & 47.85 & 42.38 \\
  & 0.7 & 51.56 & 59.77 & 45.51 & 42.38 \\
\midrule
\multirow{4}{*}{BURG$^*$}
  & 0.1 & --- & 71.50 & --- & --- \\
  & 0.3 & --- & 58.00 & --- & --- \\
  & 0.5 & --- & 57.33 & --- & --- \\
  & 0.7 & --- & 53.83 & --- & --- \\
\midrule
\multirow{4}{*}{HSACC}
  & 0.1 & 63.13\scriptsize{$\pm$1.2} & 63.63\scriptsize{$\pm$2.2} & 62.23\scriptsize{$\pm$2.4} & 59.50\scriptsize{$\pm$2.3} \\
  & 0.3 & 54.00\scriptsize{$\pm$3.2} & 57.20\scriptsize{$\pm$4.1} & 45.03\scriptsize{$\pm$4.0} & 33.13\scriptsize{$\pm$1.9} \\
  & 0.5 & 43.40\scriptsize{$\pm$3.2} & 45.63\scriptsize{$\pm$3.1} & 30.70\scriptsize{$\pm$0.6} & 38.63\scriptsize{$\pm$1.5} \\
  & 0.7 & 29.67\scriptsize{$\pm$0.5} & 30.67\scriptsize{$\pm$1.4} & 35.00\scriptsize{$\pm$0.6} & 38.63\scriptsize{$\pm$1.5} \\
\midrule
\multirow{4}{*}{CRAFT}
  & 0.1 & 82.09\scriptsize{$\pm$1.0} & 82.35\scriptsize{$\pm$0.9} & 80.92\scriptsize{$\pm$1.2} & 80.89\scriptsize{$\pm$1.2} \\
  & 0.3 & 79.19\scriptsize{$\pm$1.1} & 79.44\scriptsize{$\pm$1.3} & 76.27\scriptsize{$\pm$1.6} & 74.15\scriptsize{$\pm$2.2} \\
  & 0.5 & 76.04\scriptsize{$\pm$2.2} & 76.41\scriptsize{$\pm$1.9} & 72.29\scriptsize{$\pm$2.4} & 68.99\scriptsize{$\pm$3.8} \\
  & 0.7 & 73.61\scriptsize{$\pm$2.7} & 73.23\scriptsize{$\pm$2.7} & 69.81\scriptsize{$\pm$2.9} & 68.99\scriptsize{$\pm$3.8}$^\dagger$ \\
\bottomrule
\end{tabular}
\end{table}

\subsection{HandWritten ($V{=}6$)}
\label{app:results:hw}


Per-(protocol, rate) accuracy on HandWritten under canonical CRAFT
(Stage~1 + Stage~2 + MFT, per \S\ref{sec:exp:setup}); baselines
retrain per configuration, CRAFT trains once on complete data.

\begin{table}[h]
\centering
\caption{ACC (\%) on HandWritten ($V{=}6$, $K{=}10$). 5-seed mean
$\pm$ std. $^\dagger$BURG cited from original paper.}
\label{tab:hw6v_full}
\small
\begin{tabular}{@{}ll rrrr@{}}
\toprule
Method & $r$ & Proto.~1 & Proto.~2 & Proto.~3 & Proto.~4 \\
\midrule
\multirow{4}{*}{DCP}
  & 0.1 & 82.35 & 82.30 & 79.95 & 82.10 \\
  & 0.3 & 83.90 & 84.00 & 22.45 & 24.60 \\
  & 0.5 & 80.00 & 93.65 & 18.15 & 18.15 \\
  & 0.7 & 71.55 & 70.10 & 16.40 & 16.25 \\
\midrule
\multirow{4}{*}{Energy-DIMC}
  & 0.1 & 96.65 & 96.26 & 96.43 & 96.09 \\
  & 0.3 & 96.37 & 96.54 & 95.20 & 94.87 \\
  & 0.5 & 96.09 & 96.09 & 92.24 & 92.75 \\
  & 0.7 & 96.37 & 95.65 & 83.98 & 83.26 \\
\midrule
\multirow{4}{*}{BURG$^\dagger$}
  & 0.1 & --- & 95.70 & --- & --- \\
  & 0.3 & --- & 95.30 & --- & --- \\
  & 0.5 & --- & 94.30 & --- & --- \\
  & 0.7 & --- & 79.75 & --- & --- \\
\midrule
\multirow{4}{*}{HSACC}
  & 0.1 & 76.53\scriptsize{$\pm$3.7} & 80.48\scriptsize{$\pm$8.6} & 87.89\scriptsize{$\pm$7.4} & 89.39\scriptsize{$\pm$1.3} \\
  & 0.3 & 87.91\scriptsize{$\pm$4.2} & 85.40\scriptsize{$\pm$4.1} & 14.92\scriptsize{$\pm$0.2} & 15.05\scriptsize{$\pm$0.2} \\
  & 0.5 & 87.10\scriptsize{$\pm$5.0} & 84.10\scriptsize{$\pm$5.8} & 14.08\scriptsize{$\pm$0.5} & 14.24\scriptsize{$\pm$0.2} \\
  & 0.7 & 80.56\scriptsize{$\pm$1.7} & 83.48\scriptsize{$\pm$1.3} & 13.80\scriptsize{$\pm$0.1} & 13.52\scriptsize{$\pm$0.6} \\
\midrule
\multirow{4}{*}{CRAFT}
  & 0.1 & 97.60\scriptsize{$\pm$0.8} & 97.61\scriptsize{$\pm$0.8} & 97.49\scriptsize{$\pm$0.9} & 97.53\scriptsize{$\pm$0.8} \\
  & 0.3 & 97.57\scriptsize{$\pm$0.8} & 97.59\scriptsize{$\pm$0.8} & 96.51\scriptsize{$\pm$0.9} & 96.82\scriptsize{$\pm$0.8} \\
  & 0.5 & 97.57\scriptsize{$\pm$0.8} & 97.53\scriptsize{$\pm$0.8} & 92.92\scriptsize{$\pm$1.3} & 93.69\scriptsize{$\pm$1.1} \\
  & 0.7 & 97.58\scriptsize{$\pm$0.9} & 97.49\scriptsize{$\pm$0.9} & 86.03\scriptsize{$\pm$2.1} & 85.21\scriptsize{$\pm$2.4} \\
\bottomrule
\end{tabular}
\end{table}


\subsection{MultiFashion ($V{=}3$)}
\label{app:results:mf}

Table~\ref{tab:mf_full} reports per-(protocol, $r$) accuracy on
MultiFashion; baselines retrain per configuration, CRAFT trains once on
complete data with MFT enabled at $V{\geq}3$.

\begin{table}[h]
\centering
\caption{ACC (\%) on MultiFashion ($V{=}3$, $K{=}10$). CRAFT uses
attention masking with MFT; 5-seed mean.}
\label{tab:mf_full}
\small
\begin{tabular}{@{}ll rrrr@{}}
\toprule
Method & $r$ & Proto.~1 & Proto.~2 & Proto.~3 & Proto.~4 \\
\midrule
\multirow{4}{*}{DCP}
  & 0.1 & 86.04 & 74.92 & 72.57 & 75.24 \\
  & 0.3 & 72.75 & 75.21 & 66.18 & 67.21 \\
  & 0.5 & 72.26 & 80.46 & 63.78 & 69.36 \\
  & 0.7 & 68.95 & 78.73 & 33.21 & 25.11 \\
\midrule
\multirow{4}{*}{Energy-DIMC}
  & 0.1 & 93.30 & 93.88 & 91.20 & 91.97 \\
  & 0.3 & 91.64 & 91.27 & 87.75 & 88.53 \\
  & 0.5 & 90.19 & 90.38 & 83.09 & 79.77 \\
  & 0.7 & 89.90 & 89.61 & 72.60 & 29.14 \\
\midrule
\multirow{4}{*}{HSACC}
  & 0.1 & 97.46\scriptsize{$\pm$0.8} & 98.00\scriptsize{$\pm$0.2}
        & 96.02\scriptsize{$\pm$0.2} & 94.34\scriptsize{$\pm$3.5} \\
  & 0.3 & 96.40\scriptsize{$\pm$0.3} & 96.29\scriptsize{$\pm$0.1}
        & 85.17\scriptsize{$\pm$3.8} & 82.50\scriptsize{$\pm$2.5} \\
  & 0.5 & 91.67\scriptsize{$\pm$3.9} & 94.33\scriptsize{$\pm$0.4}
        & 65.48\scriptsize{$\pm$2.8} & 49.80\scriptsize{$\pm$1.6} \\
  & 0.7 & 83.36\scriptsize{$\pm$3.0} & 88.03\scriptsize{$\pm$3.6}
        & 17.62\scriptsize{$\pm$1.3} & 21.61\scriptsize{$\pm$0.4} \\
\midrule
\multirow{4}{*}{DVIMC}
  & 0.1 & 83.86 & 86.51 & 79.95 & 86.65 \\
  & 0.3 & 86.62 & 88.67 & 83.51 & 84.09 \\
  & 0.5 & 88.82 & 85.81 & 80.67 & 80.43 \\
  & 0.7 & 87.21 & 84.87 & 74.42 & --- \\
\midrule
\multirow{4}{*}{CRAFT}
  & 0.1 & 93.21 & 93.20 & 92.73 & 92.73 \\
  & 0.3 & 92.82 & 92.70 & 90.66 & 90.83 \\
  & 0.5 & 92.44 & 92.11 & 88.67 & 88.12 \\
  & 0.7 & 91.97 & 91.38 & 86.64 & 85.41 \\
\bottomrule
\end{tabular}
\end{table}

\subsection{UCI-Digit, Out-Scene, and Caltech}
\label{app:results:supp}

A single CRAFT checkpoint per dataset is evaluated across all
$16$ ($\text{protocol}, r$) configurations, mirroring the train-once
protocol in \S\ref{sec:exp:setup}; Table~\ref{tab:supp_full}
reports the results. Baselines were not re-run on these three
supplementary benchmarks.

\begin{table}[h]
\centering
\caption{CRAFT ACC (\%) on UCI-Digit ($V{=}3$, $K{=}10$),
Out-Scene ($V{=}4$, $K{=}8$), and Caltech ($V{=}6$, $K{=}20$).
5-seed mean $\pm$ std.}
\label{tab:supp_full}
\small
\begin{tabular}{@{}ll rrrr@{}}
\toprule
Dataset & $r$ & Proto.~1 & Proto.~2 & Proto.~3 & Proto.~4 \\
\midrule
\multirow{4}{*}{UCI-Digit}
  & 0.1 & 90.94\scriptsize{$\pm$0.6} & 90.87\scriptsize{$\pm$0.5}
        & 90.22\scriptsize{$\pm$0.4} & 90.17\scriptsize{$\pm$0.5} \\
  & 0.3 & 90.59\scriptsize{$\pm$0.4} & 90.08\scriptsize{$\pm$0.3}
        & 85.75\scriptsize{$\pm$1.8} & 85.61\scriptsize{$\pm$1.9} \\
  & 0.5 & 90.34\scriptsize{$\pm$0.3} & 89.12\scriptsize{$\pm$0.5}
        & 79.12\scriptsize{$\pm$4.0} & 77.65\scriptsize{$\pm$4.3} \\
  & 0.7 & 90.14\scriptsize{$\pm$0.6} & 87.64\scriptsize{$\pm$1.1}
        & 72.93\scriptsize{$\pm$5.9} & 68.31\scriptsize{$\pm$6.9} \\
\midrule
\multirow{4}{*}{Out-Scene}
  & 0.1 & 75.05\scriptsize{$\pm$1.7} & 75.06\scriptsize{$\pm$1.6}
        & 74.46\scriptsize{$\pm$1.6} & 74.47\scriptsize{$\pm$1.5} \\
  & 0.3 & 74.61\scriptsize{$\pm$1.5} & 74.69\scriptsize{$\pm$1.6}
        & 71.34\scriptsize{$\pm$1.7} & 71.49\scriptsize{$\pm$1.6} \\
  & 0.5 & 74.53\scriptsize{$\pm$1.6} & 74.18\scriptsize{$\pm$1.7}
        & 66.09\scriptsize{$\pm$1.9} & 65.86\scriptsize{$\pm$1.9} \\
  & 0.7 & 74.07\scriptsize{$\pm$1.8} & 73.60\scriptsize{$\pm$1.5}
        & 59.78\scriptsize{$\pm$2.1} & 57.05\scriptsize{$\pm$2.1} \\
\midrule
\multirow{4}{*}{Caltech}
  & 0.1 & 57.92\scriptsize{$\pm$1.7} & 57.88\scriptsize{$\pm$1.7}
        & 57.63\scriptsize{$\pm$1.8} & 57.62\scriptsize{$\pm$1.8} \\
  & 0.3 & 57.78\scriptsize{$\pm$1.8} & 57.77\scriptsize{$\pm$1.7}
        & 56.17\scriptsize{$\pm$2.1} & 56.23\scriptsize{$\pm$2.0} \\
  & 0.5 & 57.58\scriptsize{$\pm$1.7} & 57.63\scriptsize{$\pm$1.8}
        & 53.31\scriptsize{$\pm$2.9} & 53.59\scriptsize{$\pm$2.7} \\
  & 0.7 & 57.40\scriptsize{$\pm$1.7} & 57.35\scriptsize{$\pm$1.8}
        & 48.12\scriptsize{$\pm$4.5} & 47.74\scriptsize{$\pm$4.5} \\
\bottomrule
\end{tabular}
\end{table}

The same pattern holds on all three benchmarks: lenient
protocols (P1, P2) stay nearly flat as $r$ grows, while
stringent protocols (P3, P4) expose the genuine $p_c$-driven
degradation. Caltech's larger Proto.~4 variance at $r{=}0.7$
($\pm$4.5) reflects its heterogeneous view dimensions and the
shallow encoder necessitated by the high-dimensional GIST view.

\subsection{YTF-31 ($V{=}5$, $K{=}31$)}
\label{app:results:ytf}

Table~\ref{tab:ytf_full} reports per-(protocol, $r$) accuracy on
YTF-31; the train-once advantage compresses a $16$-run DCP grid
($\sim$$16$ hours) into a single CRAFT run ($\sim$$3$ hours).

\begin{table}[h]
\centering
\caption{ACC (\%) on YTF-31 ($V{=}5$, $K{=}31$). Random baseline:
$3.2\%$. DCP trains $16$ times ($\sim$$16$ hours); CRAFT trains
once ($\sim$$3$ hours). Single seed (seed $42$) due to dataset
scale ($n \approx 100\text{k}$).}
\label{tab:ytf_full}
\small
\begin{tabular}{@{}ll rrrr@{}}
\toprule
Method & $r$ & Proto.~1 & Proto.~2 & Proto.~3 & Proto.~4 \\
\midrule
\multirow{4}{*}{DCP}
  & 0.1 & 32.97 & 33.29 & 27.95 & 33.85 \\
  & 0.3 & 29.49 & 30.86 & 16.00 & 12.73 \\
  & 0.5 & 28.05 & 33.04 & 10.80 & 11.17 \\
  & 0.7 & 27.57 & 30.66 & 5.79 & 6.08 \\
\midrule
\multirow{4}{*}{CRAFT}
  & 0.1 & 27.64 & 27.56 & 26.01 & 25.98 \\
  & 0.3 & 26.88 & 26.76 & 22.39 & 22.27 \\
  & 0.5 & 26.13 & 25.93 & 19.68 & 19.45 \\
  & 0.7 & 25.39 & 25.16 & 18.45 & 18.10 \\
\bottomrule
\end{tabular}
\end{table}

\subsection{Missing-Rate-Matched Training}
\label{app:results:mrmatched}

To verify that CRAFT's robustness is architectural (per-sample
independence + variable-length fusion) rather than an artifact of
training on complete data, we train a variant exposed to the
target missing rate during training, using hyperparameters
appropriate for this training regime
(\S\ref{sec:exp:setup}).

\begin{table}[h]
\centering
\caption{HandWritten ($V{=}6$): complete-trained vs.\
rate-matched CRAFT under Protocol~1 and Protocol~4. Both
variants disable masked fine-tuning to ensure a
configuration-matched comparison; the canonical CRAFT row in
Table~\ref{tab:main} enables MFT and reports higher accuracy
(MFT helps at $V \geq 3$, see \S\ref{sec:exp:ablation}).
$\dagger$ = single-seed evaluation; all other entries are
5-seed mean.}
\label{tab:mr_matched}
\small
\setlength{\tabcolsep}{4pt}
\begin{tabular}{@{}l cccc cccc@{}}
\toprule
& \multicolumn{4}{c}{Protocol~1}
& \multicolumn{4}{c}{Protocol~4} \\
\cmidrule(lr){2-5} \cmidrule(lr){6-9}
$r$ & 0.1 & 0.3 & 0.5 & 0.7 & 0.1 & 0.3 & 0.5 & 0.7 \\
\midrule
Complete-trained & 97.37 & 97.50 & 97.69 & 97.83
                 & 97.48 & 95.68 & 90.56 & 82.48 \\
Per-rate         & 94.75 & 94.59$^\dagger$ & 94.99 & 95.03
                 & 94.92 & 93.98 & 91.05 & 82.56 \\
\midrule
Gap              & $+2.62$ & $+2.91$ & $+2.70$ & $+2.80$
                 & $+2.56$ & $+1.70$ & $-0.49$ & $-0.08$ \\
\bottomrule
\end{tabular}
\end{table}

Across the 8 configurations of Table~\ref{tab:mr_matched}, the two variants
differ by at most $2.91$pp; the per-rate variant occasionally
outperforms complete-trained at the most stringent configurations
(Protocol~4, $r \geq 0.5$). Both regimes confirm that CRAFT's
robustness stems from per-sample independence rather than from
any particular training data distribution.

\subsection{Complete-MVC Comparison}
\label{app:results:complete}

\begin{table}[h]
\centering
\caption{Complete-data clustering performance (ACC~\%).
$\dagger$ = numbers cited from original papers;
OOM = out of memory on a single RTX~3090 (24\,GB);
--- = not run.}
\label{tab:complete_full}
\small
\begin{tabular}{@{}l ccccccc@{}}
\toprule
Method & CUB & MF & UCI & OS & HW & Cal & YTF \\
\midrule
EPFMVC$^\dagger$ & --- & --- & --- & --- & 96.90 & --- & --- \\
SparseMVC~\citep{liu2026sparsemvc}
  & 77.07{\scriptsize$\pm$1.1}
  & OOM
  & 90.27{\scriptsize$\pm$3.5}
  & 77.49
  & 90.26{\scriptsize$\pm$3.5}
  & 47.10{\scriptsize$\pm$1.1}
  & OOM \\
CRAFT
  & 83.80{\scriptsize$\pm$1.1}
  & 93.41{\scriptsize$\pm$0.3}
  & 91.46{\scriptsize$\pm$0.5}
  & 75.55{\scriptsize$\pm$0.8}
  & 97.49{\scriptsize$\pm$0.9}
  & 58.86{\scriptsize$\pm$1.2}
  & 28.10 \\
\bottomrule
\end{tabular}
\end{table}

CRAFT matches dedicated complete-MVC methods within a couple of
percent on the two datasets where complete-MVC baselines have
published numbers
(Table~\ref{tab:complete_full}), while uniquely providing
robustness to arbitrary missing patterns that no existing
complete-MVC method offers.

\subsection{Training Time Details}
\label{app:results:time}

CRAFT trains once on complete data and infers every (protocol,
$r$) configuration, while baselines retrain per configuration. On MultiFashion,
this collapses a $16$-run $\times 8$-minute retraining grid into
a single $14.5$-minute run---the $8.8\times$ wall-clock advantage
quoted in the main text. The pattern scales: on YTF-31,
$16$~hours of DCP retraining is replaced by a single ${\sim}3$-hour
CRAFT run.

\subsection{Extended Trainability Collapse Verification}
\label{app:results:collapse}

\paragraph{Trainability isolation configuration.}\label{app:results:isolation}%
Table~\ref{tab:isolation} reports a controlled configuration on HandWritten
designed to orthogonally isolate $\hat{r}$ from $p_c$: Setting~C uses
Protocol~1 at $r{=}1$, in which every sample drops exactly one
view, giving $\hat{r}{=}1/V \approx 16.67\%$ (close to the
lenient Setting~A) but $p_c=0$ exactly (matching the stringent
Setting~D). Setting~A and Setting~C therefore share near-identical
$\hat{r}$ but differ in $p_c$ by $30$ percentage points, while
Setting~C and Setting~D share $p_c \to 0$ but differ in $\hat{r}$ by a
factor of four. The vertical A$\to$C contrast isolates $p_c$ as
the causal driver, and the horizontal C$\to$D contrast confirms
that varying $\hat{r}$ at vanishing $p_c$ does not alter the
$\mathcal{F}_{rec}$ collapse. Class-2 (Energy-DIMC) and Class-3
(CRAFT, CRAFT-Core) methods retain accuracy across all three
settings; the bare C1+C2 minimum (CRAFT-Core, Stage~1 only) already
escapes the trainability bound, and canonical CRAFT preserves its
Setting~A accuracy on Setting~C without retraining.

\begin{table}[h]
\centering
\caption{Trainability isolation on HandWritten ($V{=}6$, $K{=}10$).
  Setting~A and Setting~C share near-identical $\hat{r}$ but differ in
  $p_c$ from $30\%$ to $0$; Setting~C and Setting~D share $p_c{\to}0$
  but $\hat{r}$ differs by a factor of~$4$. ACC (\%);
  Setting~C entries are 5-seed mean $\pm$ std for Min-1; other settings
  are single-seed unless marked with std.
  Bold = best per setting;
  $^\ddagger \leq 40\%$ on $K{=}10$;
  $^\dagger$ on a method name = single-seed evaluation;
  --- = not run.}
\label{tab:isolation}
\small
\setlength{\tabcolsep}{6pt}
\begin{tabular}{@{}l ccc@{}}
\toprule
& Setting~A & Setting~C & Setting~D \\
& Proto~1, $r{=}0.7$ & Proto~1, $r{=}1.0$ & Proto~4, $r{=}0.7$ \\
& $\hat{r}{=}11.7\%$ & $\hat{r}{\approx}16.67\%$ & $\hat{r}{=}70\%$ \\
& $p_c{=}30\%$ & $p_c{=}0$ & $p_c{\approx}0$ \\
\midrule
\multicolumn{4}{@{}l}{\textit{Class~1: $\mathcal{F}_{rec}$}} \\
DCP~\citep{lin2022dcp}
  & 71.55
  & 19.79{\scriptsize$\pm$2.42}$^\ddagger$
  & 16.25$^\ddagger$ \\
DCG$^\dagger$~\citep{zhang2025dcg}
  & 62.30
  & 12.29{\scriptsize$\pm$0.42}$^\ddagger$
  & 12.75$^\ddagger$ \\
\midrule
\multicolumn{4}{@{}l}{\textit{Class~2: cross-sample}} \\
Energy-DIMC~\citep{wang2025energydimc}
  & 96.37
  & 94.14{\scriptsize$\pm$1.07}
  & 83.26 \\
\midrule
\multicolumn{4}{@{}l}{\textit{Class~3: per-sample (C1+C2)}} \\
CRAFT-Core (Stage~1 only)
  & ---
  & 89.75
  & 54.83 \\
CRAFT (canonical)
  & \textbf{97.58}
  & \textbf{96.34}
  & \textbf{85.21} \\
\bottomrule
\end{tabular}
\end{table}

\paragraph{\texorpdfstring{$\mathcal{F}_{rec}$}{F\_rec} sub-families and the COMPLETER multi-view extension.}%
$\mathcal{F}_{rec}$ methods differ in whether their training loss
requires fully-complete samples ($|O_i|=V$) or only pairwise
co-observation ($|O_i| \geq 2$). DCP and DCG belong to the
former; their effective $q$ in
Proposition~\ref{prop:trainability} reduces to $p_c$, predicting
collapse on Setting~C ($p_c=0$) as observed in
Table~\ref{tab:isolation}. The multi-view extension of COMPLETER
used in this work instead adopts pairwise-complete training
(reconstruction and prediction losses are computed on any view
pair $(i,j)$ for which both views are observed in a given
sample). Under Min-1 (every sample retains $V{-}1$ views), each
sample contributes $\binom{V-1}{2}$ co-observed view pairs and
the relevant $q$ remains $q_2(P) \approx 1$;
Proposition~\ref{prop:trainability} therefore does not predict
collapse, and we empirically observe COMPLETER (5-seed) retains
$68.76\pm 5.34\%$ on Setting~C. We omit COMPLETER from
Table~\ref{tab:isolation} as it represents a distinct trainability
sub-family within $\mathcal{F}_{rec}$, not a counterexample to
the bound.

\paragraph{Per-baseline failure modes (CUB Protocol~4, $r{=}0.5$).}
Table~\ref{tab:collapse_cub} matches each baseline's observed
behavior at this configuration to its predicted failure mode.

\begin{table}[h]
\centering
\caption{CUB, Protocol~4, $r{=}0.5$ ($p_c{=}0$). Observed failure
modes match the predictions of
Proposition~\ref{prop:trainability} and the capability bound of
Theorem~\ref{thm:capability}. Bold = best ACC.}
\label{tab:collapse_cub}
\small
\begin{tabular}{@{}l r l@{}}
\toprule
Method & ACC (\%) & Predicted Behavior \\
\midrule
DCG        & 15.61 & Recovery gradient vanishes (Prop.~\ref{prop:trainability}) \\
ProImp     & crash & Fixed-topology crash \\
COMPLETER  & 36.43 & Partial recovery dependence \\
DCP        & 36.83 & Partial recovery dependence \\
HSACC      & 38.63{\scriptsize$\pm$1.5} & Partial recovery dependence \\
Energy-DIMC & 42.38 & No $p_c$ dependence; capability bound (Thm.~\ref{thm:capability}) \\
CRAFT      & \textbf{68.99}{\scriptsize$\pm$3.8} & Per-sample independence + variable-length fusion \\
\bottomrule
\end{tabular}
\end{table}

\paragraph{Energy-DIMC four-protocol comparison (HandWritten, $r{=}0.7$).}

\begin{table}[h]
\centering
\caption{HandWritten ($V{=}6$), $r{=}0.7$. Energy-DIMC retrains
per rate (single seed). CRAFT: one canonical attention\_mask
checkpoint with MFT, $5$-seed mean. Bold = best per protocol.}
\label{tab:energydimc_hw}
\small
\begin{tabular}{@{}l cccc@{}}
\toprule
Method & Proto.~1 & Proto.~2 & Proto.~3 & Proto.~4 \\
\midrule
Energy-DIMC & 96.37 & 95.65 & 83.98 & 83.26 \\
CRAFT
  & \textbf{97.58}\scriptsize{$\pm$0.9}
  & \textbf{97.49}\scriptsize{$\pm$0.9}
  & \textbf{86.03}\scriptsize{$\pm$2.1}
  & \textbf{85.21}\scriptsize{$\pm$2.4} \\
\bottomrule
\end{tabular}
\end{table}

CRAFT leads at every protocol on HandWritten
(Table~\ref{tab:energydimc_hw}), but the gap narrows under
stringent protocols: from ${\sim}1.2\%$ at Protocol~1 to
${\sim}1.95\%$ at Protocol~4. The narrowing is consistent with
Theorem~\ref{thm:capability}: Energy-DIMC's distributional
training signal is not $p_c$-bound, so at extreme missingness on
high-$V$ datasets where most samples retain a single
high-information view, the capability ceiling dominates and the
gap between strict-C2 architectures shrinks toward the shared
ceiling.

\subsection{Full Comparison with BURG}
\label{app:results:burg}

Under Protocol~2---BURG's inferred protocol
(\S\ref{app:details:burg})---CRAFT leads
BURG at every tested rate on CUB and stays within ${\sim}0.2\%$
of its lenient ceiling on HandWritten across $r$, while BURG
itself develops a ${\sim}15\%$ cliff between $r{=}0.5$ and
$r{=}0.7$ on HandWritten (full per-rate numbers in
Table~\ref{tab:cub_full} and Table~\ref{tab:hw6v_full}).

\subsection{Fusion Architecture Spectrum (Full Tables)}
\label{app:results:fusion_spectrum}

Table~\ref{tab:fusion_spectrum_full} reports the architectural
variant comparison summarized in \S\ref{sec:exp:main} across all
four protocols on HandWritten ($V{=}6$). The variants share
per-view encoders + Stage~1 + Stage~2 training without MFT to
isolate fusion-architecture effects, differing only in the fusion
block: (i)~Transformer with attention masking (canonical CRAFT,
strict C2); (ii)~\emph{SetMLP}, a DeepSets-style
fusion~\citep{zaheer2017deepsets} consisting of a per-view shared
MLP, a masked mean pool over observed views, and a post-pool MLP
of hidden dimension $h'$ (strict C2), evaluated at two capacity
tiers \emph{SetMLP-Light} ($h'{=}d$, matching the per-dataset
embedding dimension) and \emph{SetMLP-Match} ($h'{=}560$, fixed
across datasets); and (iii)~a zero-fill concat fusion baseline
(soft~C2). DVIMC (hard C2) is omitted because its K-Means
initialization fails when $|O_i| = 0$
(Appendix~\ref{app:details:dvimc}). Canonical CRAFT in
Table~\ref{tab:hw6v_full} adds MFT on top of (i). 5-seed mean
$\pm$ std.

\begin{table}[h]
\centering
\caption{Fusion architecture spectrum on HandWritten ($V{=}6$):
ACC (\%) under all four protocols. Bold = best per configuration.}
\label{tab:fusion_spectrum_full}
\small
\setlength{\tabcolsep}{4pt}
\begin{tabular}{@{}ll rrrr@{}}
\toprule
Architecture & $r$ & Proto.~1 & Proto.~2 & Proto.~3 & Proto.~4 \\
\midrule
\multirow{4}{*}{\shortstack[l]{Transformer\\+ attn mask\\(canonical)}}
  & 0.1 & \textbf{97.28}\scriptsize{$\pm$1.0} & \textbf{97.28}\scriptsize{$\pm$1.0} & \textbf{97.11}\scriptsize{$\pm$1.0} & \textbf{97.14}\scriptsize{$\pm$1.1} \\
  & 0.3 & \textbf{97.25}\scriptsize{$\pm$1.1} & \textbf{97.26}\scriptsize{$\pm$1.1} & \textbf{95.31}\scriptsize{$\pm$1.6} & \textbf{95.59}\scriptsize{$\pm$1.5} \\
  & 0.5 & \textbf{97.24}\scriptsize{$\pm$1.1} & \textbf{97.13}\scriptsize{$\pm$1.1} & \textbf{90.02}\scriptsize{$\pm$3.0} & \textbf{90.79}\scriptsize{$\pm$3.0} \\
  & 0.7 & \textbf{97.24}\scriptsize{$\pm$1.0} & \textbf{97.09}\scriptsize{$\pm$1.1} & \textbf{80.29}\scriptsize{$\pm$5.1} & \textbf{79.06}\scriptsize{$\pm$5.7} \\
\midrule
\multirow{4}{*}{\shortstack[l]{SetMLP-Light\\($h'{=}192$)}}
  & 0.1 & 94.80\scriptsize{$\pm$1.3} & 94.78\scriptsize{$\pm$1.3} & 93.83\scriptsize{$\pm$1.2} & 94.01\scriptsize{$\pm$1.2} \\
  & 0.3 & 94.54\scriptsize{$\pm$1.2} & 94.54\scriptsize{$\pm$1.3} & 88.23\scriptsize{$\pm$1.3} & 88.82\scriptsize{$\pm$1.3} \\
  & 0.5 & 94.37\scriptsize{$\pm$1.2} & 93.94\scriptsize{$\pm$1.3} & 78.19\scriptsize{$\pm$1.6} & 78.97\scriptsize{$\pm$1.8} \\
  & 0.7 & 94.17\scriptsize{$\pm$1.2} & 93.56\scriptsize{$\pm$1.3} & 64.38\scriptsize{$\pm$2.0} & 62.09\scriptsize{$\pm$2.3} \\
\midrule
\multirow{4}{*}{\shortstack[l]{SetMLP-Match\\($h'{=}560$)}}
  & 0.1 & 94.56\scriptsize{$\pm$1.6} & 94.53\scriptsize{$\pm$1.6} & 92.84\scriptsize{$\pm$1.5} & 92.93\scriptsize{$\pm$1.7} \\
  & 0.3 & 93.86\scriptsize{$\pm$1.6} & 93.92\scriptsize{$\pm$1.6} & 85.30\scriptsize{$\pm$1.7} & 85.89\scriptsize{$\pm$1.7} \\
  & 0.5 & 93.45\scriptsize{$\pm$1.6} & 93.11\scriptsize{$\pm$1.6} & 73.23\scriptsize{$\pm$1.6} & 73.83\scriptsize{$\pm$1.7} \\
  & 0.7 & 92.74\scriptsize{$\pm$1.6} & 92.19\scriptsize{$\pm$1.7} & 57.78\scriptsize{$\pm$1.9} & 55.15\scriptsize{$\pm$1.9} \\
\midrule
\multirow{4}{*}{\shortstack[l]{Concat fusion\\(fixed input,\\soft C2)}}
  & 0.1 & 94.58\scriptsize{$\pm$0.3} & 94.53\scriptsize{$\pm$0.3} & 93.31\scriptsize{$\pm$0.3} & 93.38\scriptsize{$\pm$0.4} \\
  & 0.3 & 94.17\scriptsize{$\pm$0.2} & 94.27\scriptsize{$\pm$0.3} & 86.13\scriptsize{$\pm$0.4} & 86.89\scriptsize{$\pm$0.2} \\
  & 0.5 & 93.98\scriptsize{$\pm$0.3} & 93.60\scriptsize{$\pm$0.4} & 74.09\scriptsize{$\pm$0.4} & 75.04\scriptsize{$\pm$0.4} \\
  & 0.7 & 93.63\scriptsize{$\pm$0.3} & 92.93\scriptsize{$\pm$0.3} & 58.72\scriptsize{$\pm$0.6} & 56.55\scriptsize{$\pm$0.7} \\
\bottomrule
\end{tabular}
\end{table}

\textbf{Three findings.}
\emph{(1)~Architecture dominance.} The canonical Transformer
dominates both SetMLP tiers and the fixed-input concat fusion
baseline across every
(protocol, rate) configuration, with the gap widening monotonically under
stringent regimes; at the hardest tested condition the three
strict-C2 architectures and the soft-C2 violator span
${\sim}22\%$.
\emph{(2)~Capacity is not the confound.} Doubling SetMLP's hidden
size from $h'{=}192$ to $h'{=}560$ (parameters comparable to the
concat fusion baseline) does not narrow the gap to the
Transformer; the larger SetMLP-Match \emph{underperforms} the
smaller SetMLP-Light by $5{-}7\%$ under hard missingness. Putting
more parameters into the wrong architecture moves performance
\emph{away} from CRAFT, ruling out the capacity-confound
hypothesis.
\emph{(3)~Soft-C2 leakage scales with $r$.} Within fixed-cardinality
concat fusion, the zero-fill leakage compounds monotonically:
SetMLP-Light's lead over the fixed-input concat fusion baseline
grows from a fraction of a point at $r{=}0.1$ to roughly
$5$--$6\%$ at $r{=}0.7$ under
Protocol~4. The two strict-C2 fusion families preserve graceful
degradation; the soft-C2 fixed-cardinality family does not.

The Transformer's structural advantage---query-dependent weighted
aggregation across observed views---realizes capability that
shared-MLP-then-pool aggregation cannot match within the
C1+C2-stable class, validating the empirical capability axis of
Theorem~\ref{thm:capability}.

\textbf{Hard-C2 endpoint.}
DVIMC, evaluated on the same HW $4 \times 4$ (protocol, rate)
grid, exhibits both predicted hard-C2 failure modes (cross-sample
instability and K-Means crash at $|O_i| = 0$);
empirical details in Appendix~\ref{app:details:dvimc}. The same
canonical CRAFT checkpoint produces ACC in $[79, 97]\%$ on every
configuration of the same grid (Table~\ref{tab:hw6v_full}) with no crashes.


\section{Ablation and Sensitivity Details}
\label{app:ablation}

\subsection{Full Ablation Tables}
\label{app:ablation:full}

All ablation experiments use $3$~seeds and report Protocol~4 ACC
(mean$\pm$std) unless otherwise noted. Each column removes one
component from the within-table ``Full'' reference. All rows use
the v4 learnable-placeholder mask strategy to hold mask handling
fixed across ablations, so absolute values are not directly
comparable to the canonical attention-masking results in
Table~\ref{tab:main}; relative ablation pattern is preserved
(see Appendix~\ref{app:ablation:mask} for mask strategy).

\begin{table}[h]
\centering
\caption{Loss and training-stage ablation (Protocol~4 ACC~\%).
Bold = ablation column outperforming Full.}
\label{tab:ablation_loss_full}
\small
\begin{tabular}{@{}l r cccccc@{}}
\toprule
& $r$
& Full
& \shortstack{No\\Stage~2}
& \shortstack{No\\Entropy}
& \shortstack{Recon\\Only}
& \shortstack{Repr\\Only}
& \shortstack{No\\Pretrain} \\
\midrule
\multirow{4}{*}{HW}
  & 0.1 & 94.23\scriptsize{$\pm$1.1} & 94.20\scriptsize{$\pm$1.0}
        & 94.12\scriptsize{$\pm$1.0} & 93.66\scriptsize{$\pm$1.5}
        & 16.01\scriptsize{$\pm$2.4} & 62.45\scriptsize{$\pm$3.8} \\
  & 0.3 & 87.63\scriptsize{$\pm$0.9} & 87.09\scriptsize{$\pm$0.9}
        & 86.96\scriptsize{$\pm$0.9} & 86.90\scriptsize{$\pm$3.8}
        & 12.56\scriptsize{$\pm$0.4} & 53.56\scriptsize{$\pm$5.3} \\
  & 0.5 & 74.42\scriptsize{$\pm$1.3} & 73.62\scriptsize{$\pm$1.5}
        & 73.47\scriptsize{$\pm$1.6} & 74.22\scriptsize{$\pm$5.8}
        & 11.33\scriptsize{$\pm$0.1} & 43.15\scriptsize{$\pm$6.4} \\
  & 0.7 & 54.27\scriptsize{$\pm$1.9} & 53.67\scriptsize{$\pm$1.9}
        & 53.20\scriptsize{$\pm$1.7} & 54.83\scriptsize{$\pm$6.8}
        & 10.46\scriptsize{$\pm$0.1} & 31.40\scriptsize{$\pm$4.4} \\
\midrule
\multirow{3}{*}{CUB}
  & 0.1 & 76.86\scriptsize{$\pm$3.7} & 73.84\scriptsize{$\pm$1.7}
        & 74.26\scriptsize{$\pm$1.8} & \textbf{79.99}\scriptsize{$\pm$4.8}
        & 50.74\scriptsize{$\pm$3.2} & 51.80\scriptsize{$\pm$1.6} \\
  & 0.3 & 68.62\scriptsize{$\pm$4.4} & 65.63\scriptsize{$\pm$1.9}
        & 66.03\scriptsize{$\pm$2.1} & \textbf{73.93}\scriptsize{$\pm$4.8}
        & 31.60\scriptsize{$\pm$1.8} & 49.34\scriptsize{$\pm$1.3} \\
  & 0.5 & 61.54\scriptsize{$\pm$5.7} & 57.99\scriptsize{$\pm$2.5}
        & 58.42\scriptsize{$\pm$2.7} & \textbf{69.28}\scriptsize{$\pm$5.5}
        & 15.34\scriptsize{$\pm$1.1} & 47.49\scriptsize{$\pm$3.3} \\
\bottomrule
\end{tabular}
\end{table}

\textbf{Stage~1 pre-training is essential.}\quad
Skipping Stage~1 entirely (\textsc{No Pretrain}) drops accuracy by
$23{-}34\%$ on HW and roughly $14\%$ on CUB at the hardest tested
rates (Table~\ref{tab:ablation_loss_full}). Stage~1's role is to
learn representations that pre-align with the Stage~2 cluster
head; without it, Stage~2 must drive both representation learning
and clustering simultaneously, and the clustering objective alone
is insufficient under missing-view distributions.

\textbf{Reconstruction is the load-bearing Stage~1 signal.}\quad
Replacing Stage~1 with consistency-loss-only training
(\textsc{Repr Only}, $\lambda_{\mathrm{recon}}{=}0$) collapses
performance to near-random on HW ($\leq 16\%$ on $K{=}10$, where
random is $10\%$) and to $15{-}51\%$ on CUB. The reconstruction
loss provides the bulk of the cluster-relevant gradient; the
consistency loss alone, despite being per-sample and C1-compliant,
cannot anchor the representation in the absence of a per-view
reconstruction target. This is consistent with the
information-theoretic motivation in
\S\ref{sec:method:training}: $Y$ is the statistical bridge
between views, and per-view reconstruction maximizes a lower
bound on $I(h; Y)$ (Proposition~\ref{prop:bridge}).

\textbf{Consistency loss helps high-$V$, hurts low-$V$.}\quad
Removing the consistency loss (\textsc{Recon Only},
$\lambda_{\mathrm{repr}}{=}0$) leaves accuracy approximately
unchanged on HW ($\leq 1\%$ gap to Full at every $r$) but
\emph{improves} accuracy on CUB by $3{-}8\%$
(Table~\ref{tab:ablation_loss_full}). The asymmetry tracks view
count: at $V{=}6$, the SimSiam-style consistency loss reinforces
redundant signal across multiple compatible views; at $V{=}2$
with heterogeneous views, the same loss over-constrains the
representation by enforcing alignment between modalities that
carry distinct information. Consistency is therefore a
dataset-dependent regularizer, not a load-bearing component.

\textbf{Stage~2 components contribute marginally.}\quad
Removing the Stage~2 fine-tuning (\textsc{No Stage~2}) or its
entropy regularizer (\textsc{No Entropy}) reduces accuracy by at
most ${\sim}3.5\%$ at every condition tested
(Table~\ref{tab:ablation_loss_full}). Both are useful but not
load-bearing in the C1+C2 sense---they contribute to absolute
accuracy without altering the protocol-stable behavior.

\textbf{MFT helps $V \geq 3$, hurts $V = 2$.}\quad
Masked fine-tuning (\S\ref{sec:method:mft}) yields modest
single-digit improvements on HW and MultiFashion at high missing
rates and a small degradation on CUB. The asymmetry follows the
view-subset ensemble interpretation of MFT
(Appendix~\ref{app:proofs:mft}): at $V \geq 3$, MFT mixes many
distinct view subsets and acts as a regularizer; at $V = 2$, only
the full-view subset contributes and random view dropping reduces
to noise injection.

\subsection{Theory-only Core (CRAFT-Core)}
\label{app:ablation:core}

CRAFT-Core retains only what Theorem~\ref{thm:fr_necessity}
mandates---C1 + C2 with attention masking and a Stage~1
reconstruction loss---removing SimSiam consistency, entropy and
KL regularizers, MFT, and the Stage~2 cluster-head fine-tuning.
A single Core checkpoint per dataset is evaluated under all four
protocols at four rates with attention masking applied at
inference; Table~\ref{tab:ablation_core} reports the result.
Core escapes the trainability bound at every configuration on both
datasets and exceeds every $\mathcal{F}_{rec}$ baseline at every
configuration. The gap to canonical CRAFT decomposes into a roughly
uniform component on lenient-to-moderate configurations---attributable
to the four discarded engineering components collectively---and
a regime-dependent component that grows with $V$ and $r$,
reaching ${\sim}33\%$ on HW under $V{=}6$, $r{=}0.7$,
$p_c{\to}0$, where MFT trains the model on the few-view subsets
the canonical run encounters at inference.

\begin{table}[h]
\centering
\caption{CRAFT-Core ACC (\%) on CUB and HandWritten under all
four protocols. 3-seed mean $\pm$ std.}
\label{tab:ablation_core}
\small
\begin{tabular}{@{}ll rrrr@{}}
\toprule
Dataset & $r$ & Proto.~1 & Proto.~2 & Proto.~3 & Proto.~4 \\
\midrule
\multirow{4}{*}{CUB}
  & 0.1 & 79.96\scriptsize{$\pm$3.5} & 79.79\scriptsize{$\pm$3.4} & 78.36\scriptsize{$\pm$2.9} & 79.01\scriptsize{$\pm$3.6} \\
  & 0.3 & 76.93\scriptsize{$\pm$3.0} & 77.40\scriptsize{$\pm$3.6} & 73.58\scriptsize{$\pm$2.6} & 72.34\scriptsize{$\pm$3.0} \\
  & 0.5 & 74.30\scriptsize{$\pm$2.9} & 74.33\scriptsize{$\pm$3.0} & 69.98\scriptsize{$\pm$2.6} & 66.93\scriptsize{$\pm$3.2} \\
  & 0.7 & 71.47\scriptsize{$\pm$2.4} & 70.91\scriptsize{$\pm$2.2} & 67.19\scriptsize{$\pm$2.1} & 66.93\scriptsize{$\pm$3.2} \\
\midrule
\multirow{4}{*}{HW}
  & 0.1 & 93.63\scriptsize{$\pm$0.9} & 93.60\scriptsize{$\pm$1.0} & 92.39\scriptsize{$\pm$1.6} & 92.29\scriptsize{$\pm$1.7} \\
  & 0.3 & 93.35\scriptsize{$\pm$1.1} & 93.27\scriptsize{$\pm$1.2} & 85.08\scriptsize{$\pm$2.8} & 85.46\scriptsize{$\pm$3.4} \\
  & 0.5 & 92.92\scriptsize{$\pm$1.5} & 92.68\scriptsize{$\pm$1.6} & 71.90\scriptsize{$\pm$3.8} & 72.34\scriptsize{$\pm$3.8} \\
  & 0.7 & 92.81\scriptsize{$\pm$1.8} & 92.03\scriptsize{$\pm$1.7} & 54.50\scriptsize{$\pm$3.3} & 51.94\scriptsize{$\pm$3.2} \\
\bottomrule
\end{tabular}
\end{table}

\subsection{Sensitivity Figures}
\label{app:ablation:sensitivity}

\begin{figure}[h]
\centering
\includegraphics[width=\textwidth]{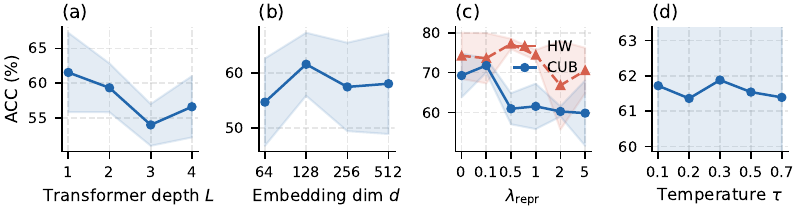}
\caption{Sensitivity to four key hyperparameters under Protocol~4
$r{=}0.5$. \textbf{(a)}~Depth: $L{=}1$ optimal; deeper fusion
amplifies overfitting. \textbf{(b)}~Embedding dim: $d{=}128$
optimal for CUB ($n{=}600$); larger $d$ overfits.
\textbf{(c)}~$\lambda_{\mathrm{repr}}$: dataset-dependent (HW
optimum at $0.5$, CUB at $0.1$, consistent with the asymmetry in
Table~\ref{tab:ablation_loss_full}). \textbf{(d)}~Temperature: insensitive
($<\!0.6\%$ spread across two orders of magnitude). Shaded bands:
$\pm 1$ std over 3 seeds.}
\label{fig:sensitivity}
\end{figure}

Figure~\ref{fig:sensitivity} reports the four sweeps at the
canonical evaluation point (Protocol~4, $r{=}0.5$). We re-ran
each sweep at
$r \in \{0.1, 0.3, 0.7\}$ and confirmed that the optimal settings
remain stable: $L{=}1$ is optimal at every tested missing rate;
$d{=}128$ is optimal for CUB at every rate (larger $d$ overfits
on $n{=}600$); the $\lambda_{\mathrm{repr}}$ dataset-dependence
(HW optimum near $0.5$, CUB optimum near $0.1$) is preserved
across rates; temperature $\tau$ remains within $0.6\%$ across
two orders of magnitude. The relative ordering of configurations
in each panel does not change with $r$, so the canonical
configuration in Appendix~\ref{app:details:hyperparam} is
appropriate at all four nominal rates.

\subsection{Missing-View Strategy Comparison}
\label{app:ablation:mask}

\begin{table}[h]
\centering
\caption{Missing-view strategy comparison (CUB, Protocol~4
ACC~\%). All strategies use the same complete-trained checkpoint;
they differ only in how missing-view tokens are handled at
inference. Attention masking (canonical CRAFT, strict C2) wins
across all rates. Bold = best per row.}
\label{tab:ablation_mask}
\small
\begin{tabular}{@{}r cccc@{}}
\toprule
$r$ & Attn Mask & Learnable & Zero & Mean \\
\midrule
0.1 & \textbf{77.29}\scriptsize{$\pm$4.1}
    & 76.86\scriptsize{$\pm$3.7}
    & 76.36\scriptsize{$\pm$3.4}
    & 75.77\scriptsize{$\pm$2.6} \\
0.3 & \textbf{69.78}\scriptsize{$\pm$5.4}
    & 68.62\scriptsize{$\pm$4.4}
    & 66.40\scriptsize{$\pm$3.3}
    & 65.03\scriptsize{$\pm$0.2} \\
0.5 & \textbf{63.84}\scriptsize{$\pm$7.9}
    & 61.54\scriptsize{$\pm$5.7}
    & 58.13\scriptsize{$\pm$4.4}
    & 55.89\scriptsize{$\pm$0.9} \\
\bottomrule
\end{tabular}
\end{table}

In Table~\ref{tab:ablation_mask}, the strict-C2 strategy
(attention masking) consistently dominates soft-C2 alternatives
(learnable, zero, mean), with the gap widening at higher missing
rates. This is consistent with
Proposition~\ref{prop:attn_bound}: under attention masking,
absent views contribute zero attention weight exactly; under
soft-C2, residual leakage of $O(k/(V-k))$ persists and degrades
the fused representation.

\subsection{Aggregation Strategy}
\label{app:ablation:agg}

\begin{table}[h]
\centering
\caption{Aggregation strategy comparison (Protocol~4 ACC~\%).
Bold = best per row.}
\label{tab:ablation_agg}
\small
\begin{tabular}{@{}l r ccc@{}}
\toprule
Dataset & $r$ & CLS & Mean & Max \\
\midrule
\multirow{4}{*}{HW}
  & 0.1 & \textbf{94.23}\scriptsize{$\pm$1.1} & 93.85\scriptsize{$\pm$0.1}
        & 93.45\scriptsize{$\pm$0.6} \\
  & 0.3 & 87.63\scriptsize{$\pm$0.9} & \textbf{88.72}\scriptsize{$\pm$0.5}
        & 84.85\scriptsize{$\pm$2.1} \\
  & 0.5 & 74.42\scriptsize{$\pm$1.3} & \textbf{75.61}\scriptsize{$\pm$1.5}
        & 71.33\scriptsize{$\pm$3.7} \\
  & 0.7 & \textbf{54.27}\scriptsize{$\pm$1.9} & 53.62\scriptsize{$\pm$2.7}
        & 51.69\scriptsize{$\pm$4.0} \\
\midrule
\multirow{3}{*}{CUB}
  & 0.1 & \textbf{76.86}\scriptsize{$\pm$3.7} & 75.50\scriptsize{$\pm$4.0}
        & 76.79\scriptsize{$\pm$3.0} \\
  & 0.3 & 68.62\scriptsize{$\pm$4.4} & 68.10\scriptsize{$\pm$3.7}
        & \textbf{68.98}\scriptsize{$\pm$3.0} \\
  & 0.5 & 61.54\scriptsize{$\pm$5.7} & \textbf{61.97}\scriptsize{$\pm$4.2}
        & 61.92\scriptsize{$\pm$2.8} \\
\bottomrule
\end{tabular}
\end{table}

In Table~\ref{tab:ablation_agg}, the three aggregation strategies
(CLS / Mean / Max) cluster within $1$--$3\%$ of each other,
indicating that the C1+C2 properties are carried by the attention
mechanism itself, not by the choice of pooling head.

\subsection{Attention Analysis}
\label{app:ablation:attention}

On CUB ($V{=}2$), with one view missing the learnable placeholder
receives approximately $38\%$ of total CLS attention ($76\%$ of
the uniform share), higher than the $57\%$ ratio observed on
HandWritten ($V{=}6$) because the shorter sequence leaves fewer
real tokens to redistribute weight to. Under canonical attention
masking, the missing position receives exactly zero weight
regardless of the number of views
(Proposition~\ref{prop:attn_bound}, $\varepsilon = -\infty$);
Figure~\ref{fig:attn_reallocation} visualizes the placeholder
attention reallocation pattern on HandWritten as a function of
the number of missing views.

\begin{figure}[h]
  \centering
  \includegraphics[width=0.7\textwidth]{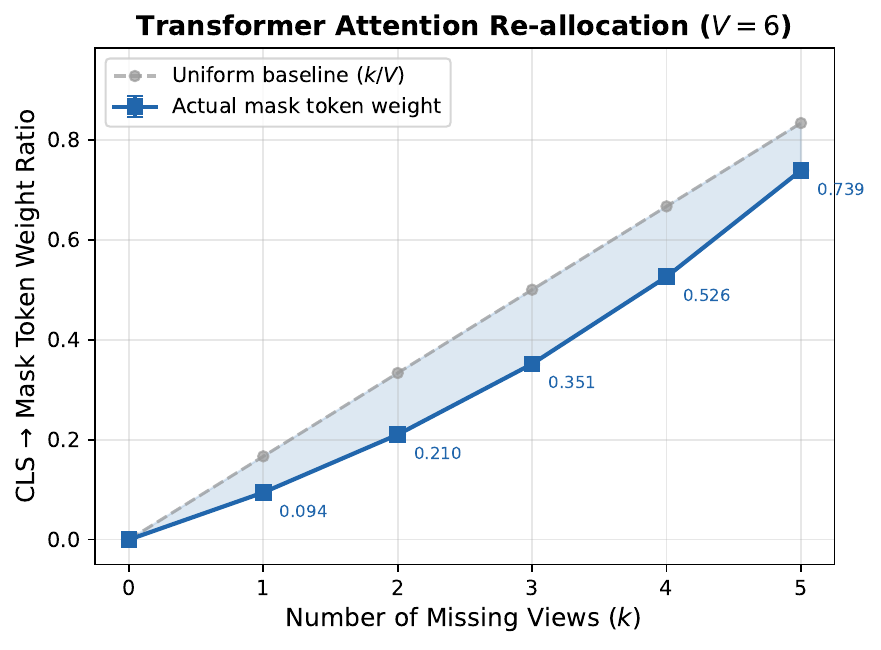}
  \caption{CLS$\to$placeholder attention weight on HandWritten
    ($V{=}6$, learnable-placeholder strategy) vs.\ the number of
    missing views $k$. Trained model (solid) vs.\ uniform
    baseline $k/V$ (dashed).}
  \label{fig:attn_reallocation}
\end{figure}


\section{Extended Discussion}
\label{app:discussion}

\subsection{Implications and Practical Recommendations}
\label{app:discussion:implications}

Cross-view reconstruction modules are not inherently harmful:
under the lenient protocols that dominate current IMVC practice,
the reconstruction signal implicitly maximizes a lower bound on
cluster-relevant information (Appendix~\ref{app:proofs:bridge})
and provides genuinely useful supervision. Our analysis identifies
\emph{when} this signal vanishes and \emph{why} the consequence is
structural (Theorem~\ref{thm:fr_necessity}). Three practical
recommendations follow: IMVC evaluations should report $\hat{r}$
and $p_c$ alongside the nominal missing rate $r$, without which
cross-study comparisons are not meaningful; practitioners deploying
$\mathcal{F}_{rec}$ methods on production data should verify
$p_c \geq 1\%$ on their target distribution; and future IMVC
methods seeking cross-protocol stability should be designed around
the C1+C2 conditions of \S\ref{sec:theory}, which are
structurally required within $\mathcal{F}_{rec}$ and sufficient
(under the capability ceiling of
Theorem~\ref{thm:capability}) outside it.

\subsection{Reproducibility Statement}
\label{app:discussion:repro}

Code, configurations, and a four-protocol evaluation toolkit are
released with the paper; CRAFT inference under arbitrary
missing-view patterns uses only PyTorch's standard
\texttt{MultiheadAttention} key-padding-mask interface, with no
custom module required.

\end{document}

%% file: section_abstract.tex
%
%
%
%
%

\begin{abstract}
Standard IMVC evaluation retrains separate models for different missing-data configurations. We show that this paradigm obscures a fundamental vulnerability: missing rate alone is insufficient to characterize data incompleteness. Specifically, we show that protocols with identical nominal missing rates can differ by up to $50\times$ in their proportion of fully observed samples, inducing drastically different learning regimes. We formalize this phenomenon as \emph{incompleteness divergence}, providing measures that capture structural disparities across missing-data protocols. We further prove that for a broad class of reconstruction-based objectives, learning becomes structurally ill-posed when the proportion of complete samples falls below a critical threshold, leading to near-random performance. To bypass this theoretical bound, we propose CRAFT (Complete-data Robust Attention-masked Fusion Transformer). CRAFT shifts the burden of robustness from the loss function to the architecture via two key properties: (i) per-sample independence, which removes reliance on complete-sample co-occurrence, and (ii) mask-aware variable-length fusion, which aggregates only observed views through attention masking. This design allows a single model, trained once on complete data, to generalize to diverse missing patterns at inference time without retraining. Extensive experiments on seven benchmarks show that CRAFT matches or outperforms per-configuration baselines while reducing training overhead by $8.8\times$, demonstrating that robustness to missing data can be achieved as an inherent architectural property.
Code (CRAFT) and our \texttt{imvc-audit} toolkit are available at \href{https://anonymous.4open.science/r/CRAFT-BF80/}{this link} and \href{https://anonymous.4open.science/r/imvc-audit-8263/}{this link}.
\end{abstract}

%% file: section_1_intro.tex
\section{Introduction}
\label{sec:intro}

\begin{figure}[t]
  \centering
  \includegraphics[width=\linewidth]{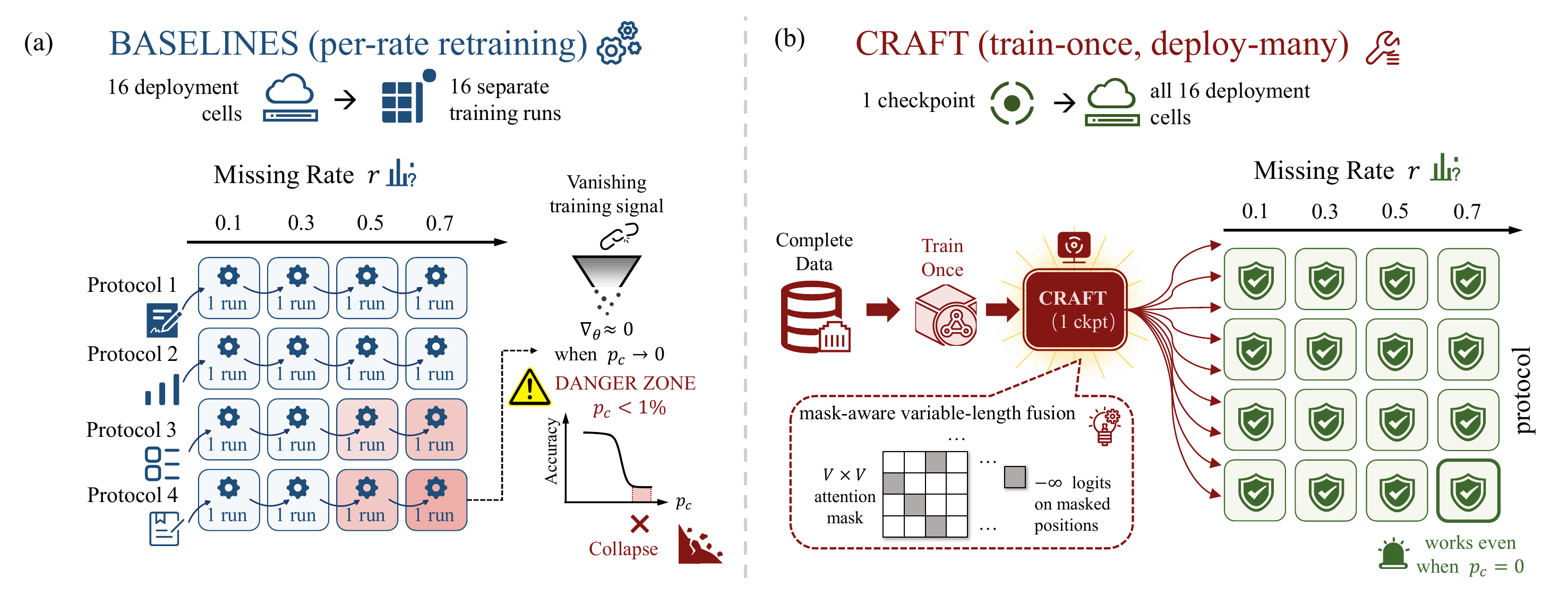}
  \caption{Per-configuration retraining vs.\ train-once deployment.
    \textbf{(a)}~Baselines retrain a separate model per
    $(\text{protocol}, r)$ configuration; below $p_c{=}1\%$ the
    gradient signal vanishes and training collapses.
    \textbf{(b)}~CRAFT trains once on complete data and covers
    all $16$ configurations (including $p_c \to 0$) via
    attention masking on the fusion layer---a single checkpoint
    replaces sixteen separate training runs.}
  \label{fig:paradigm}
\end{figure}

Real-world systems frequently capture data from diverse sensors or modalities, making multi-view representations a natural fit for complex data mining tasks. Nevertheless, the complete-data assumption underlying most multi-view clustering (MVC) methods rarely holds in practice: sensor failures, privacy constraints, and transmission errors routinely leave views unobserved for large subsets of samples~\citep{wen2022survey}. This ubiquitous incompleteness has catalyzed the field of Incomplete Multi-View Clustering (IMVC).

The dominant paradigm in IMVC relies on cross-view reconstruction, where missing views are recovered to facilitate downstream clustering. Representative methods train explicit recovery modules that use complete samples as the primary source of cross-view supervision~\citep{lin2021completer,zhang2025dcg,hsacc2025}. However, we identify two fundamental limitations in this paradigm that have been largely overlooked.

\textbf{The Per-Configuration Retraining Trap}. Current IMVC research evaluates each (missing-pattern, missing-rate) configuration with a separately retrained model~\citep{lin2022dcp,zhang2025dcg,hsacc2025}. This convention lacks practical justification: in real-world deployments, the missing-data distribution is often non-stationary or unknown. Treating a single deployment as multiple independent training problems is not only computationally prohibitive but also obscures the fragility of existing architectures.

\textbf{The $p_c$ Trainability Bound}. We show that the success of reconstruction-based methods is governed by a hidden variable: the complete-sample proportion ($p_c$). Because reconstruction losses derive supervision primarily from cross-view co-occurrence, a low $p_c$ starves the optimizer of gradient signals. We further show that different evaluation protocols realizing the same nominal missing rate can differ in $p_c$ by up to $50\times$. Consequently, a method that appears robust under one protocol may collapse to near-random accuracy under another, a phenomenon we term \emph{incompleteness divergence}. We prove that for traditional reconstruction-based objectives, this trainability collapse is structurally unavoidable.

To address these shortcomings, we propose CRAFT (Complete-data Robust Attention-masked Fusion Transformer), an architecture designed to escape this trainability bound by construction. CRAFT satisfies two structural properties. \textbf{Per-sample independence:} each sample's representation is computed only from its observed views and shared parameters, without complete-sample co-occurrence. \textbf{Mask-aware variable-length fusion:} missing views are excluded from internal computation via attention masking rather than being zero-padded or hallucinated. This decoupling from $p_c$ enables a ``Train-Once, Deploy-Many'' paradigm (Figure~\ref{fig:paradigm}): a single CRAFT checkpoint trained on complete data covers all sixteen $(\text{protocol}, r)$ configurations at inference.

Our main contributions are summarized as follows.
\textbf{(1) Formalization of Incompleteness Divergence.} We introduce the effective missing rate ($\hat{r}$) and complete-sample proportion ($p_c$) to reveal structural disparities across IMVC protocols previously obscured by nominal missing rates.
\textbf{(2) Theoretical Analysis of Collapse.} We prove that reconstruction-based learning is structurally ill-posed under low $p_c$, providing a rigorous explanation for why existing methods fail in sparse-data regimes.
\textbf{(3) The CRAFT Architecture.} We propose a Transformer-based framework that achieves robust IMVC through per-sample independence and mask-aware fusion, eliminating the need for per-configuration retraining.
\textbf{(4) Empirical Validation.} Extensive experiments on seven benchmarks demonstrate that a single CRAFT model matches or surpasses per-configuration retrained baselines while reducing training overhead by $8.8\times$.

%% file: section_2_background.tex
\section{Related Work}\label{sec:bg:imvc}

Incomplete multi-view clustering has a long pre-deep
history---early methods addressed the problem through low-rank
matrix factorization~\citep{li2014pvc,hu2018daimc,shao2015mic},
incomplete multi-kernel learning~\citep{liu2017mkkmik},
spectral and graph-based clustering~\citep{wang2019pic,zhao2016img},
and subspace alignment with missing-view
inferring~\citep{wen2019ueaf}, each compensating for missing
entries through a structural prior on the recovered
representation. More recent deep methods inherit the
cross-view supervision idea but train the representation
end-to-end; we organize them by \emph{how the supervisory signal
is extracted from incomplete data} into three classes (Class~1,
Class~2, Class~3 below) that match the structure of our
subsequent analysis.

\textbf{Class~1: cross-view reconstruction ($\mathcal{F}_{rec}$).} The dominant paradigm trains an explicit cross-view recovery module---predicting one view from another, or reconstructing missing views from observed ones---using complete samples as the source of cross-view supervision. COMPLETER~\citep{lin2021completer}
and its TPAMI extension DCP~\citep{lin2022dcp} establish the
template; subsequent methods refine the recovery mechanism through
cognitive deep networks~\citep{wen2020cdimcnet}, contrastive
multi-level feature learning~\citep{xu2022mflvc}, generative
recovery via GANs and diffusion~\citep{wang2021gpmvc,zhang2025dcg},
prototype-based imputation~\citep{li2023proimp}, hierarchical
semantic alignment~\citep{hsacc2025}, attention-based partial deep
clustering~\citep{xu2023apadc}, and recent Bayesian
extensions~\citep{burg2025}. A common structural trait unites this
family: the reconstruction loss is computed over pairs of views
co-observed for the same sample, so the gradient flows exclusively
through samples retaining at least two views. We denote this
family $\mathcal{F}_{rec}$.

\textbf{Class~2: cross-sample and distributional methods.}
A second class extracts supervisory signal that is not strictly
per-sample: DSIMVC~\citep{tang2022dsimvc} imputes from latent-space
nearest neighbors of \emph{other} samples,
DSMVC~\citep{tang2022dsmvc} reweights views to mitigate
performance degradation when adding views,
SURE~\citep{yang2023sure} combines cross-sample contrastive
alignment with noise-robust losses for partially view-aligned and
sample-missing settings, and
Energy-DIMC~\citep{wang2025energydimc} aligns view distributions
through energy-based models. All four compute their training signal
through cross-sample or cross-view distributional relationships
rather than within-sample co-observation, placing them outside the
$\mathcal{F}_{rec}$ trainability bound; their failure modes are
characterized by a capability axis rather than a trainability
axis.

\textbf{Class~3: per-sample methods.} A third class avoids cross-view reconstruction entirely: each sample's representation is computed from its own observed views
and shared parameters, with no dependence on other samples'
observation patterns. DVIMC~\citep{chen2025dvimc} uses a
variational autoencoder with a Wasserstein-barycenter mixture for
per-sample latent fusion;
MvCLN~\citep{yang2021mvcln},
FreeCSL~\citep{freecsl2025},
and I2MVC~\citep{i2mvc2025}
similarly fuse only observed views without cross-view recovery.
CRAFT belongs to this class, distinguished by
attention-masked variable-length fusion.
Outside clustering, attention-based fusion of observed-only
modalities is established in supervised multimodal
learning~\citep{ma2022multimodal,he2022mae}, where the label $Y$
serves as a statistical bridge that unsupervised IMVC must
instead recover from cross-view co-observation alone.

%% file: section_3_problem.tex
\section{Motivation and Insights}
\label{sec:protocol}

\paragraph{Notation.} Given a dataset of $n$ samples with up to $V$ views, to be partitioned into $K$ latent clusters $Y \in \{1, \ldots, K\}$: sample $i$'s feature in view-$v$ is $x_v^{(i)} \in \mathbb{R}^{d_v}$, observed if and only if the indicator $M_v^{(i)} \in \{0,1\}$ equals $1$. We write $O_i := \{v : M_v^{(i)} = 1\}$ for sample $i$'s observed-view set and call $i$ \emph{complete} when $|O_i| = V$. The protocol controls missingness through a single nominal rate $r \in [0, 1)$; \S\ref{sec:protocol:vars} characterizes the realized statistics on $M$.

\subsection{Generalized Evaluations}
\label{sec:protocol:vars}

The two protocol-induced statistics introduced in
\S\ref{sec:intro}---the effective missing rate $\hat{r}$ and
the complete-sample proportion $p_c$---admit closed forms over the
mask matrix $M$:
\begin{equation}
\hat{r} \;=\; \frac{1}{nV}\sum_{i=1}^{n}\sum_{v=1}^{V}\bigl(1 - M_v^{(i)}\bigr),
\qquad
p_c \;=\; \frac{1}{n}\sum_{i=1}^{n}\prod_{v=1}^{V} M_v^{(i)}.
\label{eq:metrics}
\end{equation}
Both summarize the protocol's effect on the data but play distinct
causal roles, sketched informally below.

\textbf{$p_c$ controls trainability.}\quad
Methods whose training signal is computed over pairs of views
co-observed within the same sample receive gradient information
only from samples with $|O_i| \geq 2$. As $p_c$ shrinks, the
expected per-step gradient diminishes proportionally and training
fails to converge to non-trivial accuracy.

\textbf{$\hat{r}$ controls the capability ceiling.}\quad
Even a perfectly trained method is bounded by the information
available in observed views, and that bound depends on view--sample
coverage, summarized by $\hat{r}$. The two axes are independent:
two protocols on the same dataset can share $\hat{r}$ (identical
capability ceiling) yet differ substantially in $p_c$ (different
trainability). We refer to the joint $(\hat{r}, p_c)$
characterization as \emph{incompleteness divergence}, since two
protocols realizing the same nominal $r$ may diverge sharply
along either axis.

\subsection{Four Evaluation Protocols and Up-to-\texorpdfstring{$50\times$}{50x} Divergence in $p_c$}
\label{sec:protocol:four}

Existing IMVC protocols (Protocols~1 and~2) reach low $p_c$ only
at very low effective missing rates ($\hat{r} \leq 1/V$)---for
instance, Protocol~1 at $r{=}1$ pins $\hat{r}$ to
$1/V \approx 16.67\%$ on $V{=}6$. This $\hat{r}$ range is too
narrow for current evaluations to reach the $p_c < 1\%$
regime where $\mathcal{F}_{rec}$ trainability is at stake. We
therefore introduce two complementary new protocols: Protocol~3
uses per-view independent masking with a one-hot guarantee,
letting $p_c$ decay as $(1-r)^V$ while $\hat{r}$ rises with $r$;
Protocol~4 uses entry-wise Bernoulli masking with parameters
chosen to drive $p_c$ to zero. The two masking mechanisms are
independent; a single new protocol cannot cover both.

Three orthogonal design choices distinguish the protocols used in
the IMVC literature~\citep{lin2021completer,lin2022dcp,hsacc2025}:
\begin{itemize}
\itemsep -1pt
\item \emph{Granularity}: whether masking decisions are made per
sample (delete entire view-subsets from selected samples) or
\emph{entry-wise} (delete each entry of $M$ independently);
\item \emph{Protected view}: whether each sample retains at least
one view by construction (some implementations enforce a
``protected view''; others allow $|O_i| = 0$);
\item \emph{One-hot guarantee}: whether at least one view of each
sample is guaranteed observed via an explicit constraint, rather
than as a statistical consequence of the masking distribution.
\end{itemize}

These choices yield four canonical protocols
(Table~\ref{tab:protocols}), summarized by their
limiting $\hat{r}$ and $p_c$ as $n \to \infty$ (derivations in
Appendix~\ref{app:protocol:formulas}, Monte Carlo verification in
Appendix~\ref{app:protocol:mc}):

\begin{table}[!t]
\centering
\caption{Four evaluation protocols and their limiting hidden
  variables ($\hat{r}$, $p_c$) as $n \to \infty$.
  Protocols~1--2 keep $p_c$ moderate (lenient regime);
  Protocols~3--4 drive $p_c \to 0$ (stringent regime).
  $V$ is the total view count and $r$ the nominal missing rate.}
\label{tab:protocols}
\setlength{\tabcolsep}{4pt}
\resizebox{0.85\linewidth}{!}{%
\begin{tabular}{@{}c l c c c@{}}
\toprule
Protocol & Mechanism & Granularity & $\hat{r}$ & $p_c$ \\
\midrule
1 & Sample-level deletion (protected) & sample
  & $r/V$ & $1 - r$ \\
2 & Entry-wise Bernoulli (protected) & entry
  & $r/V$ & $\bigl(1 - r/(V{-}1)\bigr)^{V-1}$ \\
3 & Per-view independent (one-hot) & entry
  & $r - r^V/V$ & $(1 - r)^V$ \\
4 & Entry-wise Bernoulli (protected, $V$-scaled) & entry
  & $\min\bigl(r,\, (V{-}1)/V\bigr)$
  & $\max\bigl(0,\, (1 - Vr/(V{-}1))^{V-1}\bigr)$ \\
\bottomrule
\end{tabular}%
}
\end{table}

\textbf{What drives the divergence.}\quad
Two compounding asymmetries make the four protocols realize
sharply different $p_c$ at the same nominal $r$: entry-wise
Bernoulli masking (Protocols~2--4) drives $p_c$ as a power of
$V$, while a per-sample protection clause (Protocols~1 and~2)
keeps every sample with at least one observed view in
expectation. The result, visible in
Figure~\ref{fig:protocol_pc}, partitions the family into a
\emph{lenient} regime (Protocols~1, 2; $p_c \geq 30\%$ throughout)
and a \emph{stringent} regime (Protocols~3, 4), with worst-case
spread $\sim50\times$ at $V{=}6$, $r{=}0.5$.\footnote{The 50$\times$
phenomenon emerges only at $V \geq 3$ and moderate-to-high $r$;
at $V{=}2$ or $r \leq 0.1$ the four protocols produce nearly
identical $p_c$.}

\begin{figure}[!t]
\centering
\includegraphics[width=\textwidth]{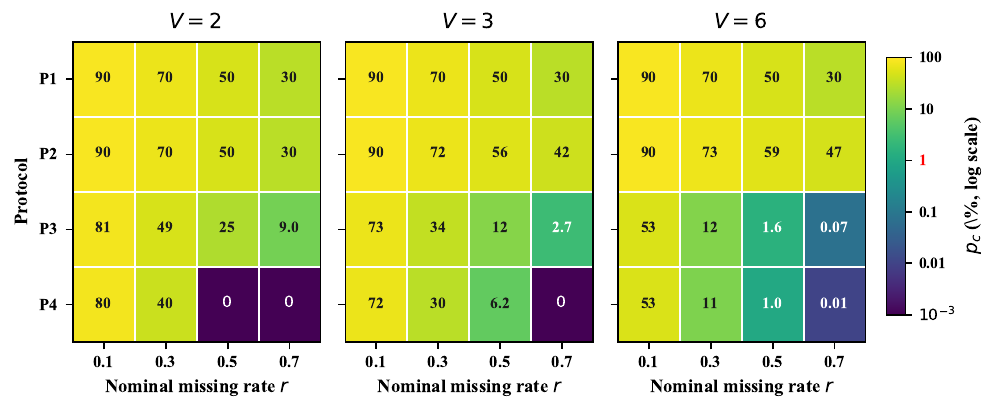}
\caption{Complete-sample proportion $p_c$ (\%, log color scale)
under the four protocols, across $V \in \{2, 3, 6\}$ and
$r \in \{0.1, 0.3, 0.5, 0.7\}$. The $1\%$ tick (red) on the
colorbar marks the trainability boundary: every
$\mathcal{F}_{rec}$ method drops to near-random accuracy below
it.}
\label{fig:protocol_pc}
\end{figure}

\textbf{Coverage gap.}\label{sec:protocol:gap}
All nine audited IMVC implementations
(Appendix~\ref{app:protocol:audit}) realize protocols with
$p_c \geq 50\%$ at $r{=}0.5$, $V{=}6$. To our knowledge, no
prior evaluation tests the $p_c \lesssim 1\%$ regime where
$\mathcal{F}_{rec}$ trainability collapses---the regime emerging
under entry-wise missingness without protection. CRAFT escapes
this dependence by construction and is excluded from the audit.

%% file: section_4_theory.tex
\section{Theoretical Analysis}
\label{sec:theory}

We formalize two architectural conditions on the encoder
$f_\theta$ that produces fused representations
$h^{(i)} = f_\theta(\{x_v^{(i)}\}_{v \in O_i}) \in \mathbb{R}^d$
from observed views, and state the resulting trainability bound;
the per-sample loss is
$\mathcal{L}(\theta) = \tfrac{1}{n}\sum_i \ell(h^{(i)}, x^{(i)})$.
Full SGD-trajectory notation and theorem proofs appear in
Appendix~\ref{app:proofs}.

\paragraph{Condition C1 (Per-sample independence).}
$h^{(i)}$ depends only on sample~$i$'s observed views and shared
parameters; the training loss has no cross-sample terms and no
dependence on the observation patterns of other samples.

\paragraph{Condition C2 (Mask-aware variable-length fusion).}
$f_\theta$ is defined for any subset
$O_i \subseteq \{1,\ldots,V\}$ with $|O_i| \geq 1$, producing
$h^{(i)}$ via a mechanism that excludes missing views from
internal computation rather than zero-padding their inputs into
a fixed-cardinality structure.

\begin{assumption}[Conditional independence given $Y$]
\label{ass:cond_ind}
With $X_v$ denoting the random variable underlying $x_v^{(i)}$,
$P(X_1, \ldots, X_V \mid Y) = \prod_{v=1}^V P(X_v \mid Y).$
\end{assumption}

Throughout, $\mathcal{F}_{rec}$ denotes the Class~1 reconstruction
family of \S\ref{sec:bg:imvc}. We first state a universal
capability bound that applies to every method regardless of
architecture; the trainability results that follow are
distribution-free, with
Assumption~\ref{ass:cond_ind} required only by the second inequality of Theorem~\ref{thm:capability}.

\begin{theorem}[Capability bound]
\label{thm:capability}
For any clustering function taking observed views
$S \subseteq \{1,\ldots,V\}$ as input, the bound
\[
  \max_{v \in S} I(X_v; Y)
   \;\leq\; I\bigl(\{X_v\}_{v \in S};\, Y\bigr)
\]
holds unconditionally. Under Assumption~\ref{ass:cond_ind},
the additional bound
$I\bigl(\{X_v\}_{v \in S};\, Y\bigr)
   \leq \sum_{v \in S} I(X_v;\, Y)$
also applies (proof in Appendix~\ref{app:proofs:thm1b}).
\end{theorem}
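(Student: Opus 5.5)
The plan is to prove the two inequalities separately using only the chain rule and nonnegativity of mutual information. I will avoid working with raw entropies, so the argument covers continuous view features, where differential entropies may be negative, as well as discrete ones. Throughout, fix a subset $S \subseteq \{1,\ldots,V\}$ and write $X_S := \{X_v\}_{v \in S}$.

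\textbf{Lower bound (unconditional).} For any $u \in S$, split $X_S = (X_u, X_{S\setminus\{u\}})$. The chain rule gives
$I(X_S;Y) = I(X_u;Y) + I(X_{S\setminus\{u\}};Y \mid X_u)$.
The second term is nonnegative, so $I(X_S;Y) \geq I(X_u;Y)$. Since $u$ was arbitrary, take the maximum over $u \in S$. No distributional assumption enters, which justifies the word ``unconditionally.'' The same step also gives the remark on fused representations: any clustering function $g(X_S)$ satisfies $I(g(X_S);Y)\le I(X_S;Y)$ by data processing. That inequality is supplementary, since the theorem itself concerns $I(X_S;Y)$.

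\textbf{Upper bound (under Assumption~\ref{ass:cond_ind}).} Enumerate $S = \{v_1,\ldots,v_m\}$ and write $X_{<j} := (X_{v_1},\ldots,X_{v_{j-1}})$. The chain rule gives
$I(X_S;Y) = \sum_{j=1}^m I(X_{v_j};Y \mid X_{<j})$,
so it suffices to show each summand is at most $I(X_{v_j};Y)$. To do this, expand $I(X_{v_j}; Y, X_{<j})$ by the chain rule in two orders:
\[
I(X_{v_j};X_{<j}) + I(X_{v_j};Y\mid X_{<j}) \;=\; I(X_{v_j};Y) + I(X_{v_j};X_{<j}\mid Y).
\]
Assumption~\ref{ass:cond_ind} implies $X_{v_j} \perp X_{<j} \mid Y$, because the product factorization marginalizes to any subset of views. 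Hence the last term vanishes, and
$I(X_{v_j};Y\mid X_{<j}) = I(X_{v_j};Y) - I(X_{v_j};X_{<j}) \leq I(X_{v_j};Y)$.
Summing over $j$ yields $I(X_S;Y)\le \sum_{v\in S} I(X_v;Y)$. For discrete variables there is an equivalent shortcut: combine subadditivity $H(X_S)\le\sum_{v\in S} H(X_v)$ with the exact decomposition $H(X_S\mid Y)=\sum_{v\in S} H(X_v\mid Y)$.

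\textbf{Main obstacle.} The only delicate point is the upper bound. It genuinely needs conditional independence: with correlated views, $I(X_{v_j};X_{<j}\mid Y)$ can be positive and the bound fails, as in XOR-type constructions. Carrying this out requires two checks. First, the factorization in Assumption~\ref{ass:cond_ind} must descend to every sub-collection $S$, which follows by integrating out the unobserved views. Second, every mutual-information term must be finite, so that the subtraction above is legitimate; I would assume $I(X_v;Y)\le H(Y)\le \log K$, which holds because $Y$ is discrete with $K$ values. With these checks in place, the remaining steps are routine.
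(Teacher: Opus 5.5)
Your proof is correct. The lower bound is the paper's argument: adding variables cannot decrease mutual information. You justify it explicitly via the chain rule and nonnegativity.

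For the upper bound you take a different route.
\begin{itemize}
\item The paper works with entropies. It combines subadditivity $H(X_S)\le\sum_{v\in S}H(X_v)$ with the exact factorization $H(X_S\mid Y)=\sum_{v\in S}H(X_v\mid Y)$ and subtracts. This is precisely the discrete ``shortcut'' you mention.
\item You instead apply the chain rule to $I(X_S;Y)$ over the views. You bound each term $I(X_{v_j};Y\mid X_{<j})$ by expanding $I(X_{v_j};Y,X_{<j})$ in two orders and using $I(X_{v_j};X_{<j}\mid Y)=0$.
\end{itemize}

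The paper's version is shorter and fully transparent for discrete features. Yours stays inside mutual information, so it never needs the individual entropies to exist and be finite. For the paper's real-valued features $x_v\in\mathbb{R}^{d_v}$, negativity is not the real danger: a finite negative differential entropy is harmless to the paper's subtraction. The danger is a view whose distribution is singular or degenerate, for example supported on a lower-dimensional set. Then $H(X_v)$ is $-\infty$ or undefined, and the difference of entropies is meaningless. Every mutual information in your argument, by contrast, is bounded by $H(Y)\le\log K$.

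Both routes expose the same slack, the total correlation of the observed views. In the paper's form it is $\sum_{v\in S}H(X_v)-H(X_S)$; in yours it is $\sum_j I(X_{v_j};X_{<j})$. Only your form remains meaningful when the entropies themselves do not exist.

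Finally, your finiteness check is unnecessary. The identity $I(X_{v_j};X_{<j})+I(X_{v_j};Y\mid X_{<j})=I(X_{v_j};Y)$, together with $I(X_{v_j};X_{<j})\ge 0$, gives the per-term bound directly with no subtraction.
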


Theorem~\ref{thm:capability}'s second inequality, the capability
ceiling, applies regardless of architecture: when individual
views carry little information about $Y$, the joint mutual
information is bounded by their sum (Assumption~\ref{ass:cond_ind})
and caps achievable accuracy independently of trainability,
which can explain a $V{=}3$ MultiFashion collapse for methods
outside $\mathcal{F}_{rec}$ (e.g., Energy-DIMC).

\begin{proposition}[Trainability bound for $\mathcal{F}_{rec}$]
\label{prop:trainability}
For any loss $\ell \in \mathcal{F}_{rec}$ and protocol~$P$,
$\bigl\|\mathbb{E}\,\nabla_\theta \mathcal{L}\bigr\|
   \leq C_g \cdot q_2(P)$,
where $q_2(P) = \mathbb{P}_{M \sim P}[|O| \geq 2]$ and $C_g$ is the
per-sample gradient bound (full statement and proof in
Appendix~\ref{app:proofs:thm1a}).
\end{proposition}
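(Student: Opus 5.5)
We prove the bound directly from the defining structural property of $\mathcal{F}_{rec}$. By the description in \S\ref{sec:bg:imvc}, every $\ell \in \mathcal{F}_{rec}$ is a sum of cross-view terms over view pairs $(u,v)$ that are both observed for sample $i$. We therefore write
\[
  \ell(h^{(i)}, x^{(i)}) \;=\; \mathbb{1}\bigl[|O_i| \geq 2\bigr]\,\tilde{\ell}^{(i)}(\theta),
\]
where $\tilde{\ell}^{(i)}$ collects the pairwise reconstruction or prediction terms. This factorization is exact: when $|O_i| \leq 1$ there is no co-observed pair, so the loss and its gradient vanish identically. For subfamilies that need full completeness, such as DCP and DCG, the indicator is instead $\mathbb{1}[|O_i| = V]$, which is dominated by $\mathbb{1}[|O_i|\geq 2]$; the same argument then gives the sharper $p_c$ version used in Appendix~\ref{app:results:collapse}.

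We make the expectation explicit by taking it jointly over the mask $M \sim P$ (drawn i.i.d.\ per sample) and the data. By linearity, $\mathbb{E}\,\nabla_\theta \mathcal{L} = \mathbb{E}\bigl[\mathbb{1}[|O|\geq 2]\,\nabla_\theta \tilde{\ell}(\theta)\bigr]$. We then apply Jensen (the norm is convex) to move the norm inside the expectation. Using the per-sample gradient bound $\|\nabla_\theta \tilde{\ell}\| \leq C_g$, this gives
\[
  \bigl\|\mathbb{E}\,\nabla_\theta\mathcal{L}\bigr\| \;\leq\; C_g\,\mathbb{E}\,\mathbb{1}[|O|\geq 2] \;=\; C_g\, q_2(P).
\]
If $C_g$ is assumed only conditionally on $|O|\geq 2$, we condition on that event first and the same bound follows.

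The main obstacle is justifying the factorization, not the inequality itself. Two points need care. First, $\mathcal{F}_{rec}$ losses may include an auxiliary within-view autoencoding term or a contrastive term that does not require co-observation. The proposition must be scoped to the recovery component $\mathcal{L}_{\mathrm{rec}}$, consistent with Remark (i) of Appendix~\ref{app:proofs:thm1a}, and we would state this explicitly. Second, the gradient must be genuinely zero, not merely small, at samples with $|O_i|\leq 1$. This holds if missing views are excluded rather than imputed by a learned module whose output enters the loss, and we would treat that as part of the definition of the class.

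Once the bound is in place, we would connect it to the SGD analysis of Appendix~\ref{app:proofs:thm1a}, which translates the $q_2$ (or $p_c$) scaling into the $1/p_c$ iteration lower bound.
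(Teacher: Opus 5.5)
Your argument is correct and is essentially the paper's own derivation of this inequality. However, that derivation is in Lemma~\ref{lem:grad_bound} (Appendix~\ref{app:proofs:thm2:lemA}), not in Appendix~\ref{app:proofs:thm1a}, where the proposition points.

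Lemma~\ref{lem:grad_bound} takes the indicator factorization as the definition of $\mathcal{F}_{rec}$ (Eq.~\eqref{eq:frec_loss}), as you propose. It then uses Assumption~\ref{ass:bounded_grad} to get $\mathbb{E}\|\nabla_\theta \ell_i\| \le C_g\, q_2(P)$. Your explicit Jensen step turns that first-moment bound into the displayed $\|\mathbb{E}\nabla_\theta\mathcal{L}\|$ form. Your reduction to $C_g\,p_c$ for strict-complete methods and your scoping to the recovery branch also agree with the paper.

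The argument actually written in Appendix~\ref{app:proofs:thm1a} takes a different route. It assumes the recovery loss uses only complete samples, and adds $L$-smoothness and a per-sample variance bound. It splits mini-batch SGD into zero-gradient steps and active steps (probability $1-(1-p_c)^B$), with effective batch size $Bp_c/(1-(1-p_c)^B)$. It concludes that the iterations needed for $\varepsilon$-stationarity scale as $1/p_c$, with a sharp transition in $p_c$ at $B=256$.

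That route gives an optimization-time statement and explains the observed cliff. But it is phrased in $p_c$ rather than $q_2(P)$, its $\Omega(\cdot)$ relies on treating the standard SGD rate as tight, and it never produces the mean-gradient inequality itself.

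Your route needs only the indicator structure and bounded gradients. Its output is exactly what feeds the displacement bound $\eta T C_g\, q_2(P)$ of Lemma~\ref{lem:traj_bound}. Your closing remark slightly misplaces where the bound is used: the Appendix~\ref{app:proofs:thm1a} rate analysis runs on skipped steps and variance inflation, and does not use a bound on the mean gradient.
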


For strict-complete $\mathcal{F}_{rec}$ training, $q_2(P)$ reduces
to $p_c$; the gradient signal vanishes as $p_c \to 0$, predicting
near-random accuracy below a critical $p_c$ threshold.

\begin{theorem}[Family-local necessity within $\mathcal{F}_{rec}$]
\label{thm:fr_necessity}
For protocols $P_1, P_4$ at the same nominal rate~$r$ with
$q_2(P_1)$ above threshold and $q_2(P_4) \to 0$, the expected
cross-protocol accuracy gap of any method using $\ell \in
\mathcal{F}_{rec}$ is bounded below by an explicit threshold
depending only on optimizer step size, training budget, and
$C_g$ (formal $\epsilon$-stability statement and proof in
Appendix~\ref{app:proofs:thm2}).
\end{theorem}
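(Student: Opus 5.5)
The plan is to turn Proposition~\ref{prop:trainability} into a statement about how far parameters can move, and then turn that movement bound into an accuracy gap.

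\textbf{Setup.} Fix a method with loss $\ell \in \mathcal{F}_{rec}$ trained by SGD with step size $\eta$ for $T$ steps from a common initialization $\theta_0$. We formalize $\epsilon$-stability as follows: the clustering accuracy map $\theta \mapsto \mathrm{Acc}(\theta)$ is locally Lipschitz with constant $L_A$ in a ball around $\theta_0$, and $\mathrm{Acc}(\theta_0)$ is near-random, say $1/K + \epsilon$.

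\textbf{Step 1: drift under $P_4$.} Writing the expected iterate update gives
\[
\|\mathbb{E}\theta_T - \theta_0\| \leq \sum_{t<T} \eta\, \|\mathbb{E}\nabla_\theta \mathcal{L}(\theta_t)\| \leq \eta T C_g\, q_2(P_4),
\]
applying Proposition~\ref{prop:trainability} at each step. Because samples with $|O_i|<2$ contribute exactly zero gradient to an $\mathcal{F}_{rec}$ loss, the same bound holds pathwise in expectation; it is not only a bound on the mean gradient.

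\textbf{Step 2: controlling the iterate itself.} A drift bound on $\mathbb{E}\theta_T$ is not enough; I need to control $\mathbb{E}\|\theta_T - \theta_0\|$. Per-step gradients are nonzero only on active batches, and these occur with probability at most $q_2(P_4)$ times the batch size. Together with $\|\nabla\ell\|\leq C_g$, this gives
\[
\mathbb{E}\|\theta_T-\theta_0\| \leq \eta T C_g B\, q_2(P_4).
\]
I expect this to be the main obstacle. Proposition~\ref{prop:trainability} only bounds the mean gradient, so I would rederive it via the indicator decomposition from its proof (Step~1 there), bounding $\mathbb{E}\|g_t\|$ rather than $\|\mathbb{E} g_t\|$.

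\textbf{Step 3: accuracy under $P_4$.} By local Lipschitzness,
\[
\mathbb{E}\,\mathrm{Acc}_{P_4}(\theta_T) \leq 1/K + \epsilon + L_A\, \eta T C_g B\, q_2(P_4),
\]
which tends to $1/K+\epsilon$ as $q_2(P_4)\to 0$.

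\textbf{Step 4: accuracy under $P_1$.} Here I need a lower bound on accuracy. Since $q_2(P_1)$ exceeds the threshold, I would invoke the $1/p_c$ convergence rate from Step~3 of the proof of Proposition~\ref{prop:trainability}: with budget $T \gtrsim T_0/q_2(P_1)$, SGD reaches $\varepsilon$-stationarity. I would then posit, as part of the threshold hypothesis, that stationary points of the recovery objective attain accuracy at least some $A_1$. This is an assumption, not something derived. I would state it explicitly as the meaning of ``above threshold''.

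\textbf{Step 5: the gap.} Subtracting Step~3 from Step~4 gives
\[
\mathbb{E}[\mathrm{Acc}_{P_1}] - \mathbb{E}[\mathrm{Acc}_{P_4}] \geq A_1 - 1/K - \epsilon - L_A \eta T C_g B\, q_2(P_4).
\]
This depends only on $\eta$, $T$, $C_g$ (and the stability constants), as claimed. The stated explicit threshold is the value of this expression in the limit $q_2(P_4)\to 0$.
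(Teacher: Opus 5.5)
Your proposal is correct and is essentially the paper's proof. Your Step~2 is Lemmas~A--B: the indicator $\mathbf{1}\{|O_i|\geq 2\}$ makes $\mathbb{E}\|g_t\|$, not just $\|\mathbb{E}g_t\|$, scale with $q_2$, and a triangle inequality over $T$ steps follows. Your Lipschitz step is Lemma~C with $\omega(u)=L_A u$, and your Step~4 hypothesis is Assumption~\ref{ass:convergence}, which the paper also simply assumes. So you can drop the detour through the $1/p_c$ rate, which bounds the budget necessary for $\min_t$ stationarity and says nothing about final-iterate accuracy.

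The remaining differences are cosmetic. Your union bound over a batch costs an unnecessary factor $B$: the paper takes $B=1$ and notes the replacement $1-(1-q_2)^B$ in boundary remark~(B2), and for a $1/B$-averaged batch gradient, linearity already gives $C_g q_2$. In the paper, $\epsilon$-stability means the cross-protocol gap is at most $\epsilon$, which your Step~5 refutes; it is not the regularity hypothesis you gave that name. Finally, the explicit threshold is the finite expression $\Delta-\omega(\eta T C_g\delta_0)$ for $q_2(P_4)\leq\delta_0$, not its limit, since the limit no longer involves $\eta$, $T$ or $C_g$.
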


This makes any cross-protocol collapse within $\mathcal{F}_{rec}$
structural rather than a tuning artifact. The result is
\emph{family-local}: distributional and cross-sample methods
(e.g., Energy-DIMC) are out of scope for the trainability bound
and are instead bounded by the capability ceiling
(Theorem~\ref{thm:capability}).

\begin{corollary}[Sufficiency of C1+C2]
\label{cor:c1c2_sufficient}
Any method satisfying C1 and C2 escapes
Proposition~\ref{prop:trainability}'s trainability constraint;
its accuracy is bounded only by the capability ceiling
(Theorem~\ref{thm:capability}).
\end{corollary}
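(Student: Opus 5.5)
The plan is to prove Corollary~\ref{cor:c1c2_sufficient} in two parts. First, Proposition~\ref{prop:trainability} does not constrain a C1+C2 method. Second, once trainability is removed, the only remaining limit is the information bound of Theorem~\ref{thm:capability}.

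\textbf{Step 1: the training signal does not scale with $q_2(P)$.} The premise of Proposition~\ref{prop:trainability} is that the per-sample gradient is zero unless $|O_i|\geq 2$; this is what produces the factor $q_2(P)$ in $\|\mathbb{E}\nabla_\theta\mathcal{L}\|\leq C_g\,q_2(P)$. Under C1 we write the loss as $\mathcal{L}(\theta)=\frac1n\sum_i \ell(f_\theta(\{x_v^{(i)}\}_{v\in O_i}),x^{(i)})$ with no cross-sample terms. Under C2, $f_\theta$ is well defined and differentiable for every $O_i$ with $|O_i|\geq1$, including $|O_i|=1$.

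Because of these two facts, the per-sample gradient $\nabla_\theta\ell^{(i)}$ is generically nonzero for every sample with at least one view. Concrete examples are the per-view reconstruction term $\|D_v(h^{(i)})-x_v^{(i)}\|^2$ restricted to $v\in O_i$, or the CRAFT choice of training on complete data and reusing the same $\theta$ at inference. Taking expectations over the mask then gives a signal that scales with $\mathbb{P}[|O|\geq1]$, and this probability equals $1$ for every protocol with a protected view or one-hot guarantee.

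This step uses no quantity defined by other samples' masks, so it does not involve $p_c$. The SGD argument in Appendix~\ref{app:proofs:thm1a} therefore runs with $\beta=1$ and $b_{\mathrm{eff}}=B$, and the $1/p_c$ slowdown disappears. For the train-once regime the step is even simpler: training uses $p_c=1$, and C2 guarantees that the same $\theta$ produces a valid $h^{(i)}$ for any test-time $O_i$. The Remark that CRAFT avoids both failure modes already records this per-sample computation graph.

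\textbf{Step 2: only the capability ceiling remains.} Here I apply the application-to-fused-representations argument from the proof of Theorem~\ref{thm:capability}. Since $h^{(i)}$ is a deterministic function of $\{X_v\}_{v\in O_i}$, the data processing inequality gives $I(h;Y)\le I(\{X_v\}_{v\in O_i};Y)$. Under Assumption~\ref{ass:cond_ind} this is at most $\sum_{v\in O_i}I(X_v;Y)$.

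I will also invoke Proposition~\ref{prop:attn_bound} with $\varepsilon=-\infty$. This shows that masked views contribute exactly zero attention weight, so no spurious input enters $h$ and the bound is stated over $O_i$ alone.

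Finally, accuracy is linked to mutual information only as an upper limit. Fano's inequality can be cited to turn the mutual-information ceiling into an error lower bound, which gives the phrase ``bounded only by'' a concrete meaning.

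\textbf{Main obstacle.} The hardest point is Step 1: the statement is informal, and ``escapes'' has to be read as ``the expected gradient norm is not upper-bounded by a quantity vanishing with $q_2(P)$ or $p_c$.'' The statement cannot assert a nonzero gradient for every loss, because a degenerate $\ell$ could have zero gradient. I would therefore phrase the conclusion as the absence of the structural factor $q_2(P)$ in the bound, not as a guaranteed lower bound.

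A full convergence claim would need L-smoothness and bounded variance assumptions mirroring those of Proposition~\ref{prop:trainability}. I would state these explicitly and defer everything else to the standard non-convex SGD rate at $p_c$-independent batch size.
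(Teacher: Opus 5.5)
Your route is the one the paper relies on. The paper gives no standalone proof, but its supporting remarks argue exactly this: the premise of Proposition~\ref{prop:trainability} fails, and what remains is the data-processing argument behind Theorem~\ref{thm:capability}.

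\textbf{The gap.} It is the sentence in Step~1 asserting that, because of C1 and C2, the per-sample gradient is generically nonzero whenever $|O_i|\geq 1$. Neither condition constrains the support of the loss. C1 forbids cross-sample terms and dependence on \emph{other} samples' masks. C2 only requires the fusion to be defined on every nonempty $O_i$.

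\textbf{A counterexample.} Take a per-sample cross-view loss that, for each $v\in O_i$, predicts $x_v^{(i)}$ from the fused representation of $O_i\setminus\{v\}$, on top of an attention-masked fusion. It satisfies C1 and C2, it is not degenerate, and it has exactly the form \eqref{eq:frec_loss}, with the indicator $\mathbf{1}\{|O_i|\geq 2\}$. Trained under $P$, Lemma~\ref{lem:grad_bound} applies verbatim and gives $\mathbb{E}\|\nabla_\theta \ell_i\|\leq C_g\, q_2(P)$. This is $0$ under Protocol~4 with $V=2$, $r\geq 0.5$. So even your weakened reading (``no structural factor $q_2(P)$'') does not follow from C1+C2 alone. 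The exception is not a degenerate $\ell$ but the very family the proposition addresses.

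Relatedly, your remark that Step~1 ``uses no quantity defined by other samples' masks, so it does not involve $p_c$'' conflates two things. In Lemma~\ref{lem:grad_bound} the factor $q_2(P)$ enters through the expectation over the sample's \emph{own} mask, which C1 leaves untouched.

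\textbf{The missing premise.} The paper supplies it through its instantiation rather than through C1/C2, in one of two forms:
\begin{itemize}
\item The training loss is not pre-zeroed on $|O_i|<2$, for instance reconstructing the observed views themselves from $h^{(i)}$. This is item (iv) of the scope remark in Appendix~\ref{app:proofs:thm1a} and (B1) in Appendix~\ref{app:proofs:thm2:scope}.
\item Training is done on complete data, so the training distribution has $p_c=1$ and the protocol enters only at inference. C2 is then what makes the trained $\theta$ usable on arbitrary $O_i$. This is \S\ref{sec:method} (``C1 via complete-data training with per-view reconstruction targets'') and \S\ref{sec:method:training}, where the train/inference asymmetry ``realizes'' the corollary.
\end{itemize}
You list both as examples. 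State one of them as a hypothesis of the corollary, and Step~1 (with $\beta=1$, $b_{\mathrm{eff}}=B$) goes through.

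\textbf{Step~2.} This step is fine. The data-processing step uses only that $h^{(i)}$ is a function of $\{X_v\}_{v\in O_i}$ for fixed $\theta$, which is C1. Proposition~\ref{prop:attn_bound} is therefore not needed; as the paper notes, the ceiling holds for any architecture. Your Fano step, applied to the best-permutation assignment (still a function of $h^{(i)}$), is a genuine addition the paper omits. It is what turns the mutual-information ceiling into a statement about accuracy.
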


%% file: section_5_method.tex
\section{CRAFT}
\label{sec:method}

CRAFT instantiates the two conditions of \S\ref{sec:theory}: 
C1 via complete-data training with per-view reconstruction 
targets, and C2 via a Transformer with attention 
masking~\citep{vaswani2017transformer,ma2022multimodal} 
that admits variable-length input in the spirit of Set 
Transformer~\citep{lee2019settransformer} and 
Perceiver~\citep{jaegle2021perceiver}, specialized to 
unsupervised IMVC. By embedding robustness in the architecture 
rather than the loss, CRAFT escapes the trainability bound of 
Proposition~\ref{prop:trainability} by construction 
(Corollary~\ref{cor:c1c2_sufficient}).

We distinguish two tiers. \textbf{CRAFT-Core} is the 
C1+C2-mandated minimum---Stage~1 reconstruction, cosine 
cluster head, and attention masking---sufficient to escape 
Theorem~\ref{thm:fr_necessity}. \textbf{CRAFT} (canonical) 
adds consistency loss, masked fine-tuning, entropy and KL 
regularizers, and Stage~2 fine-tuning to approach the 
capability ceiling (Theorem~\ref{thm:capability}),
particularly at high $V$ with few observed views
(Figure~\ref{fig:arch}; ablation in
Appendix~\ref{app:ablation:core}).

\subsection{Transformer Architecture}
\label{sec:method:arch}

\textbf{Per-view encoding.}\quad
View-specific MLP encoders $E_v$ map each view to a shared
$d$-dimensional embedding
$z_v^{(i)} = E_v(x_v^{(i)}) \in \mathbb{R}^d$.
A learnable $\mathrm{[CLS]}$ token is prepended to the sequence of
view embeddings, yielding $[\mathrm{[CLS]}, z_1^{(i)}, \ldots,
z_V^{(i)}]$ for each sample.

\textbf{Masked self-attention.}\quad
A Transformer block with multi-head self-attention is applied to the
token sequence. For sample $i$ at evaluation time with observed-view
set $O_i \subseteq \{1, \ldots, V\}$, the key-padding mask is set so
that positions corresponding to $v \notin O_i$ receive attention
logit $-\infty$. The softmax denominator excludes these positions,
ensuring missing views contribute exactly zero to the attention
weights and to the resulting fused representation
(Appendix~\ref{app:proofs:exclusion}). The $\mathrm{[CLS]}$ token's
output $h^{(i)}$ serves as the fused sample representation.

\textbf{Cluster head.}\quad
A cosine softmax classifier maps the fused representation
$h^{(i)}$ to a soft cluster assignment over $K$
clusters. The classifier weights are $L_2$-normalized prototypes;
classification logits are scaled cosine similarities.

\begin{figure}[!t]
\centering
\includegraphics[width=\textwidth]{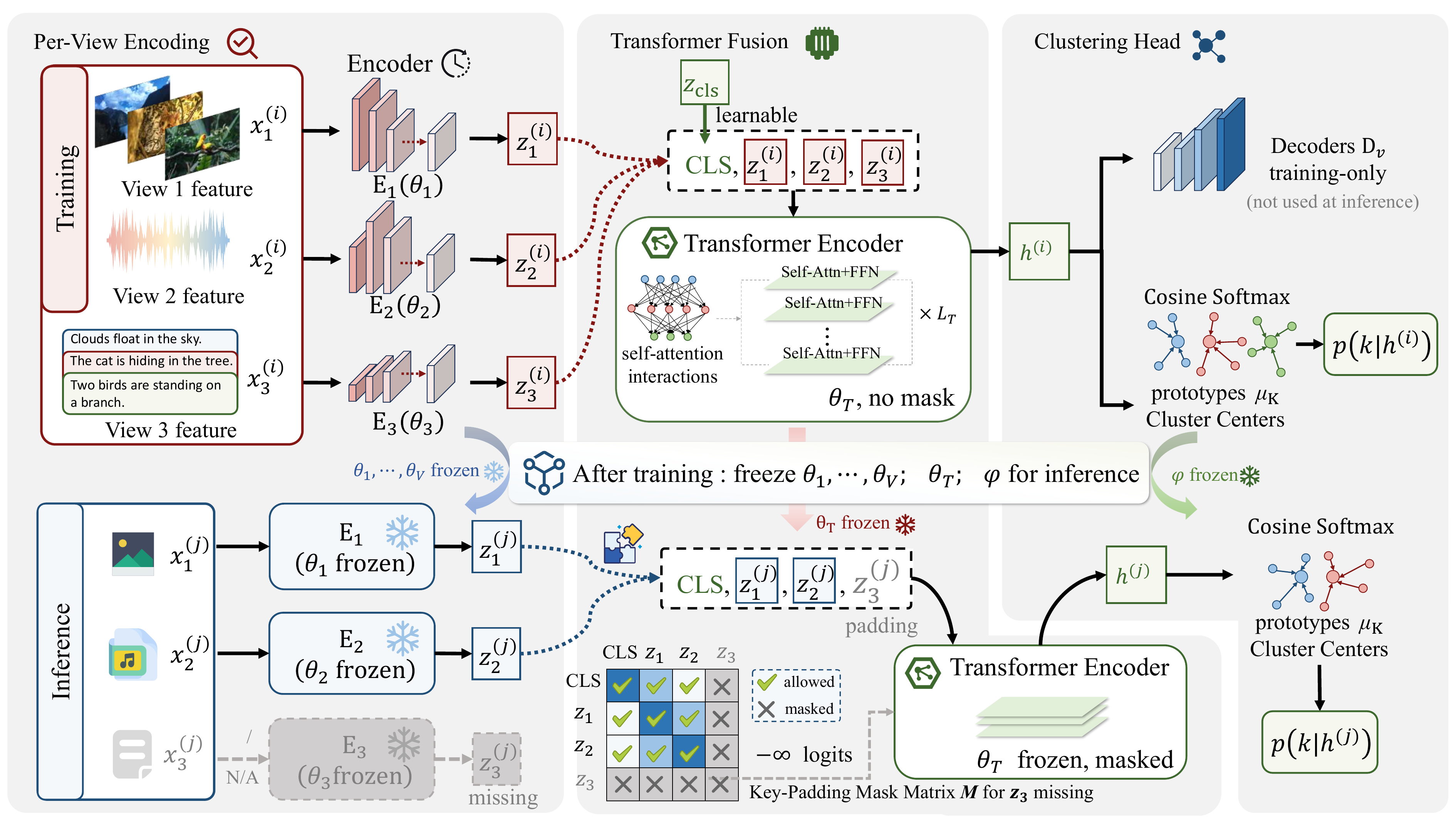}
\caption{CRAFT architecture: per-view encoders, Transformer
fusion with a learnable $\mathrm{[CLS]}$ token, and a cosine
softmax cluster head. \textbf{Top:}~complete-data training.
\textbf{Bottom:}~inference under missing views ($z_3$ missing
shown) via key-padding mask. The depicted modalities ($V{=}3$,
image/audio/text) and the masked-out view are illustrative
only; CRAFT supports arbitrary view types and arbitrary missing
patterns at inference. Hyperparameters in
Appendix~\ref{app:details:hyperparam}.}
\label{fig:arch}
\end{figure}

\subsection{Two-Stage Training on Complete Data}
\label{sec:method:training}

CRAFT trains in two stages, both applied to complete samples (no
masked positions). The inference-time mask is applied only at
evaluation---a train/inference asymmetry that realizes
Corollary~\ref{cor:c1c2_sufficient} in practice.

\textbf{Stage 1: representation learning.}\quad
The Stage 1 loss combines a per-view reconstruction term and a
cross-view consistency term:
\begin{equation}
\mathcal{L}_{\mathrm{stage1}}(\theta)
\;=\; \underbrace{\frac{1}{n}\sum_{i=1}^{n}\sum_{v=1}^{V}
        \bigl\|D_v(h^{(i)}) - x_v^{(i)}\bigr\|^2}_{
        \mathcal{L}_{\mathrm{recon}}}
   + \lambda_{\mathrm{repr}}\,
     \mathcal{L}_{\mathrm{repr}}(\{z_v^{(i)}\}_{v}),
\label{eq:stage1}
\end{equation}
where $D_v$ is a view-specific decoder MLP and
$\mathcal{L}_{\mathrm{repr}}$ is a SimSiam-style consistency loss
on the per-view embeddings of the same
sample~\citep{chen2021simsiam}
(formal definitions:
Appendix~\ref{app:details:losses}). Both terms are per-sample,
satisfying C1 by construction; because training uses only complete
samples, the gradient is independent of the inference-time
protocol's $p_c$. Predicting each view from the fused
representation maximizes a lower bound on cluster-relevant
information $I(h; Y)$ under Assumption~\ref{ass:cond_ind}
(Appendix~\ref{app:proofs:bridge}).

\textbf{Stage 2: cluster fine-tuning.}\quad
The Stage 1 representation is not cluster-aligned by default: $h$
captures multi-view structure, but its geometry need not place
samples of the same cluster close to one another. Stage 2 adds a
cluster head $g_\phi$ and a clustering objective:
\begin{equation}
\mathcal{L}_{\mathrm{stage2}}(\theta, \phi)
\;=\; \mathcal{L}_{\mathrm{cluster}}(g_\phi(h^{(i)}))
   + \beta\, \mathcal{L}_{\mathrm{ent}}
   + \gamma\, \mathcal{L}_{\mathrm{KL}},
\label{eq:stage2}
\end{equation}
Here $\mathcal{L}_{\mathrm{cluster}}$ is a self-labeled
cross-entropy on cosine softmax outputs with epoch-refreshed
pseudo-labels, $\mathcal{L}_{\mathrm{ent}}$ is an anti-collapse
entropy regularizer on batch-averaged predictions, and
$\mathcal{L}_{\mathrm{KL}}$ enforces consistency between
predictions from MFT-sampled view-subsets
(\S\ref{sec:method:mft}); $\gamma{=}0$ when MFT is disabled
(formal definitions:
Appendix~\ref{app:details:losses};
full procedure: Algorithm~\ref{alg:craft},
Appendix~\ref{app:details:algorithm}). All three terms are
per-sample, preserving C1.

\subsection{Masked Fine-Tuning}
\label{sec:method:mft}

Stage 2 optionally applies \emph{masked fine-tuning} (MFT): for
each sample in each batch, draw
$k \sim \mathrm{Uniform}(\{0, \ldots, V-2\})$ and randomly exclude
$k$ views via the same key-padding mask used at inference. This
aligns the Stage 2 training distribution with the inference-time
mask patterns and acts as a view-subset ensemble
(Appendix~\ref{app:proofs:mft}). MFT helps with
$V \geq 3$ where multiple non-empty subsets exist; on $V = 2$,
only the full-view subset contributes and MFT degrades to noise
injection.

%% file: section_6_1_setup.tex
\section{Experiments}
\label{sec:experiments}

Our experiments answer four questions in turn:
(i) do $\mathcal{F}_{rec}$ baselines collapse where
Theorem~\ref{thm:fr_necessity} predicts?
(ii) is the collapse uniform across $V$ and across distinct
recovery mechanisms?
(iii) does a single CRAFT checkpoint match per-configuration
retrained baselines, and at what cost?
(iv) which CRAFT components carry the load?
Cross-protocol comparison (\S\ref{sec:exp:main}) addresses
(i)--(iii); the component ablation
(\S\ref{sec:exp:ablation}) addresses (iv).

\subsection{Setup}
\label{sec:exp:setup}

\textbf{Baselines and protocol grid.}\quad
We evaluate CRAFT against the three classes of
\S\ref{sec:bg:imvc}: Class~1
(COMPLETER~\citep{lin2021completer}, DCP~\citep{lin2022dcp},
DCG~\citep{zhang2025dcg}, HSACC~\citep{hsacc2025}), Class~2
(Energy-DIMC~\citep{wang2025energydimc}), and Class~3
(DVIMC~\citep{chen2025dvimc}), across the protocol-rate grid of
\S\ref{sec:protocol:four}. Three datasets are reported in
the main text; four more, along with hyperparameters, hardware,
seed protocol, and ACC/NMI/ARI metric details, appear in
Appendix~\ref{app:details:datasets}. ``CRAFT'' refers throughout
to the canonical configuration of \S\ref{sec:method}.

\textbf{Training fairness.}\quad
Every $\mathcal{F}_{rec}$ baseline is retrained per configuration
with the test-time missing distribution as its training
distribution---a transductive advantage CRAFT does not receive,
since CRAFT trains once on complete data only. To verify that
CRAFT's stability is architectural rather than a
data-distribution artifact, we additionally train CRAFT under
each configuration's missing distribution; the gap to the
complete-trained variant stays below $3\%$ on tested
configurations (Appendix~\ref{app:results:mrmatched}). The
comparison is therefore biased \emph{toward} the baselines if
anything.

%% file: section_6_2_main.tex
\subsection{Cross-Protocol Comparison}
\label{sec:exp:main}

\begin{table}[!t]
\centering
\caption{Clustering accuracy (\%) under Protocol~1 (lenient) and
  Protocol~4 (stringent). Within each dataset, methods are grouped
  by class: $\mathcal{F}_{rec}$ (DCG, COMPLETER, DCP, HSACC),
  cross-sample (Energy-DIMC), and per-sample (DVIMC, CRAFT).
  Baselines retrain $16$ models per dataset (one per $(P,r)$
  configuration); CRAFT$^{\star}$ uses a \textbf{single checkpoint}
  per dataset for all $16$ configurations.
  Bold/underline = best/second per column;
  $\ddagger \leq 40\%$ on $K{=}10$; --- = method failure
  (initialization crash or data-loader runtime error).
  $\dagger$ on a method name = single-seed evaluation; all other
  methods are 5-seed mean.}
\label{tab:main}
\footnotesize
\setlength{\tabcolsep}{4pt}
\renewcommand{\arraystretch}{0.95}
\resizebox{0.9\linewidth}{!}{%
\scalebox{1}[0.85]{%
\begin{tabular}{@{}l rrrr rrrr@{}}
\toprule
& \multicolumn{4}{c}{Protocol~1 ($p_c \gg 0$)}
& \multicolumn{4}{c}{Protocol~4 ($p_c \approx 0$)} \\
\cmidrule(lr){2-5} \cmidrule(lr){6-9}
Method & 0.1 & 0.3 & 0.5 & 0.7 & 0.1 & 0.3 & 0.5 & 0.7 \\
\midrule
\multicolumn{9}{@{}l}{\textit{CUB ($V{=}2$, $K{=}10$)}} \\[1pt]
DCG~\citep{zhang2025dcg}
  & 71.50 & \underline{72.56} & \underline{73.33} & \underline{63.50}
  & 70.28 & \underline{65.39} & 15.61$^\ddagger$ & 17.72$^\ddagger$ \\
COMPLETER~\citep{lin2021completer}
  & 51.33 & 61.60 & 49.83 & 18.90$^\ddagger$
  & 61.23 & 18.27$^\ddagger$ & 36.43$^\ddagger$ & 36.43$^\ddagger$ \\
DCP~\citep{lin2022dcp}
  & 65.00 & 57.50 & 51.33 & 26.00$^\ddagger$
  & 64.17 & 29.67$^\ddagger$ & 36.83$^\ddagger$ & 36.83$^\ddagger$ \\
HSACC~\citep{hsacc2025}
  & 63.13 & 54.00 & 43.40 & 29.67$^\ddagger$
  & 59.50 & 33.13$^\ddagger$ & 38.63$^\ddagger$ & 38.63$^\ddagger$ \\[1pt]
Energy-DIMC~\citep{wang2025energydimc}
  & \underline{75.20} & 67.77 & 58.20 & 51.56
  & \underline{71.29} & 53.52 & \underline{42.38} & \underline{42.38} \\[1pt]
DVIMC$^\dagger$~\citep{chen2025dvimc}
  & 22.67$^\ddagger$ & 27.67$^\ddagger$ & 22.67$^\ddagger$ & 22.67$^\ddagger$
  & 19.83$^\ddagger$ & 26.17$^\ddagger$ & --- & --- \\
CRAFT$^{\star}$
  & \textbf{82.09} & \textbf{79.19} & \textbf{76.04} & \textbf{73.61}
  & \textbf{80.89} & \textbf{74.15} & \textbf{68.99} & \textbf{68.99} \\
\midrule
\multicolumn{9}{@{}l}{\textit{HandWritten ($V{=}6$, $K{=}10$)}} \\[1pt]
DCG$^\dagger$
  & 68.50 & 79.25 & 62.25 & 62.30
  & 65.25 & 32.60$^\ddagger$ & 14.45$^\ddagger$ & 12.75$^\ddagger$ \\
COMPLETER
  & 92.15 & 67.61 & 89.51 & 75.25
  & 85.06 & 89.31 & 58.34 & 12.65$^\ddagger$ \\
DCP
  & 82.35 & 83.90 & 80.00 & 71.55
  & 82.10 & 24.60$^\ddagger$ & 18.15$^\ddagger$ & 16.25$^\ddagger$ \\
HSACC
  & 76.53 & 87.91 & 87.10 & 80.56
  & 89.39 & 15.05$^\ddagger$ & 14.24$^\ddagger$ & 13.52$^\ddagger$ \\[1pt]
Energy-DIMC
  & \underline{96.65} & \underline{96.37} & \underline{96.09} & \underline{96.37}
  & \underline{96.09} & \underline{94.87} & \underline{92.75} & \underline{83.26} \\[1pt]
DVIMC$^\dagger$
  & 37.30$^\ddagger$ & 84.05 & 79.55 & 24.30$^\ddagger$
  & 71.40 & 20.00$^\ddagger$ & 77.05 & --- \\
CRAFT$^{\star}$
  & \textbf{97.60} & \textbf{97.57} & \textbf{97.57} & \textbf{97.58}
  & \textbf{97.53} & \textbf{96.82} & \textbf{93.69} & \textbf{85.21} \\
\midrule
\multicolumn{9}{@{}l}{\textit{MultiFashion ($V{=}3$, $K{=}10$)}} \\[1pt]
DCG$^\dagger$
  & 96.29 & 91.99 & 85.65 & 76.25
  & 91.70 & 71.78 & 48.22 & 11.68$^\ddagger$ \\
COMPLETER
  & 79.92 & 87.23 & 90.66 & \underline{90.09}
  & 83.08 & 87.87 & 46.37 & --- \\
DCP
  & 86.04 & 72.75 & 72.26 & 68.95
  & 75.24 & 67.21 & 69.36 & 25.11$^\ddagger$ \\
HSACC
  & \textbf{97.46} & \textbf{96.40} & \underline{91.67} & 83.36
  & \textbf{94.34} & 82.50 & 49.80$^\ddagger$ & 21.61$^\ddagger$ \\[1pt]
Energy-DIMC
  & \underline{93.30} & 91.64 & 90.19 & 89.90
  & 91.97 & \underline{88.53} & 79.77 & 29.14$^\ddagger$ \\[1pt]
DVIMC
  & 83.86 & 86.62 & 88.82 & 87.21
  & 86.65 & 84.09 & \underline{80.43} & --- \\
CRAFT$^{\star}$
  & 93.21 & \underline{92.82} & \textbf{92.44} & \textbf{91.97}
  & \underline{92.73} & \textbf{90.83} & \textbf{88.12} & \textbf{85.41} \\
\bottomrule
\end{tabular}%
}%
}
\end{table}

\textbf{$\mathcal{F}_{rec}$ baselines collapse uniformly at $p_c < 1\%$, as Theorem~\ref{thm:fr_necessity} predicts.}\quad
Table~\ref{tab:main} reports clustering accuracy across the
sixteen $(\text{protocol}, r)$ configurations per dataset, with
methods grouped into the three classes of \S\ref{sec:bg:imvc}.
No strict-complete $\mathcal{F}_{rec}$ baseline (DCP, DCG, COMPLETER,
HSACC) on a $K{=}10$ benchmark exceeds $40\%$ accuracy in the
stringent regime, while CRAFT consistently retains accuracy
above $65\%$. The collapse is
structural rather than a tuning artifact
(Appendix~\ref{app:details:tuning}), confirming the
gradient-counting prediction of
Proposition~\ref{prop:trainability} and
Theorem~\ref{thm:fr_necessity}. The transition is phase-like
rather than gradual: $\mathcal{F}_{rec}$ methods either receive
enough pairwise-observation signal to converge or fail to
differentiate from random initialization, with no intermediate
regime accessible through tuning. The binary character follows
directly from Proposition~\ref{prop:trainability}'s bound being
multiplicative in $q_2(P)$, so the gradient signal collapses
rather than degrades (Appendix~\ref{app:results}).
These results confirm that $\mathcal{F}_{rec}$ trainability is
governed by $p_c$ alone, as Theorem~\ref{thm:fr_necessity} predicts.

\textbf{Trainability and capability are two distinct failure modes, conflated by prior evaluations.}\quad
A controlled isolation configuration on HandWritten---Protocol~1
at $r{=}1$, giving $\hat{r}{=}1/V$ and $p_c=0$
exactly---confirms $p_c$ as the causal driver rather than a
correlate of $\hat{r}$:
$\mathcal{F}_{rec}$ baselines collapse as predicted, while
both CRAFT-Core and canonical CRAFT retain above $89\%$
accuracy without retraining (Appendix~\ref{app:results:isolation}).
Energy-DIMC's preserved accuracy at vanishing $p_c$ reflects its
position outside $\mathcal{F}_{rec}$. On the $V{=}3$
MultiFashion dataset at the most stringent rate it nonetheless
collapses; this is the information-theoretic capability ceiling
(Theorem~\ref{thm:capability}), not a training failure. CRAFT,
satisfying C1 and C2, clears the same ceiling under the same
conditions. Two distinct failure modes are at work in IMVC,
conflated by prior evaluations: \emph{trainability failure}
(gradient signal vanishes; addressable by escaping
$\mathcal{F}_{rec}$) and \emph{capability failure}
(information-theoretic limit; addressable only by accessing more
views). CRAFT-Core, the bare C1+C2 minimum without engineering
refinements, already escapes the trainability bound; engineering
closes the gap to canonical CRAFT by improving extraction
efficiency under the capability ceiling, rather than addressing
the trainability bound itself (Appendix~\ref{app:ablation:full}).
This dichotomy validates the C1+C2 framework: trainability is
architecturally escapable, while capability remains data-bound
(Theorem~\ref{thm:capability}).

\textbf{Baseline failure is method-invariant within $\mathcal{F}_{rec}$, while a single CRAFT checkpoint covers all sixteen configurations at $8.8\times$ lower training cost.}\quad
HSACC's accuracy on MultiFashion swings by $76\%$ between the
lenient and
stringent regime endpoints; the pattern is method-invariant
within $\mathcal{F}_{rec}$, with every evaluated baseline
collapsing by tens of points on at least one dataset's stringent
regime regardless of recovery mechanism---contrastive prediction
(COMPLETER, DCP), diffusion generation (DCG), or hierarchical
alignment (HSACC) alike. The uniformity argues that the
bottleneck is the loss family itself rather than its
instantiations: further refinements within $\mathcal{F}_{rec}$
would re-encounter the same bound (Appendix~\ref{app:results}).
Per-configuration tuning gives baselines a narrow edge at
individual lenient configurations (Table~\ref{tab:main});
CRAFT trades this small lenient-cell margin for stability across
the entire $(\text{protocol}, r)$ grid. A single CRAFT checkpoint
covers
all sixteen configurations per dataset at $8.8\times$ lower wall-clock
cost than the strongest baseline's per-configuration retraining
(Appendix~\ref{app:results:time}); the rate-matched control in
\S\ref{sec:exp:setup} confirms this advantage is architectural
(Appendix~\ref{app:results:mrmatched}).

\textbf{The collapse is universal across $V \in \{2, 3, 6\}$, ruling out $V$-specific artifacts.}\quad
$\mathcal{F}_{rec}$ baselines retreat to a dataset-specific floor
whenever $p_c \to 0$, with the absolute floor depth varying by
dataset (capability is shaped by view information density) but
the trigger remaining uniformly $p_c$. This empirically anchors
Theorem~\ref{thm:fr_necessity}'s family-local guarantee against
$V$-specific artifacts---$p_c$ remains the decisive trainability
variable across the grid
(Appendix~\ref{app:results}).
\vspace{-10pt}

%% file: section_6_3_ablation.tex
\subsection{Component Ablation}
\label{sec:exp:ablation}

\textbf{The reconstruction loss carries the Stage~1 load; Stage~2 components contribute marginally.}\quad
A subtractive ablation under Protocol~4 yields three qualitative
findings; per-rate numbers appear in
Appendix~\ref{app:ablation:full}. The reconstruction loss is the
load-bearing Stage~1 signal: consistency-only training collapses
to near-random, and skipping Stage~1 entirely degrades accuracy
by $20$--$31\%$ across rates. Stage~2 components contribute only
marginally---removing the cluster-head fine-tuning or the entropy
regularizer leaves accuracy within one point of canonical. The
consistency loss and masked fine-tuning are both $V$-dependent,
helping at $V \geq 3$ but degrading at $V{=}2$; CRAFT therefore
enables both only at $V \geq 3$ by default.
These findings confirm that C1+C2 is the trainability-relevant
core: engineering refinements only sharpen extraction efficiency
under the capability ceiling, leaving the trainability bound
already escaped by reconstruction alone.

\textbf{CRAFT-Core, the bare C1+C2 minimum, escapes the trainability bound at every configuration.}\quad
CRAFT-Core with all engineering components removed exceeds every
$\mathcal{F}_{rec}$ baseline on both tested datasets, but trails
canonical CRAFT by ${\sim}2\%$ on CUB and up to ${\sim}32\%$ on
HW under $V{=}6$, $r{=}0.7$, where masked fine-tuning carries
the capability load (Appendix~\ref{app:ablation:core}). A
four-hyperparameter sensitivity sweep
(Appendix~\ref{app:ablation:sensitivity}) confirms canonical
CRAFT is robust within the recommended ranges.
These results confirm Corollary~\ref{cor:c1c2_sufficient}:
C1+C2 is sufficient for trainability, and capability gains
trace cleanly to identifiable engineering components rather
than to opaque interactions.

%% file: section_7_discussion.tex
\section{Conclusion}
\label{sec:discussion}

We introduced CRAFT, an architecture that shifts robustness from
the loss function to the model structure. We prove that
reconstruction-based methods are structurally bounded by the
complete-sample proportion ($p_c$), making the current
``per-configuration'' retraining paradigm fragile. By satisfying
per-sample independence and mask-aware fusion, CRAFT escapes this
bound. A single CRAFT checkpoint trained on complete data
generalizes across the entire $(\text{protocol}, r)$ grid,
matching specialized baselines while reducing training overhead
by $8.8\times$. Our work suggests that future IMVC research
should report $p_c$ as a standard metric and prioritize
architectural refinements over algorithmic recovery.
The limitation of CRAFT is that it only focuses on the
cross-view reconstruction loss family in unsupervised IMVC.
Extending the C1+C2 design to supervised multi-view classification
or other multi-modal learning settings can be explored in
the future.

%% file: theorem2_proof.tex
%
%
%
%

\subsection{Proof of Theorem~\ref{thm:fr_necessity}
  (Family-Local Necessity within \texorpdfstring{$\mathcal{F}_{rec}$}{F_rec})}
\label{app:proofs:thm2}

This appendix proves Theorem~\ref{thm:fr_necessity} of the main text:
within the family $\mathcal{F}_{rec}$ of cross-view reconstruction
losses, single-view trainability is structurally required for
cross-protocol stability. The proof has three lemmas, each isolating
one step of the chain:
\textbf{(A)} expected per-step gradient magnitude is upper-bounded by
the pairwise observation rate $q_2(P)$;
\textbf{(B)} expected parameter trajectory is upper-bounded by
$\eta T q_2(P)$;
\textbf{(C)} small parameter displacement implies near-initialization
clustering accuracy.
The theorem follows by combining (A)--(C) with the gap between
$P_1$'s converged accuracy and $P_4$'s near-initialization accuracy.

\subsubsection{Notation and Setting}
\label{app:proofs:thm2:setup}

Let $\Theta \subseteq \mathbb{R}^p$ be the parameter space and
$\theta_0 \in \Theta$ the initialization. We denote by $f_\theta$ the
representation map and by $\hat{Y}_\theta : \mathcal{X}_S \to [K]$ the
clustering output (for any observed-view set $S$).

\paragraph{Loss family $\mathcal{F}_{rec}$.}
A method's training loss lies in $\mathcal{F}_{rec}$ if it admits
a per-sample decomposition
\begin{equation}
  \mathcal{L}(\theta;\mathcal{D},\mathbf{M})
  \;=\; \frac{1}{n} \sum_{i=1}^{n}
        \mathbf{1}\{|O_i| \geq 2\} \cdot
        \tilde{\ell}\bigl(\theta;\, X^{(i)},\, M^{(i)}\bigr),
  \label{eq:frec_loss}
\end{equation}
where $\tilde{\ell}$ is the per-sample reconstruction loss, defined
only when at least two views of sample $i$ are observed
(i.e., the indicator $\mathbf{1}\{|O_i| \geq 2\}$ vanishes the term
when $|O_i| < 2$). This captures cross-view reconstruction methods
in which the reconstruction loss is computed over pairs of views
co-observed for the same sample (\S\ref{sec:bg:imvc}).

\paragraph{Protocol probabilities.}
For a protocol $P$ (a distribution over $M$), define
\begin{equation}
  q_k(P) \;:=\; \mathbb{P}_{M \sim P}\bigl[|O| \geq k\bigr],
  \qquad k \in \{1,\ldots,V\}.
\end{equation}
For full-view reconstruction ($k = V$), $q_V(P) = p_c(P)$, the
complete-sample proportion. Throughout, we work with $q_2(P)$ since
$\mathcal{F}_{rec}$ requires only pairwise co-observation.

\paragraph{Cross-protocol $\epsilon$-stability.}
A method $(\theta_0, \mathcal{L}, \mathcal{A})$, where $\mathcal{A}$
is the optimization algorithm, is \emph{$\epsilon$-stable} on a
protocol family $\mathcal{P}$ at training budget $T$ if
\begin{equation}
  \sup_{P,P' \in \mathcal{P}}\;
  \bigl|\mathbb{E}\,\mathrm{Acc}(\theta_T^{P})
       - \mathbb{E}\,\mathrm{Acc}(\theta_T^{P'})\bigr|
  \;\leq\; \epsilon,
\end{equation}
where $\theta_T^{P}$ denotes the parameter after $T$ SGD steps on
data drawn under $P$, and the expectation is over training
randomness and protocol-induced data sampling.

\subsubsection{Assumptions}
\label{app:proofs:thm2:assumptions}

The proof uses three local assumptions on the loss landscape and the
optimization process; all are mild and standard in non-convex
optimization analyses.

\begin{assumption}[Bounded per-sample gradient]
\label{ass:bounded_grad}
There exists $C_g < \infty$ such that for all $\theta$ in a compact
neighborhood $\mathcal{N}(\theta_0)$ containing the optimization
trajectory, and almost all $(X, M)$ with $|O| \geq 2$,
$\bigl\|\nabla_\theta \tilde{\ell}(\theta; X, M)\bigr\| \leq C_g$.
\end{assumption}

\begin{assumption}[Continuity of clustering accuracy in expectation]
\label{ass:cont_acc}
There exists a non-decreasing concave modulus
$\omega : [0,\infty) \to [0,\infty)$ with $\omega(0) = 0$ and
$\lim_{u \to 0^+} \omega(u) = 0$ such that for all
$\theta \in \mathcal{N}(\theta_0)$,
\begin{equation}
  \bigl|\mathbb{E}\,\mathrm{Acc}(\theta)
       - \mathbb{E}\,\mathrm{Acc}(\theta_0)\bigr|
  \;\leq\; \omega\bigl(\|\theta - \theta_0\|\bigr).
\end{equation}
Concavity is standard for moduli of continuity (e.g., the
Lipschitz, H\"{o}lder, and bounded-margin moduli all satisfy
it).
\end{assumption}

\begin{assumption}[Convergence under sufficient signal]
\label{ass:convergence}
There exists $q^\ast > 0$ and an accuracy gap $\Delta > 0$ such that
for any protocol $P$ with $q_2(P) \geq q^\ast$, applying $T$ steps of
SGD on $\mathcal{L}$ yields
$\mathbb{E}\,\mathrm{Acc}(\theta_T^{P})
    \;\geq\; \mathbb{E}\,\mathrm{Acc}(\theta_0) + \Delta$.
\end{assumption}

\paragraph{Discussion.} Assumption~\ref{ass:bounded_grad} rules out
exploding gradients in the relevant region; it holds for the bounded-output
reconstruction losses of COMPLETER, DCG, DCP, ProImp, APADC, and HSACC
(\S\ref{sec:bg:imvc}). Assumption~\ref{ass:cont_acc} captures the fact
that small parameter perturbations lead to small expected accuracy
changes after marginalizing over data and label permutations
(see~\cite{bendavid2010theory} for related arguments in cluster-validity
analysis); the modulus $\omega$ encodes how fast accuracy responds to
parameter movement and is method/dataset-specific. Assumption~\ref{ass:convergence}
asserts that for protocols providing non-trivial pairwise co-observation
(here, $P_1$), the method actually trains to a non-trivial accuracy
above initialization---a baseline-level requirement on any usable
$\mathcal{F}_{rec}$ method (validated empirically in
\S\ref{sec:exp:main} for $P_1$).

\subsubsection{Lemma A: Gradient Magnitude Bound}
\label{app:proofs:thm2:lemA}

\begin{lemma}[Gradient magnitude under protocol $P$]
\label{lem:grad_bound}
Under Assumption~\ref{ass:bounded_grad}, for any $\theta \in \mathcal{N}(\theta_0)$
and any sample $i \sim \mathrm{Unif}([n])$ with $M^{(i)} \sim P$
independent of past iterates,
\begin{equation}
  \mathbb{E}_{i,\,M^{(i)} \sim P}
    \bigl\|\nabla_\theta \ell_i(\theta)\bigr\|
  \;\leq\; C_g \cdot q_2(P).
  \label{eq:grad_bound}
\end{equation}
\end{lemma}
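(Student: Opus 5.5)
The plan is to exploit the indicator structure of the $\mathcal{F}_{rec}$ decomposition in Eq.~\eqref{eq:frec_loss}. Write the per-sample loss as $\ell_i(\theta) = \mathbf{1}\{|O_i| \geq 2\}\,\tilde{\ell}(\theta; X^{(i)}, M^{(i)})$. Since the indicator does not depend on $\theta$, differentiation passes through it. This gives $\nabla_\theta \ell_i(\theta) = \mathbf{1}\{|O_i| \geq 2\}\,\nabla_\theta \tilde{\ell}(\theta; X^{(i)}, M^{(i)})$ pointwise, with the convention that the term is identically zero (hence has zero gradient) when $|O_i| < 2$. Taking norms gives
\[
  \bigl\|\nabla_\theta \ell_i(\theta)\bigr\|
  = \mathbf{1}\{|O_i| \geq 2\}\,\bigl\|\nabla_\theta \tilde{\ell}(\theta; X^{(i)}, M^{(i)})\bigr\|
  \leq C_g\,\mathbf{1}\{|O_i| \geq 2\},
\]
where the inequality uses Assumption~\ref{ass:bounded_grad}. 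That assumption applies because $\theta \in \mathcal{N}(\theta_0)$, and it is only needed on the event $|O_i| \geq 2$, which is exactly where it is stated to hold (almost surely).

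Next I take expectation over the uniform index $i$ and the mask $M^{(i)} \sim P$. Independence of $M^{(i)}$ from the past iterates matters here: it lets me treat $\theta$ as fixed, or condition on the filtration, so the bound is not contaminated by correlation between $\theta$ and the mask. Monotonicity of expectation then gives $\mathbb{E}\|\nabla_\theta \ell_i(\theta)\| \leq C_g\,\mathbb{P}_{M\sim P}[|O| \geq 2] = C_g\, q_2(P)$ by the definition of $q_2$. If the features $X^{(i)}$ are random, I would first condition on $i$ and $M^{(i)}$, bound uniformly in $X$ using the almost-sure clause, and then integrate out. Jensen/triangle inequality then also yields the weaker statement $\|\mathbb{E}\nabla\mathcal{L}\| \leq C_g q_2(P)$ quoted in Proposition~\ref{prop:trainability}.

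The only delicate step is measure-theoretic. I must justify that the gradient of $\mathbf{1}\{|O_i|\ge 2\}\tilde{\ell}$ equals the indicator times $\nabla\tilde{\ell}$, since $\tilde{\ell}$ is undefined when $|O_i|<2$. I would handle this by defining $\ell_i \equiv 0$ on that event, so that $\ell_i$ is a piecewise definition over a $\theta$-independent partition. I would also need that $M^{(i)}$ is i.i.d.\ from $P$ across samples, so that the marginal probability of $|O_i|\ge 2$ under uniform $i$ is $q_2(P)$. For a finite realized mask, this marginal is the empirical fraction, which matches $q_2(P)$ in expectation over $M \sim P$.
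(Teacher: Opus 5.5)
Your proposal is correct and follows the paper's proof essentially line by line: you pass the $\theta$-independent indicator $\mathbf{1}\{|O_i|\geq 2\}$ through the gradient, bound $\|\nabla_\theta\tilde{\ell}\|$ by $C_g$ on that event via Assumption~\ref{ass:bounded_grad}, and take expectations to obtain $C_g\,q_2(P)$. Your extra care about setting $\ell_i \equiv 0$ when $|O_i|<2$ and about the almost-sure clause only makes explicit what the paper leaves implicit; note also that for a fixed $\theta$ the independence-from-past-iterates hypothesis is not needed here, and it matters only in Lemma~\ref{lem:traj_bound}.
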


\begin{proof}
By the definition of $\mathcal{F}_{rec}$ in \eqref{eq:frec_loss},
$\nabla_\theta \ell_i(\theta) = \mathbf{1}\{|O_i| \geq 2\} \cdot
\nabla_\theta \tilde{\ell}(\theta; X^{(i)}, M^{(i)})$. Therefore
\begin{align}
  \bigl\|\nabla_\theta \ell_i(\theta)\bigr\|
  &= \mathbf{1}\{|O_i| \geq 2\} \cdot
     \bigl\|\nabla_\theta \tilde{\ell}(\theta; X^{(i)}, M^{(i)})\bigr\|
     \nonumber\\
  &\leq \mathbf{1}\{|O_i| \geq 2\} \cdot C_g
     \quad \text{(Assumption~\ref{ass:bounded_grad})}.
\end{align}
Taking expectation over $i$ and $M^{(i)} \sim P$:
\begin{align}
  \mathbb{E}\bigl\|\nabla_\theta \ell_i(\theta)\bigr\|
  &\leq C_g \cdot \mathbb{P}_{M^{(i)} \sim P}\bigl[|O_i| \geq 2\bigr]
  = C_g \cdot q_2(P). \qedhere
\end{align}
\end{proof}

\paragraph{Remark.} The bound is tight up to a factor of $C_g/c_g$ when
$\tilde{\ell}$ has uniformly lower-bounded gradient on its support
(a stronger version of Assumption~\ref{ass:bounded_grad}). The upper
bound suffices for the necessity argument; tightness is not needed.

\subsubsection{Lemma B: Parameter Trajectory Bound}
\label{app:proofs:thm2:lemB}

\begin{lemma}[Parameter displacement after $T$ SGD steps]
\label{lem:traj_bound}
Let $\theta_t$ evolve under SGD with constant step size $\eta > 0$:
$\theta_{t+1} = \theta_t - \eta\, \nabla_\theta \ell_{i_t}(\theta_t)$,
where $\{i_t\}$ are iid uniform on $[n]$ and $\{M^{(i_t)}\} \sim P$
iid. Suppose all $\theta_t \in \mathcal{N}(\theta_0)$ for $t \in [0,T]$.
Then
\begin{equation}
  \mathbb{E}\bigl\|\theta_T - \theta_0\bigr\|
  \;\leq\; \eta T \cdot C_g \cdot q_2(P).
  \label{eq:traj_bound}
\end{equation}
\end{lemma}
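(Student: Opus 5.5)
The plan is to telescope the SGD recursion and control each increment with Lemma~\ref{lem:grad_bound}. Writing $\theta_T - \theta_0 = -\eta \sum_{t=0}^{T-1} \nabla_\theta \ell_{i_t}(\theta_t)$, the triangle inequality gives the pathwise bound
\[
  \|\theta_T - \theta_0\| \;\leq\; \eta \sum_{t=0}^{T-1} \bigl\|\nabla_\theta \ell_{i_t}(\theta_t)\bigr\|.
\]
Taking expectations and using linearity, it then suffices to show $\mathbb{E}\|\nabla_\theta \ell_{i_t}(\theta_t)\| \leq C_g\, q_2(P)$ for each $t$. Summing these $T$ identical bounds yields \eqref{eq:traj_bound}.

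For the per-step bound, I will condition on the filtration $\mathcal{G}_t$ generated by $(i_0, M^{(i_0)}, \ldots, i_{t-1}, M^{(i_{t-1})})$, with respect to which $\theta_t$ is measurable. By hypothesis the fresh pair $(i_t, M^{(i_t)})$ is independent of $\mathcal{G}_t$, with $i_t \sim \mathrm{Unif}([n])$ and $M^{(i_t)} \sim P$. The standing assumption also gives $\theta_t \in \mathcal{N}(\theta_0)$. Lemma~\ref{lem:grad_bound} therefore applies at the (frozen) point $\theta = \theta_t$, giving
\[
  \mathbb{E}\bigl[\|\nabla_\theta \ell_{i_t}(\theta_t)\| \,\big|\, \mathcal{G}_t\bigr] \leq C_g\, q_2(P)
\]
almost surely. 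The tower property then removes the conditioning.

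The only delicate point is this conditioning step. Lemma~\ref{lem:grad_bound} is stated for a deterministic $\theta$, but $\theta_t$ is random and depends on past masks. The key is that the indicator $\mathbf{1}\{|O_{i_t}| \geq 2\}$ depends only on the fresh mask, whose law is unaffected by $\mathcal{G}_t$. Concretely, I would write the pointwise inequality $\|\nabla \ell_{i_t}(\theta_t)\| \leq C_g \mathbf{1}\{|O_{i_t}| \geq 2\}$. This inequality holds because $\theta_t \in \mathcal{N}(\theta_0)$ and Assumption~\ref{ass:bounded_grad} holds uniformly over that neighborhood. Its conditional expectation is exactly $C_g\, q_2(P)$.

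If the trajectory-containment hypothesis is only required in expectation, the bound would need a stopping-time argument instead. Since the lemma assumes containment for all $t \in [0,T]$, the direct argument above suffices.
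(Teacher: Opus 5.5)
Your proposal is correct and follows the paper's proof. Both telescope the SGD recursion, apply the triangle inequality pathwise, and bound each increment's expected norm by $C_g\,q_2(P)$ via Lemma~\ref{lem:grad_bound}. Your filtration-and-tower-property step, with the pointwise bound $\|\nabla_\theta \ell_{i_t}(\theta_t)\| \leq C_g\,\mathbf{1}\{|O_{i_t}| \geq 2\}$, simply makes explicit the paper's appeal to the independence of $\theta_t$ from the fresh sample $(i_t, M^{(i_t)})$.
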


\begin{proof}
By the triangle inequality,
\begin{align}
  \|\theta_T - \theta_0\|
  &= \biggl\|\sum_{t=0}^{T-1} (\theta_{t+1} - \theta_t)\biggr\|
  = \biggl\|\sum_{t=0}^{T-1} \eta \nabla_\theta \ell_{i_t}(\theta_t)\biggr\|
     \nonumber\\
  &\leq \eta \sum_{t=0}^{T-1} \bigl\|\nabla_\theta \ell_{i_t}(\theta_t)\bigr\|.
\end{align}
Taking expectation and applying Lemma~\ref{lem:grad_bound} to each term
(using that the iterate $\theta_t$ is independent of the future sample
$(i_{t}, M^{(i_t)})$ in the standard SGD analysis):
\begin{equation}
  \mathbb{E}\|\theta_T - \theta_0\|
  \leq \eta \sum_{t=0}^{T-1} \mathbb{E}\bigl\|\nabla_\theta \ell_{i_t}(\theta_t)\bigr\|
  \leq \eta T \cdot C_g \cdot q_2(P). \qedhere
\end{equation}
\end{proof}

\paragraph{Remark on the assumption $\theta_t \in \mathcal{N}(\theta_0)$.}
This is automatic when $q_2(P)$ is small: the parameter cannot drift far
from $\theta_0$ when its expected per-step displacement is bounded by
$\eta C_g q_2(P)$. Formally, choose
$\mathcal{N}(\theta_0) = \{\theta : \|\theta - \theta_0\| \leq r_0\}$
with $r_0$ large enough that $\eta T C_g q_2(P) < r_0$ (which is the
regime we care about). The assumption then reads as a constraint
on $T$ relative to $r_0/(\eta C_g q_2(P))$, which is automatically
satisfied when $q_2(P)$ is small.

\subsubsection{Lemma C: Accuracy Near Initialization}
\label{app:proofs:thm2:lemC}

\begin{lemma}[Accuracy collapse to baseline under small displacement]
\label{lem:acc_bound}
Under Assumption~\ref{ass:cont_acc}, for any $\theta \in \mathcal{N}(\theta_0)$
satisfying $\mathbb{E}\|\theta - \theta_0\| \leq \delta$,
\begin{equation}
  \bigl|\mathbb{E}\,\mathrm{Acc}(\theta)
       - \mathbb{E}\,\mathrm{Acc}(\theta_0)\bigr|
  \;\leq\; \omega(\delta).
\end{equation}
\end{lemma}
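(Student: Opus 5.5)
The proof conditions on the random parameter, applies Assumption~\ref{ass:cont_acc} pointwise, and then pushes the expectation through the concave modulus $\omega$ by Jensen's inequality. Throughout, $\theta$ is random (it will be instantiated as $\theta_T^{P}$ from Lemma~\ref{lem:traj_bound}), and $\mathbb{E}\,\mathrm{Acc}(\theta)$ contains two expectations: one over training randomness (the law of $\theta$) and one over evaluation data and label permutations. Write $A(\vartheta) := \mathbb{E}\,\mathrm{Acc}(\vartheta)$ for the evaluation-level expectation at a fixed, deterministic $\vartheta \in \mathcal{N}(\theta_0)$. By the tower property, $\mathbb{E}\,\mathrm{Acc}(\theta) = \mathbb{E}_\theta[A(\theta)]$, and $\mathbb{E}\,\mathrm{Acc}(\theta_0) = A(\theta_0)$ because $\theta_0$ is fixed.

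\textbf{Step 1 (pointwise bound).} For every realization $\vartheta$ of $\theta$ in $\mathcal{N}(\theta_0)$, Assumption~\ref{ass:cont_acc} gives
\[
  |A(\vartheta) - A(\theta_0)| \leq \omega(\|\vartheta - \theta_0\|).
\]

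\textbf{Step 2 (move the absolute value inside).} Taking expectation over $\theta$ and using $|\mathbb{E}Z| \leq \mathbb{E}|Z|$,
\[
  \bigl|\mathbb{E}_\theta[A(\theta)] - A(\theta_0)\bigr|
  \leq \mathbb{E}_\theta\,|A(\theta) - A(\theta_0)|
  \leq \mathbb{E}_\theta\,\omega(\|\theta - \theta_0\|).
\]

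\textbf{Step 3 (Jensen and monotonicity).} Because $\omega$ is concave, Jensen's inequality gives $\mathbb{E}\,\omega(\|\theta-\theta_0\|) \leq \omega(\mathbb{E}\|\theta-\theta_0\|)$. Because $\omega$ is non-decreasing and $\mathbb{E}\|\theta-\theta_0\| \leq \delta$, this is at most $\omega(\delta)$. Chaining Steps~1--3 yields the claim.

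\textbf{Main obstacle.} The only delicate point is measure-theoretic rather than analytic. The conditioning in Step~1 requires the evaluation randomness inside $A(\cdot)$ to be independent of the training randomness that determines $\theta$, so that $A$ is a fixed deterministic function of $\vartheta$. I would state this as the standing interpretation of $\mathbb{E}\,\mathrm{Acc}$ at a random parameter; it matches the cross-protocol $\epsilon$-stability definition.

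Two routine checks complete the argument. First, $\omega$ must be measurable so that $\mathbb{E}\,\omega(\|\theta-\theta_0\|)$ is defined; this holds because a monotone function is Borel. Second, $\theta$ must lie in $\mathcal{N}(\theta_0)$ almost surely, which the lemma's hypothesis supplies, and which Lemma~\ref{lem:traj_bound} supplies for $\theta_T^{P}$. Concavity is used only in Step~3, which explains why Assumption~\ref{ass:cont_acc} requires it.
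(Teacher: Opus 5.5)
Your proof is correct and follows the paper's argument: apply Assumption~\ref{ass:cont_acc} to each realization of $\theta$, move the absolute value inside the outer expectation, use Jensen's inequality for the concave $\omega$, and conclude by monotonicity of $\omega$. Your independence and measurability remarks simply make explicit the $\mathbb{E}_\theta\,\mathbb{E}_X$ split that the paper uses implicitly; one small correction to an aside is that Lemma~\ref{lem:traj_bound} assumes, rather than establishes, that the iterates stay in $\mathcal{N}(\theta_0)$, so that containment is a hypothesis carried into the downstream application.
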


\begin{proof}
By Assumption~\ref{ass:cont_acc} applied pointwise (with the
expectation taken over data) and Jensen's inequality (using
concavity of $\omega$ over the random $\theta$):
\begin{align}
  \bigl|\mathbb{E}\,\mathrm{Acc}(\theta) - \mathbb{E}\,\mathrm{Acc}(\theta_0)\bigr|
  &\leq \mathbb{E}_\theta\bigl|\mathbb{E}_X\,\mathrm{Acc}(\theta) - \mathbb{E}_X\,\mathrm{Acc}(\theta_0)\bigr|
     \nonumber\\
  &\leq \mathbb{E}_\theta\,\omega\bigl(\|\theta - \theta_0\|\bigr)
     \nonumber\\
  &\leq \omega\bigl(\mathbb{E}_\theta\|\theta - \theta_0\|\bigr)
     \quad \text{(Jensen, $\omega$ concave)}
     \nonumber\\
  &\leq \omega(\delta)
     \quad \text{($\omega$ non-decreasing)}. \qedhere
\end{align}
\end{proof}

\paragraph{Remark on cluster-assignment discreteness.} Clustering
accuracy involves a discrete assignment $\hat{Y}_\theta(x) \in [K]$,
which is generally not Lipschitz in $\theta$. Assumption~\ref{ass:cont_acc}
sidesteps this by requiring continuity \emph{in expectation}, where
the expectation is taken over the data distribution. For models
producing continuous logits with a $\mathrm{argmax}$ readout, the
expectation smooths the discreteness on the boundary set, where logits
are tied; this set has measure zero for almost-everywhere distinct
random initialization. Sharper versions of Lemma~\ref{lem:acc_bound}
under additional logit-margin assumptions are possible but not needed
for our argument.

\subsubsection{Proof of Theorem~\ref{thm:fr_necessity}}
\label{app:proofs:thm2:main}

\begin{theorem}[Family-local necessity within $\mathcal{F}_{rec}$, restated]
Let $\ell \in \mathcal{F}_{rec}$ satisfy
Assumptions~\ref{ass:bounded_grad}--\ref{ass:convergence}. Suppose
$P_1, P_4$ are protocols at the same nominal missing rate $r$ such
that $q_2(P_1) \geq q^\ast$ and $q_2(P_4) \leq \delta_0$, where
$\delta_0$ satisfies
\begin{equation}
  \omega\bigl(\eta T C_g \delta_0\bigr) \;<\; \Delta.
  \label{eq:delta0_condition}
\end{equation}
Then for any $\epsilon$ with
\begin{equation}
  \epsilon \;<\; \Delta - \omega\bigl(\eta T C_g \delta_0\bigr),
  \label{eq:eps_condition}
\end{equation}
the method is \emph{not} $\epsilon$-stable on $\{P_1, P_4\}$.
\end{theorem}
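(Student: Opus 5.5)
The plan is to chain Lemmas~\ref{lem:grad_bound}--\ref{lem:acc_bound} on the low-signal protocol $P_4$. Assumption~\ref{ass:convergence} then supplies the accuracy gain on $P_1$, and the triangle inequality converts the two facts into a lower bound on the cross-protocol gap. Throughout, both trajectories start from the same $\theta_0$ and use the same $(\mathcal{L},\mathcal{A})$ with step size $\eta$ and budget $T$, as in the definition of $\epsilon$-stability.

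\textbf{Step 1 ($P_4$ stays near initialization).} Since $q_2(P_4)\le\delta_0$, Lemma~\ref{lem:traj_bound} gives $\mathbb{E}\|\theta_T^{P_4}-\theta_0\|\le \eta T C_g q_2(P_4)\le \eta T C_g\delta_0$. This uses the remark following that lemma to justify $\theta_t\in\mathcal{N}(\theta_0)$, by choosing $r_0>\eta T C_g\delta_0$. Applying Lemma~\ref{lem:acc_bound} with $\delta=\eta T C_g\delta_0$ then yields
\[
\mathbb{E}\,\mathrm{Acc}(\theta_T^{P_4})\le \mathbb{E}\,\mathrm{Acc}(\theta_0)+\omega(\eta T C_g\delta_0).
\]
Monotonicity of $\omega$ is what allows us to replace $q_2(P_4)$ by $\delta_0$.

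\textbf{Step 2 ($P_1$ trains).} Since $q_2(P_1)\ge q^\ast$, Assumption~\ref{ass:convergence} gives $\mathbb{E}\,\mathrm{Acc}(\theta_T^{P_1})\ge \mathbb{E}\,\mathrm{Acc}(\theta_0)+\Delta$.

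\textbf{Step 3 (combine).} Subtracting the two bounds gives
\[
\mathbb{E}\,\mathrm{Acc}(\theta_T^{P_1})-\mathbb{E}\,\mathrm{Acc}(\theta_T^{P_4})\ge \Delta-\omega(\eta T C_g\delta_0).
\]
This quantity is strictly positive by \eqref{eq:delta0_condition}, and it strictly exceeds any $\epsilon$ satisfying \eqref{eq:eps_condition}. The supremum over the pair $\{P_1,P_4\}$ is at least this absolute difference, so the $\epsilon$-stability inequality fails.

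\textbf{Main obstacle.} I expect the delicate point to be the interface between Lemma~\ref{lem:traj_bound} and Lemma~\ref{lem:acc_bound}. The displacement bound holds only in expectation, while Lemma~\ref{lem:acc_bound} needs $\theta_T^{P_4}\in\mathcal{N}(\theta_0)$ almost surely in order to apply Assumption~\ref{ass:cont_acc} pointwise before invoking Jensen. My first approach is to note that Assumption~\ref{ass:bounded_grad} gives the deterministic bound $\|\theta_t-\theta_0\|\le \eta t C_g$, so taking $\mathcal{N}(\theta_0)$ to be the ball of radius $\eta T C_g$ makes containment automatic. Only the expected displacement, not the worst case, then carries the factor $q_2(P_4)$.

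A secondary check is that the expectation in $\mathbb{E}\,\mathrm{Acc}$ also covers the protocol-induced data. Lemma~\ref{lem:grad_bound} already averages over $M\sim P$, so the bound should go through unchanged.
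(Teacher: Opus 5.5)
Your proposal is correct and matches the paper's proof step for step: Assumption~\ref{ass:convergence} on $P_1$, Lemma~\ref{lem:traj_bound} followed by Lemma~\ref{lem:acc_bound} with $\delta=\eta T C_g\delta_0$ on $P_4$, and subtraction to obtain a gap of at least $\Delta-\omega(\eta T C_g\delta_0)>\epsilon$. Your worry about almost-sure containment in $\mathcal{N}(\theta_0)$ is already absorbed by the hypotheses, because Assumption~\ref{ass:bounded_grad} posits a neighborhood containing the trajectory and Lemma~\ref{lem:traj_bound} assumes it; this matters, since your proposed fix of taking $\mathcal{N}(\theta_0)$ to be the ball of radius $\eta T C_g$ is circular when $C_g$ is itself defined as a bound over $\mathcal{N}(\theta_0)$.
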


\begin{proof}
We bound the two protocols' expected accuracies separately.

\smallskip
\noindent\textbf{Under $P_1$.}
By Assumption~\ref{ass:convergence} applied to $P_1$ (which satisfies
$q_2(P_1) \geq q^\ast$),
\begin{equation}
  \mathbb{E}\,\mathrm{Acc}(\theta_T^{P_1})
  \;\geq\; \mathbb{E}\,\mathrm{Acc}(\theta_0) + \Delta.
  \label{eq:acc_p1}
\end{equation}

\smallskip
\noindent\textbf{Under $P_4$.}
By Lemma~\ref{lem:traj_bound},
\begin{equation}
  \mathbb{E}\bigl\|\theta_T^{P_4} - \theta_0\bigr\|
  \;\leq\; \eta T C_g \cdot q_2(P_4)
  \;\leq\; \eta T C_g \delta_0.
\end{equation}
By Lemma~\ref{lem:acc_bound} with $\delta = \eta T C_g \delta_0$,
\begin{equation}
  \bigl|\mathbb{E}\,\mathrm{Acc}(\theta_T^{P_4})
        - \mathbb{E}\,\mathrm{Acc}(\theta_0)\bigr|
  \;\leq\; \omega\bigl(\eta T C_g \delta_0\bigr).
  \label{eq:acc_p4}
\end{equation}

\smallskip
\noindent\textbf{Combining \eqref{eq:acc_p1} and \eqref{eq:acc_p4}.}
\begin{align}
  \mathbb{E}\,\mathrm{Acc}(\theta_T^{P_1})
    - \mathbb{E}\,\mathrm{Acc}(\theta_T^{P_4})
  &\geq \bigl(\mathbb{E}\,\mathrm{Acc}(\theta_0) + \Delta\bigr)
        - \bigl(\mathbb{E}\,\mathrm{Acc}(\theta_0) + \omega(\eta T C_g \delta_0)\bigr)
     \nonumber\\
  &= \Delta - \omega\bigl(\eta T C_g \delta_0\bigr).
  \label{eq:gap}
\end{align}
By \eqref{eq:eps_condition}, this exceeds $\epsilon$:
\begin{equation}
  \bigl|\mathbb{E}\,\mathrm{Acc}(\theta_T^{P_1})
       - \mathbb{E}\,\mathrm{Acc}(\theta_T^{P_4})\bigr|
  \;>\; \epsilon,
\end{equation}
contradicting $\epsilon$-stability of the method on $\{P_1, P_4\}$.
\end{proof}

\subsubsection{Scope, Boundary, and Honest Limits}
\label{app:proofs:thm2:scope}

We discuss three boundary conditions where Theorem~\ref{thm:fr_necessity}
does not directly apply, justifying the family-local qualifier.

\paragraph{(B1) Methods outside $\mathcal{F}_{rec}$.}
The decomposition \eqref{eq:frec_loss} is essential to Lemma~\ref{lem:grad_bound}:
without the indicator $\mathbf{1}\{|O_i| \geq 2\}$, $\nabla \ell_i$
need not vanish for samples with $|O_i| = 1$ or $|O_i| = 0$. Methods
whose loss does not pre-zero on $|O_i| < 2$---for example,
distributional losses (Energy-DIMC,~\cite{wang2025energydimc}) and
contrastive losses invariant to observation patterns
(SimSiam-style~\cite{chen2021simsiam} consistency)---are not covered
by the bound. Their failure modes are characterized by
Theorem~\ref{thm:capability} (capability bound) instead of
Proposition~\ref{prop:trainability} (trainability bound),
as discussed in \S\ref{sec:theory}.

\paragraph{(B2) Higher-order optimizers and large batches.}
The argument uses constant-step SGD with iid samples. For Adam,
RMSProp, and similar adaptive optimizers, the gradient bound in
Lemma~\ref{lem:grad_bound} still holds, but the trajectory bound in
Lemma~\ref{lem:traj_bound} loosens by a factor depending on the
optimizer's effective step-size adaptation. For mini-batches of size
$B$, $q_2(P)$ in Lemma~\ref{lem:grad_bound} is replaced by
$1 - (1 - q_2(P))^B$; when $q_2(P) \to 0$, this still vanishes for
fixed $B$, preserving the conclusion. Empirically (\S\ref{sec:exp:main}),
the methods studied use Adam with $B \in \{64,256\}$ and exhibit the
predicted collapse, suggesting the result extends qualitatively.

\paragraph{(B3) Non-degeneracy of $\tilde{\ell}$.}
Assumption~\ref{ass:bounded_grad} rules out gradient explosion but does
not address pathological loss landscapes (e.g., $\tilde{\ell}$ identically
zero on its support). Such pathologies do not occur for standard
reconstruction losses ($\ell_2$, cross-entropy, contrastive) used by
$\mathcal{F}_{rec}$ methods.

\paragraph{Sharpness of the bound.}
The bound $\eta T C_g q_2(P)$ in Lemma~\ref{lem:traj_bound} is tight
up to constants when $\tilde{\ell}$ has uniformly lower-bounded gradient
on its support and the optimization trajectory stays within a region
of bounded curvature. The conclusion of Theorem~\ref{thm:fr_necessity}
is therefore sharp in its dependence on $q_2(P)$: methods with
loss family $\mathcal{F}_{rec}$ \emph{must} have parameter movement
proportional to $q_2(P)$, and any non-trivial accuracy improvement
requires $q_2(P)$ above a method-dependent threshold.

\subsubsection{Why ``Family-Local'' Is Not a Weakness}
\label{app:proofs:thm2:why_local}

A natural reaction to Theorem~\ref{thm:fr_necessity} is to ask whether
necessity can be extended beyond $\mathcal{F}_{rec}$ to all unstable
methods. The answer is no, by construction: there exist methods that
violate the strict per-sample, single-view-trainable form yet remain
protocol-stable. As an illustration, consider a contrastive consistency
loss
$\ell_i^{\mathrm{simsiam}}(\theta) = \|h^{(i)} - \mathrm{sg}(h^{(j)})\|^2$
over a randomly sampled pair $(i, j)$, where $\mathrm{sg}$ is the
stop-gradient operator. This loss is non-zero for any $|O_i| \geq 1$,
$|O_j| \geq 1$, and its expected gradient depends on the marginal
distribution of $h$, which is itself only a function of $|O|$ (under
permutation-equivariant fusion). Such methods can be cross-protocol
stable while violating the strict $\mathcal{F}_{rec}$ form.

The family-local result is therefore not a weakening of an aspirational
``full necessity'' theorem but the correct scoping: it characterizes
the boundary of stable methods \emph{within the dominant IMVC paradigm
of cross-view reconstruction}, which is what practitioners and
benchmark authors most need to know. Methods outside $\mathcal{F}_{rec}$
require the separate analysis carried out in \S\ref{sec:bg:imvc}
and \S\ref{sec:theory}, anchored on
Theorem~\ref{thm:capability} (capability) and
Assumption~\ref{ass:cond_ind} (conditional independence).
